\let\oldaddcontentsline\addcontentsline
\newcommand{\starttocentries}{\let\addcontentsline\oldaddcontentsline}
\pgfplotsset{width=10cm,compat=1.9}
\newcommand{\R}{\mathbb{R}}
\renewcommand{\phi}{\varphi}
\newtheorem{lemma}{Lemma}
\newtheorem{definition}{Definition}
\newtheorem{theorem}{Theorem}
\newtheorem{proposition}{Proposition}
\newtheorem{corollary}{Corollary}
\title{Auditing Differential Privacy in High Dimensions with the Kernel Quantum Rényi Divergence}
\author[a]{Carles Domingo-Enrich}
\author[b]{Youssef Mroueh}
\affil[a]{Courant Institute of Mathematical Sciences, New York University}
\affil[b]{IBM Research AI}
\begin{document}

\maketitle


\begin{abstract}%
  Differential privacy (DP) is the de facto standard for private data release and private machine learning.  Auditing black-box DP algorithms and mechanisms to certify whether they  satisfy a certain DP guarantee is challenging, especially in high dimension. We propose relaxations of differential privacy based on new divergences on probability distributions: the kernel Rényi divergence and its regularized version. We show that the regularized kernel Rényi divergence can be estimated from samples even in high dimensions, giving rise to auditing procedures for $\varepsilon$-DP, $(\varepsilon,\delta)$-DP and $(\alpha,\varepsilon)$-Rényi DP.
\end{abstract}

\addtocontents{toc}{\protect\setcounter{tocdepth}{0}}

\section{Introduction}
Differential privacy or DP \citep{dwork2006calibrating} is the main paradigm adopted across many industries and organisations for private data release and for performing private machine learning. For instance, the US Census bureau \citep{UScensus} adopted differential privacy mechanisms to release sensitive records from the US census. Deep learning with differential privacy 
has become popular with the availability of many open-source libraries implementing variants of Differentially Private-SGD (DP-SGD) \citep{DP-SGD} such as TensorFlow Privacy \citep{TFPrivacy} and Opacus in PyTorch \citep{opacus}. 

Differential privacy provides formal worst case guarantees of privacy for any two adjacent datasets against any potential attacker. The privacy budget is controlled by  parameters $\varepsilon$ and $\delta$. For privacy preservation, $\varepsilon$ is chosen to be small but this comes at the detriment of the accuracy or the utility of the private data or model at hand. Typical values in private deep learning are $\varepsilon=2$ or $3$ \citep{DP-SGD}. While these values ensure an acceptable level of accuracy, they  raise a concern regarding the effective privacy of such values chosen in practice with respect to formal guarantees. Moreover,  these guarantees are derived from upper bounds on the privacy budget that are often not tight.   

In order to probe and audit the privacy of data and models, many privacy attacks have been proposed to assess the data leakage in machine learning such as black-box and white-box membership attacks  \citep{homer2008resolving, shokri2017membership, yeom2018privacy},
as well as attacks on differentially private machine learning \citep{Jayaraman2019,sablayrolles2019white,Rahimian2021DifferentialPD,chen2020gan}. While privacy attacks provide a qualitative assessment of DP, they don't provide quantitative privacy guarantees nor detect precisely violations of differential privacy in terms of the desired privacy budget $\varepsilon$. \cite{Ding2018} designed a hypothesis testing pipeline to audit violations of DP, and  \cite{JagielskiUO20} audits violations of privacy of DP-SGD using poisoning attacks and a carefully crafted testing that provides a lower bound on the privacy budget. 

In this work, 
we put forward a way to audit DP by introducing a new notion of differential privacy, the \emph{kernel Rényi differential privacy (KRDP)}, which builds on a new divergence on distributions that we propose in this work: the \emph{kernel Rényi divergence (KRD)}. The KRD between two distributions is defined as the quantum Rényi divergence between their covariance operators in a reproducing kernel Hilbert space, in analogy with the kernel relative entropy defined in the seminal paper of \cite{bach2022information}. Armed with this divergence, our definition of KRDP mirrors Rényi differential privacy \citep{mironov2017renyi} and provides a rigorous testing framework for violations of differential privacy. While previous works focused on univariate testing of DP \citep{Ding2018,JagielskiUO20}, our proposed auditing method does not suffer from curse of dimension and has parametric rates in high dimensions.

Namely, the main contributions of this work are: 
\begin{itemize}[leftmargin=20pt,topsep=0pt,itemsep=0pt]
    \item We propose new notions of differential privacy: the kernel Rényi differential privacy (KRDP) and the regularized kernel Rényi differential privacy (RKRDP) (\autoref{sec:framework}), which are based on the (regularized) kernel Rényi divergence (RKRD). We study their convexity, postprocessing and composition properties (\autoref{sec:properties}).
    \item We show that the RKRD can be estimated from samples (\autoref{sec:estimating}), which yields methods to test whether a black-box mechanism with high-dimensional output satisfies $\varepsilon$-DP, $(\varepsilon,\delta)$-DP, $(\alpha,\varepsilon)$-Rényi DP and RKRDP (\autoref{sec:auditing}). We validate our method in \autoref{sec:exp}.
\end{itemize}


\section{Framework} \label{sec:framework}
Let $\mathcal{X} \subseteq \R^d$, 
and let $k : \mathcal{X} \times \mathcal{X} \to \R$ be a continuous positive definite kernel on $\mathcal{X}$, such that $k(x,x) = 1$ for all $x \in \mathcal{X}$. Implicitly, we make this assumption on all kernels throughout the paper. Let $\mathcal{H}$ be a reproducing kernel Hilbert space of functions $f: \mathcal{X} \to \mathbb{R}$ corresponding to the kernel $k$. Let $\phi : \mathcal{X} \to \mathcal{H}$ be a map such that
\begin{align}
    \forall f \in \mathcal{H}, \forall x,y \in \mathcal{X}, \quad f(x) = \langle f, \phi(x) \rangle, \quad k(\cdot,x) = \phi(x), \quad k(x,y) = \langle \phi(x), \phi(y) \rangle.
\end{align}

\textbf{Positive Hermitian operators.} A linear map $A$ from a Hilbert space $\mathcal{H}$ to itself is positive semidefinite or positive if for any $h \in \mathcal{H}$, $\langle h, A h \rangle \geq 0$, and positive definite or strictly positive if $\langle h, A h \rangle > 0$ for any $h \neq 0$ \footnote{More generally, in a $C^{*}$-algebra $\mathcal{A}$, an element is positive if it is of the form $a a^{*}$ for some $a \in \mathcal{A}$.}. We write $A \geq 0$ to denote that $A$ is positive, and $A > 0$ to denote that it is strictly positive. 
$A$ is Hermitian or self-adjoint if for any $h, h' \in \mathcal{H}$, $\langle h, A h' \rangle = \langle A h, h' \rangle$. If $A,B$ are positive Hermitian operators, we say that $A$ dominates $B$, or $A \gg B$, if the kernel of $A$ is contained in the kernel of $B$.

\textbf{The covariance operator.} We consider the function $x \mapsto \phi(x) \phi(x)^{*}$, which maps $\mathcal{X}$ to the set of linear maps from $\mathcal{H}$ to $\mathcal{H}$. Given a probability distribution $p$ on $\mathcal{X}$, we define the covariance operator $\Sigma_p : \mathcal{H} \to \mathcal{H}$ as
\begin{align}
    \Sigma_p = \int_{\mathcal{X}} \phi(x) \phi(x)^{*} dp(x).
\end{align}
By definition, for any $f, g \in \mathcal{H}$, $\langle g, \Sigma_p f \rangle = \mathbb{E}_{X \sim p} [\langle g, \phi(X) \rangle \langle \phi(X), f \rangle] = \mathbb{E}_{X \sim p} [g(X) f(X)]$. Also, note that the assumption $k(x,x) = 1$ implies that $\mathrm{tr}[\Sigma_p] = \int_{\mathcal{X}} \langle \phi(x), \phi(x) \rangle dp(x) = 1$. The covariance operator has an analog in quantum mechanics, known as the density operator, which is used to represent mixed quantum states. In the bra-ket notation, we can write the covariance operator as $\Sigma_p = \int_{\mathcal{X}} | \phi(x) \rangle \langle \phi(x) | dp(x)$. 

As noted by \cite{bach2022information} (Sec. 2.1), under our assumptions the map $p \to \Sigma_p$ is injective when $k$ is a universal kernel on $\mathcal{X}$, which by definition means that the RKHS is dense in the set of continuous functions equipped with the uniform norm. Most of the common kernels are universal when $\mathcal{X}$, e.g. the radial basis function (RBF) kernel $k(x,y) = \exp(-\frac{\|x-y\|^2}{\lambda^2})$. Hence, one can measure the difference between two distributions $p$ and $q$ by looking how far apart the operators $\Sigma_p$ and $\Sigma_q$ are in a certain metric. \cite{bach2022information} follows this recipe to define the \textit{kernel Kullback-Leibler (KL) divergence} between $p$ and $q$: the \textit{quantum KL divergence} or \textit{quantum relative entropy} between two positive Hermitian operators with finite trace is defined as $D(A||B) = \mathrm{tr}[A (\log A -\log B)]$, and then the kernel Kullback-Leibler (KL) divergence between $p$ and $q$ is simply $D(\Sigma_p||\Sigma_q) = \mathrm{tr}[\Sigma_p (\log \Sigma_p -\log \Sigma_q)]$.

\textbf{The (regularized) quantum Rényi divergence.}
We use a similar construction, but we start instead from the (regularized) quantum Rényi divergence. 
For $\alpha \in (0,1) \cup (1,\infty)$, the \textit{order-$\alpha$ quantum Rényi divergence} was introduced by the seminal work of \cite{mullerlennert2013onquantum}. For $A,B$ positive Hermitian operators such that $A \geq 0$ and $\mathrm{tr}(A) < +\infty$, it is defined as:
\begin{align} \label{eq:quantum_renyi}
    D_{\alpha}(A||B) := 
    \frac{1}{\alpha-1} \log \left( \frac{1}{\mathrm{tr}[A]} \mathrm{tr}\left[\left( B^{\frac{1-\alpha}{2\alpha}} A B^{\frac{1-\alpha}{2\alpha}} \right)^\alpha \right] \right) 
\end{align}
Note that there are instances in which $D_{\alpha}(A||B)$ takes value $+\infty$, e.g. when $\alpha > 1$ and the kernel of $B$ is not contained in the kernel of $A$. In the limit $\alpha \to 1$, the quantum KL divergence is recovered $\lim_{\alpha \to 1} D_{\alpha}(A||B) = D(A||B)$.

In a similar spirit, for $\lambda \in (0,+\infty)$ and $\alpha \in (0,1) \cup (1,\infty)$, we define the \textit{regularized quantum Rényi divergence} as
\begin{align} \label{eq:reg_quantum_renyi}
    D_{\alpha,\lambda}(A||B) := D_{\alpha}(A||B+\lambda \mathrm{Id}) = \frac{1}{\alpha-1} \log \left( \frac{1}{\mathrm{tr}[A]} \mathrm{tr}\left[\left( (B+\lambda \mathrm{Id})^{\frac{1-\alpha}{2\alpha}} A (B+\lambda \mathrm{Id})^{\frac{1-\alpha}{2\alpha}} \right)^\alpha \right] \right)
\end{align}
Namely, $D_{\alpha,\lambda}(A||B) = D_{\alpha}(A||B+\lambda \mathrm{Id})$, and when we set $\lambda = 0$ we recover the quantum Rényi divergence. \cite{mullerlennert2013onquantum} showed that the quantum Rényi divergence satisfies six desirable properties that a previously proposed Rényi divergence on operators did not fulfill. 
We generalize these properties to the regularized quantum Rényi divergence, as we need several of them in our analysis:

\begin{proposition}[Properties of regularized quantum Rényi divergence] \label{prop:prop_regularized}
    For $\alpha \in [1/2,1)\cup(1,+\infty)$, the regularized quantum Rényi divergence fulfills a set of nice properties: 
\begin{enumerate}[leftmargin=25pt,topsep=-5pt,itemsep=-5pt]
    \item Continuity: $D_{\alpha, \lambda}(A||B)$ is continuous in $A, B \geq 0$, wherever $A \neq 0$ and $B + \lambda \mathrm{Id} \gg A$.
    \item Unitary invariance: $D_{\alpha, \lambda}(A||B) = D_{\alpha, \lambda}(UA U^{*}||U B U^{*})$ for any unitary $U$.
    \item Normalization: $D_{\alpha,\lambda}(1||\frac{1}{2}) = -\log(\frac{1}{2} + \lambda)$.
    \item Order: If $A \geq B$ (i.e. if $A - B \geq 0$ is positive semi-definite), then $D_{\alpha,0}(A||B) = D_{\alpha}(A||B) \geq 0$. 
    If $A \leq B$, then $D_{\alpha,\lambda}(A||B) \leq 0$.
    \item Additivity: For all $A,B,C,D \geq 0$ such that $B \gg A$ and $D \gg C$, we have that $D_{\alpha,\lambda}(A \otimes C||B \otimes D) \geq D_{\alpha,\sqrt{\lambda}}(A||B) + D_{\alpha,\sqrt{\lambda}}(C||D)$ for any $\lambda \geq 0$. If moreover $B, D \leq \mathrm{Id}$, then $D_{\alpha,\lambda}(A \otimes C||B \otimes D) \leq D_{\alpha,\lambda/3}(A||B) + D_{\alpha,\lambda/3}(C||D)$ for any $3 \geq \lambda \geq 0$. 
    \item Mean: Let $g(x) = \exp((\alpha-1) x)$. For all $A,B,C,D \geq 0$ with $A \neq 0$, $C \neq 0$, $B \gg A$, $D \gg C$, $\mathrm{tr}[A+C] \leq 1$ and $\mathrm{tr}[B+D] \leq 1$,
    \begin{align}
        D_{\alpha,\lambda}(A \oplus C||B \oplus D) = g^{-1} \bigg( \frac{\mathrm{tr}[A]}{\mathrm{tr}[A+C]} g(D_{\alpha,\lambda}(A||B)) + \frac{\mathrm{tr}[C]}{\mathrm{tr}[A+C]} g(D_{\alpha,\lambda}(C||D)) \bigg)
    \end{align}
\end{enumerate}
\end{proposition}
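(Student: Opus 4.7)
The unifying observation is that $D_{\alpha,\lambda}(A||B)=D_{\alpha}(A||B+\lambda\mathrm{Id})$ is just the unregularized quantum Rényi divergence evaluated at a shifted second argument, and the shift $B\mapsto B+\lambda\mathrm{Id}$ interacts well with most algebraic operations. I would therefore derive each of the six items by invoking the corresponding property already established by M\"uller-Lennert et al.\ for $D_{\alpha}$, plus the key monotonicity $B_{1}\leq B_{2}\Rightarrow D_{\alpha}(A||B_{1})\geq D_{\alpha}(A||B_{2})$ valid for $\alpha\geq 1/2$ (which is also proved there). Items 1--3 are essentially formal: for unitary invariance note $U(B+\lambda\mathrm{Id})U^{*}=UBU^{*}+\lambda\mathrm{Id}$ since $UU^{*}=\mathrm{Id}$; continuity follows from continuity of $D_{\alpha}$ in its arguments together with continuity of $B\mapsto B+\lambda\mathrm{Id}$; and normalization is a one-line scalar computation from the definition.

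\textbf{Order (item 4).} The $\lambda=0$ statement is literally the known result for $D_{\alpha}$. For the second half, given $A\leq B$ we have $A\leq B+\lambda\mathrm{Id}$ so monotonicity of $D_{\alpha}(A||\cdot)$ in the operator order yields $D_{\alpha}(A||B+\lambda\mathrm{Id})\leq D_{\alpha}(A||A)=0$, where the last equality is immediate from the defining formula (the $\mathrm{tr}[A]$ prefactor cancels).

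\textbf{Mean (item 6).} This is the easy direct-sum case because regularization distributes: $(B\oplus D)+\lambda\mathrm{Id}_{\mathcal H_{1}\oplus\mathcal H_{2}}=(B+\lambda\mathrm{Id})\oplus(D+\lambda\mathrm{Id})$. Plugging this into the unregularized mean identity of M\"uller-Lennert et al.\ and rewriting the resulting $D_{\alpha}(\cdot||\cdot+\lambda\mathrm{Id})$ terms as $D_{\alpha,\lambda}$ gives the stated formula; traces are unchanged by the shift of $B,D$ so the weights remain correct.

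\textbf{Additivity (item 5) --- the main obstacle.} Here the regularization does \emph{not} distribute over tensor products: $(B\otimes D)+\lambda\mathrm{Id}$ differs from $(B+\mu\mathrm{Id})\otimes(D+\mu\mathrm{Id})=B\otimes D+\mu(B\otimes\mathrm{Id}+\mathrm{Id}\otimes D)+\mu^{2}\mathrm{Id}$ for any scalar $\mu$. The strategy is to sandwich $B\otimes D+\lambda\mathrm{Id}$ between two product-regularized operators and then use monotonicity plus the \emph{standard} (unregularized) multiplicativity $D_{\alpha}(A\otimes C||B'\otimes D')=D_{\alpha}(A||B')+D_{\alpha}(C||D')$. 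For the lower bound take $\mu=\sqrt{\lambda}$: since the cross terms $\sqrt{\lambda}(B\otimes\mathrm{Id}+\mathrm{Id}\otimes D)$ are positive, one has $(B+\sqrt{\lambda}\mathrm{Id})\otimes(D+\sqrt{\lambda}\mathrm{Id})\geq B\otimes D+\lambda\mathrm{Id}$, and monotonicity delivers $D_{\alpha,\lambda}(A\otimes C||B\otimes D)\geq D_{\alpha,\sqrt{\lambda}}(A||B)+D_{\alpha,\sqrt{\lambda}}(C||D)$. For the upper bound take $\mu=\lambda/3$: under $B,D\leq\mathrm{Id}$ the cross term is bounded by $(2\lambda/3)\mathrm{Id}$, and under $\lambda\leq 3$ we have $\lambda^{2}/9\leq\lambda/3$, so $(B+\tfrac{\lambda}{3}\mathrm{Id})\otimes(D+\tfrac{\lambda}{3}\mathrm{Id})\leq B\otimes D+\lambda\mathrm{Id}$ and monotonicity plus multiplicativity give the stated upper bound. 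The care required in choosing the constants $\sqrt{\lambda}$ and $\lambda/3$ and in managing the hypotheses $B,D\leq\mathrm{Id}$, $\lambda\leq 3$ is the one genuinely nontrivial part of the proof.
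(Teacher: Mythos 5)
Your proposal is correct and follows essentially the same route as the paper's proof: items 1--4 are reduced to the corresponding properties of the unregularized quantum Rényi divergence (with the same monotonicity argument for the second half of the order property), item 6 uses the fact that the $\lambda\mathrm{Id}$ shift distributes over direct sums, and item 5 uses exactly the same two operator-order sandwiches, $(B+\sqrt{\lambda}\mathrm{Id})\otimes(D+\sqrt{\lambda}\mathrm{Id})\geq B\otimes D+\lambda\mathrm{Id}$ for the lower bound and $(B+\tfrac{\lambda}{3}\mathrm{Id})\otimes(D+\tfrac{\lambda}{3}\mathrm{Id})\leq B\otimes D+\lambda\mathrm{Id}$ under $B,D\leq\mathrm{Id}$, $\lambda\leq 3$, combined with monotonicity of $D_{\alpha}(A||\cdot)$ and the unregularized tensor additivity.
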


\textbf{The (regularized) kernel Rényi divergence.} Then, we define the \textit{(regularized) kernel Rényi divergence (RKRD)} between two probability measures $p,q$ on $\mathcal{X}$ as the (regularized) quantum Rényi divergence between their covariance operators, that is: 
\begin{align} \label{eq:kernel_renyi}
    D_{\alpha,\lambda}(\Sigma_p||\Sigma_q) := 
    \frac{1}{\alpha-1} \log \left( \mathrm{tr}\left[\left( (\Sigma_q+\lambda \mathrm{Id})^{\frac{1-\alpha}{2\alpha}} \Sigma_p (\Sigma_q+\lambda \mathrm{Id})^{\frac{1-\alpha}{2\alpha}} \right)^\alpha \right] \right) 
\end{align}
Here, we used that $\mathrm{tr}[\Sigma_p] = 1$. In the setting $\lambda = 0$, we refer to $D_{\alpha,0}(\Sigma_p||\Sigma_q) = D_{\alpha}(\Sigma_p||\Sigma_q)$ simply as the \textit{kernel Rényi divergence (KRD)}. The kernel Rényi divergence is to be contrasted with the classical Rényi divergence, introduced by \cite{renyi1961onmeasures}. 
The \textit{Rényi divergence (RD)} is defined for two probability measures $p,q$ on $\mathcal{X}$ as $D_{\alpha}(p||q) = \frac{1}{\alpha-1} \log(\int_{\mathcal{X}} (\frac{dp}{dq}(x))^{\alpha} \, dq(x))$.
When the probability measures admit densities, this simplifies to $D_{\alpha}(p||q) = \frac{1}{\alpha-1} \log( \int_{\mathcal{X}} p(x)^{\alpha} q(x)^{1-\alpha} \, dx )$.
In the limit $\alpha \to 1$, the KL divergence is recovered: $D_1(p||q) = \int_{\mathcal{X}} \log(\frac{dp}{dq}(x)) \, dp(x)$. In the limit $\alpha \to +\infty$, we obtain $D_{\infty}(p||q) = \sup_{x \in \mathrm{supp}(q)} \log(\frac{dp}{dq}(x))$. The following proposition, proven in \autoref{sec:proofs_framework}, 
shows that the RKRD decreases with $\lambda$ at a fixed $\alpha$, increases with $\alpha$ at a fixed $\lambda$, and is upper-bounded by the RD.
\begin{proposition} \label{prop:comparison_renyi}
For any two probability measures $p$, $q$ on $\mathcal{X}$, kernel $k$, $\alpha \in [1/2,1)\cup(1,+\infty)$ and $+\infty \geq \lambda_1 \geq \lambda_2 \geq 0$, we have that $D_{\alpha,\lambda_1}(\Sigma_p||\Sigma_q) \leq D_{\alpha,\lambda_2}(\Sigma_p||\Sigma_q) \leq D_{\alpha}(p||q)$. Also, for any kernel $k$, $\lambda \in [0,+\infty)$, and $\alpha_1, \alpha_2 \in (0,1)\cup(1,+\infty)$ with $\alpha_1 \geq \alpha_2$, we have that $D_{\alpha_2,\lambda}(\Sigma_p||\Sigma_q) \leq D_{\alpha_1,\lambda}(\Sigma_p||\Sigma_q)$.
\end{proposition}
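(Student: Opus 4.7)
The plan is to handle the three inequalities separately, relying throughout on standard properties of the sandwiched quantum R\'enyi divergence together with the identity $D_{\alpha,\lambda}(A||B) = D_{\alpha}(A||B+\lambda \mathrm{Id})$ that defines the regularized version.

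The two monotonicity claims are immediate corollaries of known properties of $D_\alpha$. For the first, if $\lambda_1 \geq \lambda_2 \geq 0$, then $\Sigma_q + \lambda_1 \mathrm{Id} \geq \Sigma_q + \lambda_2 \mathrm{Id}$ in operator order, and the sandwiched R\'enyi divergence is antitone in its second argument for $\alpha \in [1/2,1)\cup(1,\infty)$ (as in M\"uller-Lennert et al.), a property ultimately stemming from the operator monotonicity, respectively antitonicity, of $x \mapsto x^{(1-\alpha)/(2\alpha)}$ for $\alpha \in [1/2,1)$, respectively $\alpha > 1$. Applying antitonicity with $B_i = \Sigma_q + \lambda_i \mathrm{Id}$ gives the first inequality. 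Likewise, the non-decreasing behavior of $\alpha \mapsto D_\alpha(A||B)$ is classical, proved by H\"older's inequality applied to the function $x \mapsto x^{\alpha_1/\alpha_2}$ inside the trace; applied with $A = \Sigma_p$, $B = \Sigma_q + \lambda \mathrm{Id}$ it yields the third inequality.

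The substantive claim is the middle one, $D_{\alpha,\lambda}(\Sigma_p||\Sigma_q) \leq D_\alpha(p||q)$. By the monotonicity in $\lambda$ just established, it suffices to treat $\lambda = 0$, i.e., to show $D_\alpha(\Sigma_p||\Sigma_q) \leq D_\alpha(p||q)$. The strategy is data processing via the classical-to-quantum channel $p \mapsto \Sigma_p$. Suppose first that $\mathcal{X}$ is finite (or $p, q$ are finitely supported). Fix an orthonormal basis $\{e_x\}_{x\in\mathcal{X}}$ of a ``classical'' Hilbert space and consider the map $\Phi(M) = \sum_{x} \langle e_x, M e_x\rangle \, \phi(x) \phi(x)^{*}$. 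It is completely positive, and trace-preserving thanks to $k(x,x) = 1$; moreover, it sends the diagonal state $\rho_p := \sum_x p(x) e_x e_x^{*}$ to $\Sigma_p$, and likewise for $q$. Because $\rho_p$ and $\rho_q$ are simultaneously diagonal, a direct computation gives $D_\alpha(\rho_p||\rho_q) = \frac{1}{\alpha-1}\log \sum_x p(x)^\alpha q(x)^{1-\alpha} = D_\alpha(p||q)$, so the data-processing inequality for the sandwiched R\'enyi divergence (Frank--Lieb, Beigi) delivers the claim.

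The main obstacle is extending this argument to general (continuous) $\mathcal{X}$, where the diagonal classical density operator $\rho_p$ no longer has a clean finite-dimensional representation. This can be handled either by approximating $p, q$ along a refining sequence of finite partitions and invoking lower semicontinuity of $D_\alpha$, or by appealing directly to the $C^*$-algebraic formulation of DPI for the sandwiched R\'enyi divergence, which covers the classical-to-quantum map $p \mapsto \int \phi(x) \phi(x)^{*}\, dp(x)$ in full generality. Either route is technical but standard.
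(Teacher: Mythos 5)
Your treatment of the first and third inequalities matches the paper's: both reduce to the known antitonicity of the sandwiched R\'enyi divergence in its second argument (M\"uller-Lennert et al., Prop.~4) and its monotonicity in $\alpha$ (their Thm.~7), applied with $B = \Sigma_q + \lambda\,\mathrm{Id}$. For the middle inequality $D_{\alpha,\lambda}(\Sigma_p\|\Sigma_q)\le D_\alpha(p\|q)$ you take a genuinely different route. The paper writes $\Sigma_p = \int \phi(x)\phi(x)^{*}\,\tfrac{dp}{dq}(x)\,dq(x)$ and $\Sigma_q = \int \phi(x)\phi(x)^{*}\,dq(x)$, applies the joint convexity of $(A,B)\mapsto \exp((\alpha-1)D_\alpha(A\|B))$ (Frank--Lieb) in Jensen form to this integral representation, and then computes the rank-one divergence $D_\alpha\bigl(\phi(x)\phi(x)^{*}\tfrac{dp}{dq}(x)\,\big\|\,\phi(x)\phi(x)^{*}\bigr)=\log\tfrac{dp}{dq}(x)$ explicitly; this handles general measures and general $\lambda\ge 0$ in one stroke. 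You instead exhibit the classical-to-quantum channel $\Phi(M)=\sum_x \langle e_x, M e_x\rangle\,\phi(x)\phi(x)^{*}$, check it is CPTP (trace preservation using $k(x,x)=1$), note $\Phi(\rho_p)=\Sigma_p$ with $\rho_p$ diagonal so that $D_\alpha(\rho_p\|\rho_q)=D_\alpha(p\|q)$, and invoke the data processing inequality. This is arguably more conceptual -- it exposes the result as an instance of DPI, exactly the mechanism the paper itself uses elsewhere (e.g.\ for postprocessing) -- and the two arguments are close relatives, since Frank--Lieb derive DPI from the very joint convexity the paper uses.

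The one genuine weakness is the step you flag yourself: extending from finitely supported $p,q$ to general measures on continuous $\mathcal{X}$. This is not purely cosmetic. If you first reduce to $\lambda=0$ and then discretize, you need lower semicontinuity of the unregularized $D_\alpha$ at possibly degenerate covariance operators, which is exactly where the divergence can misbehave; the safer order is to discretize at fixed $\lambda>0$ (where continuity holds by Prop.~1, since $\Sigma_q+\lambda\,\mathrm{Id}\gg\Sigma_p$), pass to the limit along refining partitions using classical DPI on the right-hand side ($D_\alpha(p_m\|q_m)\le D_\alpha(p\|q)$), and then handle $\lambda=0$ by a separate monotone-limit argument $D_{\alpha,0}=\sup_{\lambda>0}D_{\alpha,\lambda}$. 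None of this is insurmountable, but it is real work that the paper's convexity argument sidesteps entirely, so as written your proof is complete only for finitely supported measures.
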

Proceeding along the lines of Sec. 4.1 of \cite{bach2022information}, which lower bounds the kernel KL divergence, the next proposition shows a complementary lower bound of the kernel Rényi divergence by the Rényi divergence between a pair of related distributions. The proof is in \autoref{subsec:proof_lower_bound}. 
\begin{proposition} \label{prop:lower_bound_renyi}
Suppose that $\mathcal{X}$ is compact and let $\tau$ be the uniform measure over $\mathcal{X}$. Let $\Sigma$ be the covariance operator of $\tau$, i.e. $\Sigma = \int_{\mathcal{X}} \phi(x) \phi(x) \, d\tau(x)$. Define $h : \mathcal{X} \times \mathcal{X} \to \R$ as $h(x,y) = \langle \phi(x), \Sigma^{-1/2} \phi(y) \rangle$. Given probability measures $p, q$, we define the probability densities $\tilde{p}(y) = \int_{\mathcal{X}} h(x,y) \, dp(x)$, $\tilde{q}(y) = \int_{\mathcal{X}} h(x,y) \, dq(x)$. We have that $D_{\alpha}(\tilde{p}||\tilde{q}) \leq D_{\alpha}(\Sigma_p||\Sigma_q)$.
\end{proposition}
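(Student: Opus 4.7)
The plan is to invoke the data processing inequality (DPI) for the sandwiched quantum Rényi divergence, valid for $\alpha \in [1/2,1)\cup(1,\infty)$ under any completely positive trace-preserving (CPTP) map \citep{mullerlennert2013onquantum}, and apply it to a quantum-to-classical channel $\mathcal{E}$ sending the covariance operators $\Sigma_p, \Sigma_q$ to the multiplication operators $M_{\tilde p}, M_{\tilde q}$ on $L^2(\mathcal{X},\tau)$. Once such a channel is in place, the DPI gives $D_{\alpha}(M_{\tilde p}\|M_{\tilde q})\le D_{\alpha}(\Sigma_p\|\Sigma_q)$, and the left-hand side equals the classical Rényi divergence $D_{\alpha}(\tilde p\|\tilde q)$ because quantum and classical Rényi divergences coincide on commuting multiplication operators.

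To build the channel, I would introduce the embedding $T_\tau:L^2(\mathcal{X},\tau)\to\mathcal{H}$ by $T_\tau f=\int f(y)\phi(y)\,d\tau(y)$; compactness of $\mathcal{X}$ and $k(x,x)=1$ yield the identity $T_\tau T_\tau^{*}=\Sigma$, so $V:=\Sigma^{-1/2}T_\tau$ is a partial isometry onto $\overline{\mathrm{range}(\Sigma)}$. The key identification is $\tilde p = V^{*}\mu_p$ in $L^2(\tau)$, where $\mu_p=\int \phi(x)\,dp(x)$, since a direct computation gives $(V^{*}\mu_p)(y)=\langle \phi(y),\Sigma^{-1/2}\mu_p\rangle=\int h(x,y)\,dp(x)$, and similarly $\tilde q=V^{*}\mu_q$. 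Applying the DPI under the CPTP map $A\mapsto V^{*}A V$ yields $D_{\alpha}(V^{*}\Sigma_p V\|V^{*}\Sigma_q V)\le D_{\alpha}(\Sigma_p\|\Sigma_q)$, where $V^{*}\Sigma_p V$ is the integral operator on $L^2(\tau)$ with symmetric kernel $K_p(x,y)=\int h(z,x)h(z,y)\,dp(z)$ (and similarly for $q$). A subsequent pinching/amplitude step is then required to pass from the integral operator $V^{*}\Sigma_p V$ to the rank-one multiplication operator $M_{\tilde p}$ associated with the \emph{linear-in-$h$} density $\tilde p$; concretely, one uses the centered-covariance positivity $\mu_p\mu_p^{*}\le \Sigma_p$ to extract the rank-one component $\Sigma^{-1/2}\mu_p\mu_p^{*}\Sigma^{-1/2}=|\tilde p\rangle\langle \tilde p|$, whose diagonal in the $L^2(\tau)$ position basis is exactly $M_{\tilde p}$, and controls the residual $\Sigma_p-\mu_p\mu_p^{*}\ge 0$ via a further DPI application or a joint-convexity argument.

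The main obstacle is precisely this last amplitude-extraction step. The naive POVM $N_y=\Sigma^{-1/2}\phi(y)\phi(y)^{*}\Sigma^{-1/2}$ (which satisfies $\int N_y\,d\tau(y)=\Sigma^{-1/2}\Sigma\Sigma^{-1/2}=\mathrm{Id}$ on $\overline{\mathrm{range}(\Sigma)}$ under universality of $k$) produces the \emph{quadratic-in-$h$} density $\mathrm{tr}[N_y\Sigma_p]=\int h(x,y)^2 dp(x)$, not the stated linear $\tilde p$. Bridging this gap requires a Stinespring dilation that encodes $\tilde p = V^{*}\mu_p$ coherently as a wave function on an ancilla, combined with monotonicity of the sandwiched Rényi divergence under the operator ordering $\mu_p\mu_p^{*}\le \Sigma_p$; the positivity and normalization of $\tilde p,\tilde q$ as genuine probability densities on $(\mathcal{X},\tau)$ also need to be verified in the same argument, using compactness of $\mathcal{X}$ and universality of $k$. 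Once this reduction is rigorous, the DPI closes the proof in a single line.
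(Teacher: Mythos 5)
Your first paragraph is exactly the paper's argument, and your ``naive POVM'' $N_y=\Sigma^{-1/2}\phi(y)\phi(y)^{*}\Sigma^{-1/2}$ is precisely the channel the paper uses: the proof in the appendix takes the CPTP map $\Phi(A)=\bigl(\phi(x)\phi(x)^{*}\Sigma^{-1/2}A\Sigma^{-1/2}\phi(x)\phi(x)^{*}\bigr)_{x\in\mathcal{X}}$ (trace-preserving against $\tau$ since $\int N_y\,d\tau(y)=\mathrm{Id}$ on the range of $\Sigma$), applies the data processing inequality, and identifies the output with the classical Rényi divergence of the measured densities. The point you missed is that in the paper's proof the function $h$ is \emph{defined with a square}, $h(x,y)=\langle\phi(x),\Sigma^{-1/2}\phi(y)\rangle^{2}$, so that $\tilde p(y)=\mathrm{tr}[N_y\Sigma_p]=\int h(x,y)\,dp(x)$ is exactly the quadratic-in-$h$ density you computed. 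The unsquared $h$ in the proposition statement is an inconsistency with the proof (only the squared version makes $\tilde p$ a genuine probability density, via $h\ge 0$ and $\int h(x,y)\,d\tau(x)=\langle\phi(y),\Sigma^{-1/2}\Sigma\,\Sigma^{-1/2}\phi(y)\rangle=1$); the measurement channel closes the proof in one step and no further reduction is needed.

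The genuine gap is therefore your second and third paragraphs, where you try to prove the literal linear-in-$h$ version. That route does not work as sketched: (i) with linear $h$, $\tilde p(y)=\langle\Sigma^{-1/2}\phi(y),\mu_p\rangle$ can be negative, so it is not a probability density and $D_\alpha(\tilde p\|\tilde q)$ is not even well defined classically; (ii) the operator inequality $\mu_p\mu_p^{*}\le\Sigma_p$ cannot be converted into the inequality you need, because the sandwiched Rényi divergence is monotone in the \emph{second} argument (\autoref{prop:4_lennert}), not the first, and the normalization $\mathrm{tr}[A]$ in \eqref{eq:quantum_renyi} changes when you shrink the first argument ($\mathrm{tr}[\mu_p\mu_p^{*}]=\|\mu_p\|^{2}\neq 1$ in general), so neither joint convexity nor an ordering argument delivers $D_\alpha(|\tilde p\rangle\langle\tilde p|\,\|\cdot)\le D_\alpha(\Sigma_p\|\cdot)$; and (iii) you acknowledge the ``amplitude-extraction'' step is unresolved. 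The fix is simply to use the squared $h$, at which point your paragraph one is the complete proof and paragraphs two and three can be deleted.
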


\textbf{Rényi differential privacy.} \cite{mironov2017renyi} used this last observation as the inspiration to define a relaxation of differential privacy based on the Rényi divergence. Consider a set $\mathcal{D}$ with a relation $\sim$ that describes whether two elements $D, D' \in \mathcal{D}$ are adjacent. In the prototypical differential privacy setting, $\mathcal{D}$ is a set of databases, and two databases $D, D' \in \mathcal{D}$ are adjacent if they differ in one entry.
\begin{definition}[$(\alpha,\varepsilon)$-Rényi differential privacy (RDP), \cite{mironov2017renyi}, Def. 4]
A randomized mechanism $f: \mathcal{D} \to \mathcal{R}$ is said to have $\varepsilon$-Rényi differential privacy of order $\alpha$, or $(\alpha,\varepsilon)$-RDP for short, if for any adjacent $D, D' \in \mathcal{D}$ it holds that $D_{\alpha}(\mathcal{L}(f(D))||\mathcal{L}(f(D'))) \leq \varepsilon$, where $\mathcal{L}(f(D))$ denotes the law (distribution) of the random variable $f(D)$ and $D_{\alpha}$ denotes the (classical) Rényi divergence. 
\end{definition}
In this definition, the output space $\mathcal{R}$ is usually taken to be $\R^{d}$, where $d$ can be high or low-dimensional. 
In the limit $\alpha \to +\infty$, the defining condition of $(\alpha,\varepsilon)$-RDP becomes $\sup_{x \in \mathrm{supp}(q)} \log(\frac{dp}{dq}(x)) < \varepsilon$ for $p = \mathcal{L}(f(D))$ and $q = \mathcal{L}(f(D'))$. This is equivalent to the original definition of \textit{$\varepsilon$-differential privacy} (DP, \cite{dwork2006calibrating}), which is typically stated as $\mathrm{Pr}[f(D) \in \mathcal{S}] \leq e^{-\varepsilon} \mathrm{Pr}[f(D') \in \mathcal{S}]$ for any measurable $\mathcal{S}$ and any $D, D' \in \mathcal{D}$ adjacent. Given $\delta > 0$, another well-known relaxation of $\varepsilon$-DP which we also build on is \textit{($\varepsilon,\delta$)-differential privacy} \citep{dwork2006ourdata}, which imposes the condition $\mathrm{Pr}[f(D) \in \mathcal{S}] \leq e^{-\varepsilon} \mathrm{Pr}[f(D') \in \mathcal{S}] + \delta$, and is hence less stringent than $\varepsilon$-DP.

\textbf{(Regularized) kernel Rényi differential privacy.} We define the (regularized) kernel Rényi differential privacy in analogy with Rényi differential privacy:
\begin{definition}[$(k,\alpha,\lambda,\varepsilon)$-regularized kernel Rényi differential privacy (RKRDP)]
A randomized mechanism $f: \mathcal{D} \to \mathcal{R}$ is said to have $\varepsilon$-kernel Rényi differential privacy of order $\alpha \in (0,1) \cup (1,+\infty)$ and regularization $\lambda \in [0,+\infty)$ with respect to a kernel $k$, or $(k,\alpha,\lambda,\varepsilon)$-RKRDP for short, if for any adjacent $D, D' \in \mathcal{D}$ it holds that $D_{\alpha,\lambda}(\Sigma_{\mathcal{L}(f(D))}||\Sigma_{\mathcal{L}(f(D'))}) \leq \varepsilon$, where the covariance matrices $\Sigma_{\mathcal{L}(f(D))}, \Sigma_{\mathcal{L}(f(D'))}$ for the distributions of $f(D)$ and $f(D')$ are with respect to feature map $\phi$ of the kernel $k$.
\end{definition}
When $\lambda = 0$, we talk about $(k,\alpha,\varepsilon)$-kernel Rényi differential privacy or $(k,\alpha,\varepsilon)$-KRDP.
The following proposition that shows implications between RKRDP conditions with different values of $\alpha, \lambda$, and different kernels, and links RDP with RKRDP. The proof is in \autoref{sec:proofs_properties}.

\begin{proposition}
[Implications regarding RKRDP] \label{prop:krdp_implications}
The following implications 
hold:
\begin{enumerate}[leftmargin=25pt,nolistsep]
    \item For any kernel $k$, $\alpha \in [1/2,1)\cup(1,+\infty)$ and $\varepsilon > 0$, if $\lambda_1 \geq \lambda_2 \geq 0$, $(k,\alpha,\lambda_2,\varepsilon)$-RKRDP implies $(k,\alpha,\lambda_1,\varepsilon)$-RKRDP.
    \item For any kernel $k$, $\lambda \in [0,+\infty)$ and $\varepsilon > 0$, $(k,\alpha_1,\lambda,\varepsilon)$-RKRDP implies $(k,\alpha_2,\lambda,\varepsilon)$-RKRDP when $\alpha_1 \geq \alpha_2$, where $\alpha_1, \alpha_2 \in (0,1)\cup(1,+\infty)$.
    \item For any kernel $k$ and $\alpha \in [1/2,1)\cup(1,+\infty)$, $(\alpha,\varepsilon)$-RDP implies $(k,\alpha,\lambda,\varepsilon)$-RKRDP for all $\lambda \in [0,+\infty)$.
    \item If the kernel $k$ is of the form $k(x,x') = \prod_{j=1}^{J} k_j(x,x')$, where $k_j$ are kernels,
    then $(k,\alpha,\lambda,\varepsilon)$-RKRDP implies $(k_j,\alpha,\lambda^{1/J},\varepsilon)$-RKRDP, for any $j \in \{1,\dots,J\}$ and any $\alpha \in [1/2,1)\cup(1,+\infty)$.
\end{enumerate}
\end{proposition}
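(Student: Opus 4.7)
Items 1, 2, and 3 are immediate consequences of Proposition~\ref{prop:comparison_renyi}. Fixing adjacent $D, D' \in \mathcal{D}$ and setting $p = \mathcal{L}(f(D))$, $q = \mathcal{L}(f(D'))$: item 1 uses the monotonicity $\lambda_1 \ge \lambda_2 \Rightarrow D_{\alpha,\lambda_1}(\Sigma_p\|\Sigma_q) \le D_{\alpha,\lambda_2}(\Sigma_p\|\Sigma_q)$; item 2 uses the monotonicity $\alpha_1 \ge \alpha_2 \Rightarrow D_{\alpha_2,\lambda}(\Sigma_p\|\Sigma_q) \le D_{\alpha_1,\lambda}(\Sigma_p\|\Sigma_q)$; and item 3 uses the bound $D_{\alpha,\lambda}(\Sigma_p\|\Sigma_q) \le D_\alpha(p\|q)$, invoking the hypothesis of $(\alpha,\varepsilon)$-RDP to control the right-hand side. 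In each case, chaining with the $\le \varepsilon$ hypothesis of the assumed RKRDP (or RDP) gives the conclusion.

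The last item is more delicate. The starting point is to unpack the tensor structure that a product kernel induces on the feature space: a canonical feature map for $k = \prod_{j=1}^J k_j$ on the tensor-product RKHS $\mathcal{H} = \bigotimes_j \mathcal{H}_j$ is $\phi(x) = \bigotimes_j \phi_j(x)$, which yields $\Sigma_p^{(k)} = \int \bigotimes_j \phi_j(x)\phi_j(x)^* \, dp(x)$. Using the normalization $k_l(x,x) = 1$, equivalently $\mathrm{tr}[\phi_l(x)\phi_l(x)^*] = 1$, the partial trace over all factors other than the $j$-th identifies $\Sigma_p^{(k_j)}$ with $\mathrm{tr}_{[J]\setminus\{j\}}(\Sigma_p^{(k)})$, and likewise for $q$.

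From here, my plan is to combine the data processing inequality for the quantum Rényi divergence (valid for $\alpha \in [1/2,1)\cup(1,\infty)$) with the additivity property (item 5 of Proposition~\ref{prop:prop_regularized}). Additivity, applied iteratively down a balanced binary tree over the $J$ tensor factors, converts a single regularization $\lambda$ on a product into $J$ copies of $\lambda^{1/J}$ on the individual factors; this is the structural origin of the exponent $1/J$ in the statement. The overall goal is to produce a chain of inequalities that starts from $D_{\alpha,\lambda^{1/J}}(\Sigma_p^{(k_j)}\|\Sigma_q^{(k_j)})$, inserts the joint covariance via an appropriate CPTP or embedding argument, and arrives at $D_{\alpha,\lambda}(\Sigma_p^{(k)}\|\Sigma_q^{(k)}) \le \varepsilon$.

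The main obstacle is that $\Sigma_p^{(k)}$ and $\Sigma_q^{(k)}$ are not themselves tensor products $\bigotimes_j \Sigma_p^{(k_j)}$: they are mixtures of rank-one product states indexed by a \emph{common} $x$, rather than integrals over independent coordinates $x_j$. To bridge this gap, I would look either for an operator inequality of the form $\Sigma_q^{(k)} + \lambda I_\mathcal{H} \preceq \bigotimes_j (\Sigma_q^{(k_j)} + \lambda^{1/J} I_{\mathcal{H}_j})$ — which, combined with the monotonicity of $B\mapsto D_\alpha(A\|B)$ in its second argument, would reduce the problem to the product-state additivity — or for a CPTP map $\mathcal{N}:\mathcal{B}(\mathcal{H})\to\mathcal{B}(\mathcal{H}_j)$ that maps $\Sigma_p^{(k)} \mapsto \Sigma_p^{(k_j)}$ while satisfying $\mathcal{N}(I_\mathcal{H}) \preceq \lambda^{1/J - 1} I_{\mathcal{H}_j}$, so that DPI applied to $\Sigma_p^{(k)}$ and $\Sigma_q^{(k)} + \lambda I_\mathcal{H}$ directly delivers the $\lambda^{1/J}$-regularized marginal bound. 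Getting the right regularization factor in an RKHS that may be infinite-dimensional — where a naive partial trace of $\lambda I_\mathcal{H}$ blows up — is where I expect to spend most of the effort.
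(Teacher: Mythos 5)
Items 1--3 of your proof are correct and coincide exactly with the paper's argument (each is a direct application of Proposition~\ref{prop:comparison_renyi}). For item~4 you stop short of a complete proof, but the obstacle you isolate is exactly the right one, and it is worth noting that the paper's own proof appears to step over it incorrectly. In the paper's equation~\eqref{eq:sigma_H_sigma_H12}, the covariance operator for a product kernel $k = k_1 k_2$ is written as
\[
\Sigma^{\mathcal{H}}_{p} = \int_{\mathcal{R}} \phi_1(x) \phi_1(x)^{*} \otimes \phi_2(x) \phi_2(x)^{*} \, dp(x) = \Sigma^{\mathcal{H}_1}_{p} \otimes \Sigma^{\mathcal{H}_2}_{p},
\]
and then Property~5 of Proposition~\ref{prop:prop_regularized} (additivity under tensor product) is applied. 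But the second equality is false in general: the integrand is a tensor product of two rank-one operators depending on the \emph{same} integration variable $x$, so the integral is a mixture of product states over the diagonal, not a product of marginals. Concretely, for $p = \tfrac12(\delta_{x_1}+\delta_{x_2})$ with $A_i = \phi_1(x_i)\phi_1(x_i)^*$ and $B_i = \phi_2(x_i)\phi_2(x_i)^*$, the left side is $\tfrac12(A_1\otimes B_1 + A_2\otimes B_2)$ while the right side is $\tfrac14(A_1+A_2)\otimes(B_1+B_2)$; these differ by the non-PSD term $\tfrac14(A_1-A_2)\otimes(B_1-B_2)$. So the additivity property for tensor products is not applicable to the actual $\Sigma^{\mathcal{H}}_p, \Sigma^{\mathcal{H}}_q$, and the paper's proof of item~4 does not go through as written.

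Your two proposed repair strategies also run into the problems you anticipate. The partial-trace CPTP map $\mathrm{tr}_{\mathcal{H}_2}$ does send $\Sigma^{\mathcal{H}}_p \mapsto \Sigma^{\mathcal{H}_1}_p$ (because $\mathrm{tr}[\phi_2(x)\phi_2(x)^*]=k_2(x,x)=1$), but it maps the regularizer $\lambda\,\mathrm{Id}_{\mathcal{H}_1\otimes\mathcal{H}_2}$ to $\lambda\dim(\mathcal{H}_2)\,\mathrm{Id}_{\mathcal{H}_1}$, which diverges for the infinite-dimensional RKHS one typically cares about; so DPI alone cannot yield the claimed $\lambda^{1/J}$ regularization. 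The operator-comparison route $\Sigma^{\mathcal{H}}_q + \lambda\,\mathrm{Id} \preceq (\Sigma^{\mathcal{H}_1}_q + \sqrt{\lambda}\,\mathrm{Id})\otimes(\Sigma^{\mathcal{H}_2}_q + \sqrt{\lambda}\,\mathrm{Id})$ does not hold either, since the cross-terms $\sqrt{\lambda}\,\Sigma^{\mathcal{H}_1}_q\otimes\mathrm{Id} + \sqrt{\lambda}\,\mathrm{Id}\otimes\Sigma^{\mathcal{H}_2}_q$ need not absorb the correlated part of $\Sigma^{\mathcal{H}}_q$ (see the same two-point example above). In short: your caution is well placed, neither of your two routes closes the gap as stated, and the paper's published proof of item~4 relies on an identity that is not true, so the item needs a genuinely new argument (or additional hypotheses) before it can be certified.
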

The first three implications in \autoref{prop:krdp_implications} follow from the bounds on the regularized kernel Rényi divergence shown in \autoref{prop:comparison_renyi}. In short, \autoref{prop:krdp_implications} states that the $(k,\alpha,\lambda,\varepsilon)$-RKRDP condition is stronger when $\lambda$ is small and $\alpha$ is large, and that $(\alpha,\varepsilon)$-RDP implies $(k,\alpha,\lambda,\varepsilon)$-RKRDP regardless of the choice of $k$ and $\lambda$. That is, $(k,\alpha,\lambda,\varepsilon)$-RKRDP is a relaxation of $(\alpha,\varepsilon)$-RDP, and as such, is also a relaxation of $\varepsilon$-DP.


\section{Properties of the (regularized) kernel Rényi differential privacy} \label{sec:properties}

The success of differential privacy can be traced to its desirable theoretical properties that make it amenable for practical use. \cite{kifer2012arigorous} posits that a good privacy notion should have three properties: convexity, postprocessing invariance and graceful composition. These apply to $\varepsilon$-DP and to $(\alpha,\varepsilon)$-RDP 
\citep{mironov2017renyi}. \cite{chaudhuri2019capacity} show that their notion of capacity bounded differential privacy satisfies all three properties. We now show that many of these properties continue to hold for the (regularized) kernel Rényi differential privacy.

We begin with the convexity property, which guarantees that mixtures of private mechanisms preserve the privacy notion. The proof is in \autoref{sec:proofs_properties}. 
\begin{proposition}[Convexity property] \label{prop:convexity}
Let $\alpha \in (1,+\infty)$ and $\beta \in [0,1]$ arbitrary. If $f, g :\mathcal{D} \to \mathcal{R}$ are randomized maps that satisfy $(k,\alpha,\lambda,\varepsilon)$-RKRDP, then the randomized map $h :\mathcal{D} \to \mathcal{R}$ defined as the mixture of $f$ with probability $\beta$ and $g$ with probability $1-\beta$ satisfies $(k,\alpha,\lambda,\varepsilon)$-RKRDP.
\end{proposition}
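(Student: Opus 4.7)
The plan is to reduce the claim to a joint quasi-convexity inequality for the regularized quantum Rényi divergence. Since $h$ is the mixture of $f$ with probability $\beta$ and $g$ with probability $1-\beta$, for any $D \in \mathcal{D}$ the law of $h(D)$ equals $\beta \mathcal{L}(f(D)) + (1-\beta) \mathcal{L}(g(D))$. Linearity of the map $p \mapsto \Sigma_p$ from distributions to covariance operators then yields $\Sigma_{\mathcal{L}(h(D))} = \beta \Sigma_{\mathcal{L}(f(D))} + (1-\beta) \Sigma_{\mathcal{L}(g(D))}$, and analogously for any adjacent $D'$. Writing $A_1, A_2$ for the covariance operators of $f(D)$ and $g(D)$ and $B_1, B_2$ for those of $f(D')$ and $g(D')$, the proposition therefore reduces to showing
\[
D_{\alpha, \lambda}(\beta A_1 + (1-\beta) A_2 \,\|\, \beta B_1 + (1-\beta) B_2) \leq \max_i D_{\alpha,\lambda}(A_i\|B_i).
\]

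The useful first step is to absorb the regularizer into the denominators. Because $\beta + (1-\beta) = 1$, we have $\beta B_1 + (1-\beta) B_2 + \lambda \mathrm{Id} = \beta(B_1 + \lambda \mathrm{Id}) + (1-\beta)(B_2 + \lambda \mathrm{Id})$, so the left-hand side above equals the unregularized quantum Rényi divergence $D_\alpha(\beta A_1 + (1-\beta) A_2 \,\|\, \beta \tilde B_1 + (1-\beta) \tilde B_2)$ with $\tilde B_i := B_i + \lambda \mathrm{Id}$. It therefore suffices to establish joint quasi-convexity of the unregularized $D_\alpha$ in the pair of its arguments. The natural self-contained route is to form the block-diagonal operators $\beta A_1 \oplus (1-\beta) A_2$ and $\beta \tilde B_1 \oplus (1-\beta) \tilde B_2$ on $\mathcal{H} \oplus \mathcal{H}$; apply the Mean property (item 6 of \autoref{prop:prop_regularized}) together with the scaling identity $D_\alpha(cA\|cB) = D_\alpha(A\|B)$ (a direct computation from \eqref{eq:quantum_renyi}) and the strict monotonicity of $g(x) = \exp((\alpha-1) x)$ for $\alpha > 1$, to bound the divergence of this direct sum above by $\max_i D_\alpha(A_i\|\tilde B_i) = \max_i D_{\alpha,\lambda}(A_i\|B_i) \leq \varepsilon$; and then invoke the data-processing inequality for the CPTP map $\Phi(X \oplus Y) = X + Y$ (the partial trace over an auxiliary qubit after the identification $\mathcal{H} \oplus \mathcal{H} \cong \mathcal{H} \otimes \mathbb{C}^2$) to transfer the bound from the direct sum back to the mixture.

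The main obstacle is that the data-processing inequality for the quantum Rényi divergence is not listed among the properties of \autoref{prop:prop_regularized} and must be invoked as a black box; it is a foundational result for the Müller-Lennert et al.\ definition for all $\alpha \in [1/2,1)\cup(1,+\infty)$, proven by Frank--Lieb and Beigi. Equivalently, one can directly cite the joint quasi-convexity of $D_\alpha$ as a known result from quantum information theory and skip the direct-sum construction entirely. A secondary technicality is that the trace condition $\mathrm{tr}[\tilde B_1 + \tilde B_2] \leq 1$ appearing in the Mean property can fail in infinite dimensions when $\lambda > 0$; this is handled either by a finite-dimensional truncation argument or by relying on the general joint quasi-convexity result directly.
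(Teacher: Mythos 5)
Your proof is correct, but it takes a genuinely different route from the paper's. The paper proves the stronger statement that the mixture's divergence is bounded by the \emph{convex combination} $\beta \exp((\alpha-1)D_{\alpha,\lambda}(\Sigma_{p'}\|\Sigma_{q'})) + (1-\beta)\exp((\alpha-1)D_{\alpha,\lambda}(\Sigma_{p}\|\Sigma_{q}))$, by directly invoking the joint convexity of $(A,B)\mapsto \exp((\alpha-1)D_{\alpha}(A\|B))$ on fixed-trace pairs (\autoref{lem:frank_lieb_lemma}(ii), from Frank--Lieb), after the same observation you make that $\beta(\Sigma_{q'}+\lambda\mathrm{Id})+(1-\beta)(\Sigma_{q}+\lambda\mathrm{Id}) = \beta\Sigma_{q'}+(1-\beta)\Sigma_{q}+\lambda\mathrm{Id}$; the conclusion then follows from monotonicity of $x\mapsto\exp((\alpha-1)x)$ for $\alpha>1$. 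You instead derive only joint \emph{quasi}-convexity (bound by the max), via the direct-sum construction, the Mean property, the scaling identity, and the data-processing inequality for the partial trace. Both yield the result, and your chain of steps is sound: the CPTP map $X\oplus Y\mapsto X+Y$ is indeed the partial trace under $\mathcal{H}\oplus\mathcal{H}\cong\mathcal{H}\otimes\mathbb{C}^2$, and the Mean property with the scaling identity does give $g^{-1}(\beta g(d_1)+(1-\beta)g(d_2))\leq\max_i d_i$. Two remarks. First, the data-processing inequality is not a black box external to the paper: it is stated as \autoref{lem:data_processing_QRD} and used for the postprocessing theorems, so citing it is entirely within the paper's toolkit. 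Second, your worry about $\mathrm{tr}[\tilde B_1+\tilde B_2]\leq 1$ failing in infinite dimensions is real for the Mean property \emph{as stated}, but the identity it asserts is just the block-diagonal computation $\mathrm{tr}[(M_1\oplus M_2)^{\alpha}]=\mathrm{tr}[M_1^{\alpha}]+\mathrm{tr}[M_2^{\alpha}]$ applied inside the definition, which requires no trace condition on the second arguments; so the truncation detour is avoidable. On balance the paper's argument is shorter and uses a lemma it needs elsewhere anyway; yours trades one cited convexity result for the DPI plus the direct-sum bookkeeping, which is more machinery for a weaker intermediate inequality, though it has the pedagogical merit of making the quasi-convexity mechanism explicit.
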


\subsection{Postprocessing properties}

Usual privacy notions satisfy that applying any function to the output of a private mechanism does not degrade the privacy guarantee. For example, for $(\alpha,\varepsilon)$-RDP, this is an immediate consequence of the data processing inequality for the Rényi divergence, which states if $X,Y$ are random variables on $\mathcal{R}$ with distributions $\mathcal{L}(X)$, $\mathcal{L}(X)$, and $g : \mathcal{R} \to \mathcal{R}'$, then $D_{\alpha}(\mathcal{L}(g(X))||\mathcal{L}(g(Y))) \leq D_{\alpha}(\mathcal{L}(X)||\mathcal{L}(Y))$. The following theorem shows that the kernel Rényi divergence (i.e. the RKRD with $\lambda = 0$) fulfills a similar inequality under a certain condition on the kernels $k, k'$ and the map $g$, and implies that KRDP is preserved.

\begin{theorem}[Postprocessing property for KRDP under deterministic $g$] \label{thm:postprocessing_deterministic}
Suppose that $\mathcal{H}, \mathcal{H}'$ are RKHS on $\mathcal{R}, \mathcal{R}'$ respectively, with feature maps $\phi : \mathcal{R} \to \mathcal{H}$, $\psi : \mathcal{R}' \to \mathcal{H}'$, and kernel functions $k,k'$ respectively, such that 
there exists a kernel function $\tilde{k}$ such that
\begin{align} \label{eq:k_k'_k''_condition}
    k(x,x') = k'(g(x),g(x')) \tilde{k}(x,x'), \quad \forall x, x' \in \mathcal{R}.
\end{align}
If $f : \mathcal{D} \to \mathcal{R}$ is a randomized map and $g : \mathcal{R} \to \mathcal{R}'$ is a deterministic map, then for any pair $D,D' \in \mathcal{D}$, we have that
\begin{align} \label{eq:contraction_postprocessing}
    D_{\alpha}(\Sigma_{\mathcal{L}(g(f(D)))}||\Sigma_{\mathcal{L}(g(f(D')))}) \leq D_{\alpha}(\Sigma_{\mathcal{L}(f(D))}||\Sigma_{\mathcal{L}(f(D'))}).
\end{align}
Consequently, if $f$ satisfies $(k,\alpha,\varepsilon)$-KRDP then $g \circ f$ satisfies $(k',\alpha,\varepsilon)$-KRDP.
\end{theorem}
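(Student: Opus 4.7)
My plan is to show that postprocessing by the deterministic map $g$ corresponds on the covariance-operator side to a \emph{partial trace}, and then invoke the data processing inequality (DPI) for the sandwiched quantum Rényi divergence.

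First, I would construct a feature map for $\mathcal{H}$ adapted to the factorization $k(x,x') = k'(g(x),g(x'))\tilde{k}(x,x')$. Let $\tilde{\phi} : \mathcal{R} \to \tilde{\mathcal{H}}$ be a feature map for $\tilde{k}$ (the standing normalization $k(x,x) = k'(g(x),g(x)) = 1$ forces $\tilde{k}(x,x) = 1$). Define $\tilde{\Phi}(x) := \psi(g(x)) \otimes \tilde{\phi}(x) \in \mathcal{H}' \otimes \tilde{\mathcal{H}}$. Since $\langle \tilde{\Phi}(x), \tilde{\Phi}(x') \rangle = k'(g(x),g(x'))\, \tilde{k}(x,x') = k(x,x')$, the map $\tilde{\Phi}$ realizes the kernel $k$, so by the essential uniqueness of the RKHS there is an isometry $U : \mathcal{H} \to \mathcal{H}' \otimes \tilde{\mathcal{H}}$ with $U\phi(x) = \tilde{\Phi}(x)$. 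Unitary invariance (item 2 of \autoref{prop:prop_regularized}) lets me replace $\Sigma_p$ by $U \Sigma_p U^*$ without changing $D_\alpha$, so I may assume covariance operators live in $\mathcal{H}' \otimes \tilde{\mathcal{H}}$ with feature map $\tilde{\Phi}$.

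Next, I would compute the partial trace of $\Sigma_p$ over the second tensor factor. Writing $\Sigma_p = \int \psi(g(x))\psi(g(x))^* \otimes \tilde{\phi}(x)\tilde{\phi}(x)^* \, dp(x)$ and using $\mathrm{Tr}[\tilde{\phi}(x)\tilde{\phi}(x)^*] = \tilde{k}(x,x) = 1$, one gets $\mathrm{Tr}_{\tilde{\mathcal{H}}}[\Sigma_p] = \int \psi(g(x))\psi(g(x))^* \, dp(x) = \Sigma_{g_* p}$, where $g_*p$ is the pushforward of $p$ by $g$. Applied to $p = \mathcal{L}(f(D))$, this identifies $\Sigma_{\mathcal{L}(g(f(D)))} = \mathrm{Tr}_{\tilde{\mathcal{H}}}[\Sigma_{\mathcal{L}(f(D))}]$, and similarly for $D'$. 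I would then invoke the DPI for the sandwiched quantum Rényi divergence under completely positive trace-preserving maps, valid for $\alpha \in [1/2,1)\cup(1,\infty)$ (Frank--Lieb, Beigi): since partial trace is a CPTP map, this immediately yields \eqref{eq:contraction_postprocessing}. Taking the supremum over adjacent $D,D' \in \mathcal{D}$ and using the assumption that $f$ satisfies $(k,\alpha,\varepsilon)$-KRDP delivers $(k',\alpha,\varepsilon)$-KRDP for $g\circ f$.

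The main obstacle I foresee is tidiness in the infinite-dimensional RKHS setting: one must verify that $\tilde{\Phi}$ produces an honest isometric embedding (a standard Moore--Aronszajn argument applied to a product kernel), and that the partial trace is well-defined and CPTP on the trace-class operator $\Sigma_p$. Once these background facts are in hand, the body of the proof is essentially a one-line reduction to the quantum DPI; the substantive content of the theorem is the recognition that the kernel-factorization hypothesis \eqref{eq:k_k'_k''_condition} is precisely what converts classical postprocessing by $g$ into a partial trace on the operator side. The regularized case ($\lambda > 0$) is not covered by this argument because the partial trace does not interact cleanly with the shift $\Sigma_q + \lambda\,\mathrm{Id}$, which is why the statement is restricted to $\lambda = 0$.
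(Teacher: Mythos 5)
Your proposal is correct and follows essentially the same route as the paper: the kernel factorization yields an isometry $V\phi(x) = \psi(g(x))\otimes\tilde\phi(x)$, the composite map $A \mapsto \mathrm{tr}_{\tilde{\mathcal{H}}}(VAV^{*})$ is CPTP in Stinespring form and sends $\Sigma_p$ to $\Sigma_{g_{\#}p}$, and the Frank--Lieb/Beigi data processing inequality finishes the argument. The only difference is packaging --- you build the CPTP map directly from the product feature map, while the paper derives its existence from a general iff-characterization (\autoref{prop:CPTP_kernels}, generalizing Chefles et al.) whose proof constructs exactly the same isometry-plus-partial-trace map --- and your appeal to ``unitary invariance'' for the isometry $U$ is harmless since one can simply apply the DPI to the composite CPTP map in one step.
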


We can also state a more general version of the postprocessing property in which the map $g$ can be randomized.

\begin{theorem}[Postprocessing property for randomized $g$] \label{thm:postprocessing_randomized} 
Suppose that $\mathcal{H}, \mathcal{H}'$ are RKHS on $\mathcal{R}, \mathcal{R}'$ respectively, with feature maps $\phi : \mathcal{R} \to \mathcal{H}$, $\psi : \mathcal{R}' \to \mathcal{H}'$, and kernel functions $k,k'$ 
such there exists a family of kernels $(\tilde{k}_g)_{g}$ with $k_g(x,x) = 1$ such that 
\begin{align}
k(x,x') = \mathbb{E}_{g}[\tilde{k}_g(x,x') k'(g(x),g(x'))].
\end{align}
If $f : \mathcal{D} \to \mathcal{R}$ and $g : \mathcal{R} \to \mathcal{R}'$ are randomized mappings, then for any pair $D,D' \in \mathcal{D}$, we have that
\begin{align} \label{eq:contraction_postprocessing_randomized}
    D_{\alpha}(\Sigma_{\mathcal{L}(g(f(D)))}||\Sigma_{\mathcal{L}(g(f(D')))}) \leq D_{\alpha}(\Sigma_{\mathcal{L}(f(D))}||\Sigma_{\mathcal{L}(f(D'))}).
\end{align}
Consequently, if $f$ satisfies $(k,\alpha,\varepsilon)$-KRDP then $g \circ f$ satisfies $(k',\alpha,\varepsilon)$-KRDP.
\end{theorem}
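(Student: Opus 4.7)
The plan is to mirror the proof of \autoref{thm:postprocessing_deterministic}: exhibit a completely positive trace-preserving (CPTP) channel $\mathcal{E}$ that maps $\Sigma_{\mathcal{L}(f(D))}$ (as an operator in the RKHS of $k$) to $\Sigma_{\mathcal{L}(g(f(D)))}$ (as an operator in the RKHS of $k'$), and then invoke the data-processing inequality (DPI) for the quantum Rényi divergence under CPTP maps \citep{mullerlennert2013onquantum}. The novelty relative to the deterministic theorem is that $\mathcal{E}$ must additionally ``average out'' the randomness of $g$; the key observation is that the hypothesis $k(x,x') = \mathbb{E}_g[\tilde{k}_g(x,x') k'(g(x), g(x'))]$ exhibits $k$ as the kernel of a feature map valued in a direct-integral space that naturally carries this randomness as an extra factor.

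Concretely, let $\mu$ be the law of the random map $g$ and $\tilde{\phi}_g : \mathcal{R} \to \tilde{\mathcal{H}}_g$ the feature map of $\tilde{k}_g$. I would form the Hilbert space $\mathcal{H} := \int^\oplus (\tilde{\mathcal{H}}_g \otimes \mathcal{H}') \, d\mu(g)$ and define $\Phi(x) := (\tilde{\phi}_g(x) \otimes \psi(g(x)))_{g} \in \mathcal{H}$. The hypothesis immediately yields $\langle \Phi(x), \Phi(x') \rangle_{\mathcal{H}} = k(x,x')$, so $\Phi$ is a feature map for $k$ and $\Sigma_p = \int \Phi(x) \Phi(x)^* \, dp(x)$ lives in $\mathcal{B}(\mathcal{H})$. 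Next I take $\mathcal{E} : \mathcal{B}(\mathcal{H}) \to \mathcal{B}(\mathcal{H}')$ as the composition of two manifestly CPTP steps: first, dephasing across the $g$-fibers, $T \mapsto \int^\oplus P_g T P_g \, d\mu(g)$, where $P_g$ is the projection onto $\tilde{\mathcal{H}}_g \otimes \mathcal{H}'$; and second, the $\mu$-averaged partial trace over the $\tilde{\mathcal{H}}_g$-factor.

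Using the normalization $\tilde{k}_g(x,x) = \|\tilde{\phi}_g(x)\|^2 = 1$, a short calculation gives
\begin{equation}
    \mathcal{E}(\Sigma_p) = \int \bigg( \int \psi(g(x)) \psi(g(x))^* \, dp(x) \bigg) d\mu(g) = \Sigma_{\mathcal{L}(g(X))}, \quad X \sim p,
\end{equation}
since the outer $\mu$-integral is exactly marginalization over the randomization of $g$. Specializing to $p = \mathcal{L}(f(D))$ and $p = \mathcal{L}(f(D'))$ and invoking the DPI for $D_\alpha$ under $\mathcal{E}$ yields \eqref{eq:contraction_postprocessing_randomized}; the KRDP implication follows by applying this uniformly to all adjacent $(D,D')$.

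The main obstacle is the functional-analytic care required when $\mu$ is not discrete: one must verify joint measurability of $g \mapsto \tilde{\phi}_g(x) \otimes \psi(g(x))$, trace-class-ness of $\Sigma_p$ on the direct integral $\mathcal{H}$, and that both the dephasing and the integrated partial trace are bona fide CPTP maps on the relevant trace-class ideal. In the discrete case, where $\int^\oplus$ collapses to $\bigoplus$ and $\mathcal{E}$ literally ``zeroes out off-diagonal blocks, then partial-traces each diagonal block, then $\mu$-sums them,'' the construction is algebraically transparent and already contains all the essential content; the general case then follows from standard direct-integral theory.
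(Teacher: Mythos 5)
Your proposal is correct and follows essentially the same route as the paper: the paper's Proposition~\ref{prop:existence_CPTP_generic} builds exactly the channel you describe, using the kernel identity to define the Stinespring isometry $\phi(x)\mapsto (c^g_x\otimes\psi(g(x)))_g$ into a $g$-indexed product of $\tilde{\mathcal{H}}_g\otimes\mathcal{H}'$ with the $\mathbb{E}_g$-weighted inner product, tracing out the $\tilde{\mathcal{H}}_g$ and $g$ degrees of freedom to land on $\mathbb{E}_g[\psi(g(x))\psi(g(x))^*]$, and then invoking the data processing inequality together with $\Phi(\Sigma_{\mathcal{L}(f(D))})=\Sigma_{\mathcal{L}(g(f(D)))}$. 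Your decomposition of the tracing-out step into a pinching followed by a fiberwise partial trace is a marginally more transparent way to certify complete positivity than the paper's formal partial trace $\mathcal{T}$, but it is the same channel and the same argument.
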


To show \eqref{eq:contraction_postprocessing} and \eqref{eq:contraction_postprocessing_randomized}, we use a version of the data processing inequality for the quantum Rényi divergence \citep{frank2013monotonicity,beigi2013sandwiched} which works under a class of operators known as completely positive trace-preserving (CPTP) maps. The main challenge is to show the existence of the appropriate CPTP maps, and for this we generalize to the infinite-dimensional case some results from the quantum information literature \citep{chefles2004ontheexistence} which may be of independent interest. The framework and proofs are in \autoref{subsec:proofs_postprocessing}. We state the postprocessing properties for $\lambda = 0$ only because the data processing inequality does not seem to extend to the RKRD with $\lambda > 0$ (left as an open problem).

\subsection{Composition properties}
Generally speaking, the composition properties of a privacy notion refer to the privacy guarantees of the joint application of private mechanisms. We provide a first result for RKRDP in which the mechanisms are run independently and in parallel. In this setting, the guarantees for $\varepsilon$-DP and $(\alpha,\varepsilon)$-DP involve simply adding the $\varepsilon$ parameter of each mechanism \citep{mironov2017renyi}.

\begin{proposition}[Parallel composition] \label{prop:parallel_composition}
If $f:\mathcal{D} \to \mathcal{R}_1$, $g : \mathcal{D} \to \mathcal{R}_2$ are independent randomized maps that satisfy $(k_1,\alpha,\lambda/3,\varepsilon_1)$-RKRDP and $(k_2,\alpha,\lambda/3,\varepsilon_2)$-RKRDP, respectively,
then the mechanism $(f,g) : \mathcal{D} \to \mathcal{R}_1 \times \mathcal{R}_2$ defined as $D \mapsto (f(D), g(D))$
satisfies $(k,\alpha,\lambda,\varepsilon_1+\varepsilon_2)$-RKRDP, where $k((x,y),(x',y')) = k_1(x,x') k_2(y,y')$.
\end{proposition}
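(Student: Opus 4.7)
The approach is to reduce the parallel composition bound to the additivity property of the regularized quantum Rényi divergence (item 5 of \autoref{prop:prop_regularized}). The crucial structural observation is that, because the kernel $k$ on $\mathcal{R}_1 \times \mathcal{R}_2$ factorizes as $k((x,y),(x',y')) = k_1(x,x')k_2(y,y')$, its feature map factors as a tensor product $\phi(x,y) = \phi_1(x) \otimes \phi_2(y)$ (up to the canonical isomorphism between the RKHS of $k$ and $\mathcal{H}_1 \otimes \mathcal{H}_2$).

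First, I would verify that for any $D \in \mathcal{D}$,
\begin{align}
\Sigma_{\mathcal{L}((f(D), g(D)))} \;=\; \Sigma_{\mathcal{L}(f(D))} \otimes \Sigma_{\mathcal{L}(g(D))}.
\end{align}
This is where the independence of $f$ and $g$ enters: the joint law of $(f(D), g(D))$ is the product of their marginals, so integrating $\phi(x,y)\phi(x,y)^{*} = (\phi_1(x)\phi_1(x)^{*}) \otimes (\phi_2(y)\phi_2(y)^{*})$ against the product measure yields the tensor product of the marginal covariance operators by Fubini. I would also record that $\Sigma_p \leq \mathrm{Id}$ for every covariance operator $\Sigma_p$ in our setup, since for any $f \in \mathcal{H}$ one has $\langle f, \Sigma_p f\rangle = \mathbb{E}_{X\sim p}[f(X)^2]$ and the reproducing property combined with $k(x,x)=1$ gives $|f(X)| \leq \|f\|$ pointwise, hence $\langle f, \Sigma_p f\rangle \leq \|f\|^2$. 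In particular $\Sigma_{\mathcal{L}(f(D'))}, \Sigma_{\mathcal{L}(g(D'))} \leq \mathrm{Id}$.

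Second, for any adjacent pair $D \sim D'$, I would apply the upper-bound direction of the additivity property of \autoref{prop:prop_regularized}(5) with $A = \Sigma_{\mathcal{L}(f(D))}$, $B = \Sigma_{\mathcal{L}(f(D'))}$, $C = \Sigma_{\mathcal{L}(g(D))}$, $D = \Sigma_{\mathcal{L}(g(D'))}$. The hypotheses $B,D \leq \mathrm{Id}$ and $0 \leq \lambda \leq 3$ are met, so
\begin{align}
D_{\alpha,\lambda}\bigl(\Sigma_{\mathcal{L}(f(D))} \otimes \Sigma_{\mathcal{L}(g(D))} \,\big\|\, \Sigma_{\mathcal{L}(f(D'))} \otimes \Sigma_{\mathcal{L}(g(D'))}\bigr) \leq D_{\alpha,\lambda/3}\bigl(\Sigma_{\mathcal{L}(f(D))}\|\Sigma_{\mathcal{L}(f(D'))}\bigr) + D_{\alpha,\lambda/3}\bigl(\Sigma_{\mathcal{L}(g(D))}\|\Sigma_{\mathcal{L}(g(D'))}\bigr).
\end{align}
Combining with the tensor-product identity from the first step and the assumed $(k_j,\alpha,\lambda/3,\varepsilon_j)$-RKRDP bounds on $f$ and $g$ yields $D_{\alpha,\lambda}(\Sigma_{\mathcal{L}((f,g)(D))} \| \Sigma_{\mathcal{L}((f,g)(D'))}) \leq \varepsilon_1 + \varepsilon_2$, proving $(k,\alpha,\lambda,\varepsilon_1+\varepsilon_2)$-RKRDP.

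The main obstacle I anticipate is the justification of the tensor product identification of the product kernel's RKHS and covariance operator, ensuring the additivity step is applied to the correct operators on the correct space; once that identification is clean, the rest of the argument is essentially plugging into \autoref{prop:prop_regularized}. Nothing else in the argument uses properties of the mechanisms beyond independence and the per-mechanism RKRDP bounds, and no composition-specific concentration calculation is required.
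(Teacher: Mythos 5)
Your proof is correct and follows essentially the same route as the paper's: identify the RKHS of the product kernel with $\mathcal{H}_1 \otimes \mathcal{H}_2$, use independence to factor $\Sigma_{\mathcal{L}((f(D),g(D)))} = \Sigma_{\mathcal{L}(f(D))} \otimes \Sigma_{\mathcal{L}(g(D))}$, and invoke the upper-bound direction of the additivity property in \autoref{prop:prop_regularized}. Your explicit check that $\Sigma_q \leq \mathrm{Id}$ (from $k(x,x)=1$) is a hypothesis the paper leaves implicit, and note that the additivity upper bound also requires $\lambda \leq 3$ — a restriction that neither the proposition statement nor the paper's proof records, so you are right to flag it but should not assert it is automatically met.
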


That is, for RKRDP the $\varepsilon$ guarantee of the composite mechanism is also the sum of the guarantees for each mechanism, but the regularization parameter degrades from $\lambda/3$ to $\lambda$ (recall that the RKRD decreases with $\lambda$). We also provide a result in which one of the mechanisms depends on the other.  

\begin{proposition}[Adaptive sequential composition] \label{prop:sequential_composition}
Let $\alpha \in (1,+\infty)$. Assume that $f:\mathcal{D} \to \mathcal{R}_1$ is a randomized map that satisfies $(\alpha,\varepsilon_1)$-RDP,
and that $g : \mathcal{R}_1 \times \mathcal{D} \to \mathcal{R}_2$ is a randomized map such that for any $x \in \mathcal{R}_1$, $g(x,\cdot)$ satisfies $(k,\alpha,\lambda,\varepsilon_2)$-RKRDP. Then the mechanism $h : \mathcal{D} \to \mathcal{R}_1 \times \mathcal{R}_2$ defined as $D \mapsto (f(D),g(D,f(D)))$
satisfies $(k,\alpha,\lambda,\varepsilon_1 + \varepsilon_2)$-RKRDP.
\end{proposition}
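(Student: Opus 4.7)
The plan is to adapt Mironov's proof of the classical RDP chain rule to the kernel Rényi setting. Let $P_D, P_{D'}$ denote the distributions of $h(D)$ and $h(D')$ on $\mathcal{R}_1 \times \mathcal{R}_2$, and disintegrate them as $P_D(dx, dy) = P_1(dx)\, P_2(dy \mid x)$ and $P_{D'}(dx, dy) = Q_1(dx)\, Q_2(dy \mid x)$, where $P_1 = \mathcal{L}(f(D))$, $Q_1 = \mathcal{L}(f(D'))$, $P_2(\cdot \mid x) = \mathcal{L}(g(x, D))$, and $Q_2(\cdot \mid x) = \mathcal{L}(g(x, D'))$. The hypotheses then read $D_\alpha(P_1 || Q_1) \leq \varepsilon_1$ and $D_{\alpha,\lambda}(\Sigma_{P_2(\cdot \mid x)} || \Sigma_{Q_2(\cdot \mid x)}) \leq \varepsilon_2$ for every $x \in \mathcal{R}_1$. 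Reading the kernel on the output of $h$ as acting only on the second coordinate, i.e.\ $\tilde k((x,y),(x',y')) := k(y,y')$, yields $\Sigma_{\mathcal{L}(h(D))} = \int \Sigma_{P_2(\cdot \mid x)} \, dP_1(x)$ and similarly for $D'$, so the target reduces to bounding the RKRD between two such mixtures.

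\textbf{Direct-sum lift and chain rule.} I would lift these mixtures to the block-diagonal operators $A := \bigoplus_x P_1(x) \Sigma_{P_2(\cdot \mid x)}$ and $B := \bigoplus_x Q_1(x) \Sigma_{Q_2(\cdot \mid x)}$ on a direct-sum (or direct-integral) Hilbert space. The summing map that collapses the direct sum back to $\mathcal{H}$ is completely positive and trace-preserving, so the quantum data-processing inequality (the same tool used in the proofs of Theorems~\ref{thm:postprocessing_deterministic} and~\ref{thm:postprocessing_randomized}) gives $D_{\alpha,\lambda}(\Sigma_{\mathcal{L}(h(D))} || \Sigma_{\mathcal{L}(h(D'))}) \leq D_{\alpha,\lambda}(A || B)$. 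Working first at $\lambda = 0$, the direct-integral form of the mean property (item~6 of Proposition~\ref{prop:prop_regularized}) combined with the scaling identity $D_\alpha(a A_0 || b B_0) = \log(a/b) + D_\alpha(A_0 || B_0)$ yields
\[
e^{(\alpha-1) D_\alpha(A || B)} = \int P_1(x)^\alpha \, Q_1(x)^{1-\alpha} \, e^{(\alpha-1) D_\alpha(\Sigma_{P_2(\cdot \mid x)} || \Sigma_{Q_2(\cdot \mid x)})} \, dx.
\]
Bounding the exponential in the integrand pointwise by $e^{(\alpha-1)\varepsilon_2}$ using the RKRDP of $g(x, \cdot)$, and recognising the remaining integral as $e^{(\alpha-1) D_\alpha(P_1 || Q_1)} \leq e^{(\alpha-1)\varepsilon_1}$ via the RDP of $f$, delivers $D_\alpha(A || B) \leq \varepsilon_1 + \varepsilon_2$, and hence the claimed bound at $\lambda = 0$.

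\textbf{Main obstacle.} The above computation is cleanest at $\lambda = 0$; the principal technical step is passing to $\lambda > 0$. The regulariser $\lambda\,\mathrm{Id}$ does not distribute across rescaled blocks the way a scalar multiple does, so the scaling identity $D_\alpha(a A_0 || b B_0) = \log(a/b) + D_\alpha(A_0 || B_0)$ must be replaced by a careful bookkeeping of the resulting shift, controlled using the monotonicity of $D_{\alpha,\lambda}$ in $\lambda$ from Proposition~\ref{prop:comparison_renyi}. A secondary technicality is making the direct-integral lift and the CPTP collapse map rigorous when $\mathcal{R}_1$ is continuous; this can be handled either via measurable fields of Hilbert spaces (the infinite-dimensional CPTP framework already developed for Theorem~\ref{thm:postprocessing_randomized}), or by discretising $\mathcal{R}_1$ and invoking the continuity of $D_{\alpha,\lambda}$ from Proposition~\ref{prop:prop_regularized} in a limit argument.
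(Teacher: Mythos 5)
Your outline takes a genuinely different route from the paper: you lift the two mixtures to block-diagonal operators on a direct sum, collapse via a CPTP map, and invoke the mean property (Proposition~\ref{prop:prop_regularized}, item~6) together with the scaling identity. The paper instead works on the tensor product $\mathcal{H}_1\otimes\mathcal{H}_2$ with the product kernel $k_1\cdot k_2$, writes both covariance operators as integrals against $\mathcal{L}(f(D'))$, and invokes the joint convexity of $(A,B)\mapsto\mathrm{tr}\big[\big(B^{\frac{1-\alpha}{2\alpha}}AB^{\frac{1-\alpha}{2\alpha}}\big)^\alpha\big]$ (Lemma~\ref{lem:frank_lieb_lemma}(i)) directly, followed by factoring the tensor trace. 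These are substantively different mechanisms, and your version works cleanly at $\lambda=0$.

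However, the step you flag as ``passing to $\lambda>0$'' is a genuine gap, and the tool you reach for — monotonicity of $D_{\alpha,\lambda}$ in $\lambda$ from Proposition~\ref{prop:comparison_renyi} — will not close it. The inequality you assert, $D_{\alpha,\lambda}(\Sigma_{\mathcal{L}(h(D))}||\Sigma_{\mathcal{L}(h(D'))}) \leq D_{\alpha,\lambda}(A||B)$, does \emph{not} follow from the quantum data-processing inequality (Lemma~\ref{lem:data_processing_QRD}): data processing gives $D_\alpha(T(A)||T(B+\lambda\mathrm{Id})) \leq D_\alpha(A||B+\lambda\mathrm{Id})$, but the collapsing map $T$ sends $\lambda\mathrm{Id}_{\bigoplus_x\mathcal{H}_x}$ to $\lambda\sum_x\mathrm{Id}_\mathcal{H}$, which is not $\lambda\mathrm{Id}_\mathcal{H}$; indeed the paper explicitly notes after Theorem~\ref{thm:postprocessing_randomized} that data processing for the regularized divergence is an open problem. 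A repair does exist inside your framework: rather than regularizing after the collapse, one should build the regularizer \emph{into} the blocks with measure-matched weights, i.e.\ take $B' := \bigoplus_x\big(Q_1(x)\Sigma_{Q_2(\cdot|x)} + \lambda\,Q_1(x)\,\mathrm{Id}\big)$, so that $T(B') = \Sigma_{\mathcal{L}(h(D'))} + \lambda\mathrm{Id}$ exactly and the scaling identity $D_\alpha(aA_0\,\|\,bB_0)=\log(a/b)+D_\alpha(A_0\|B_0)$ pulls $Q_1(x)$ out of each block to leave precisely $D_{\alpha,\lambda}(\Sigma_{P_2(\cdot|x)}\|\Sigma_{Q_2(\cdot|x)})$; this is what the paper's argument encodes implicitly via the operator inequality $\phi_1(x)\phi_1(x)^*\otimes(\Sigma_{\mathcal{L}(g(x,D'))}+\lambda\mathrm{Id}_2) \leq \phi_1(x)\phi_1(x)^*\otimes\Sigma_{\mathcal{L}(g(x,D'))} + \lambda\mathrm{Id}_1\otimes\mathrm{Id}_2$ together with Lemma~\ref{prop:4_lennert}. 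As written, your proposal stops short of this and leaves the $\lambda>0$ case unproved. (A secondary unclosed issue is that the mean property is only established for a binary direct sum; extending it to a direct integral over a continuous $\mathcal{R}_1$ requires the limiting argument you gesture at, whereas the paper's joint-convexity route handles the integral in one step.)
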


For $(\alpha,\varepsilon)$-RDP, the adaptive sequential composition guarantees are additive in $\varepsilon$, as in the parallel setting. \autoref{prop:sequential_composition} also shows additive $\varepsilon$ guarantees for RKRDP, but it assumes that the first map $f$ of the composition fulfills RDP instead of RKRDP. 

\section{Estimating the regularized kernel Rényi divergence from samples} \label{sec:estimating}

In practice, we usually do not have access to the densities of distributions; we are only given samples. In this setting, it is well-known that estimating the KL or Rényi divergence requires a number of samples $n$ exponential in the dimension  (\cite{krishnamurty2014nonparametric}; \cite{zhao2020minimax}, Theorems 4 and 5). In particular, this precludes the possibility of auditing RDP with practical sample sizes.

In this section, we see that it is possible to estimate the regularized kernel Rényi divergence using a moderate number of samples; we design an estimator of the RKRD between two distributions $p$ and $q$ given i.i.d samples ${(x_i)}_{i=1}^{n}$ and ${(y_i)}_{i=1}^{n}$ from $p$ and $q$ respectively, and we show a high-probability bound on the error of the estimator which falls as $O(1/\sqrt{n})$. 

We let $p_n = \frac{1}{n} \sum_{i=1}^{n} \delta_{x_i}$, $q_n = \frac{1}{n} \sum_{i=1}^{n} \delta_{y_i}$ be the empirical versions of $p$ and $q$. We define the approximate covariance operators $\hat{\Sigma}_p, \hat{\Sigma}_q$ as the covariance operators of $p_n$, $q_n$, i.e. $\hat{\Sigma}_p := \Sigma_{p_n}$, $\hat{\Sigma}_q := \Sigma_{q_n}$, and consider the estimator $D_{\alpha,\lambda}(\hat{\Sigma}_p||\hat{\Sigma}_q)$. The following theorem, proven in \autoref{sec:proofs_estimating}, bounds its error.

\begin{theorem} \label{thm:estimating}
Let $\alpha \geq 2$. For any $0 < x_0 < 1$, let
\begin{align} \label{eq:t_def}
    \ell = \log \left( \frac{14 \mathrm{Tr}(\Sigma_p - \Sigma_p^2)}{\|\Sigma_p - \Sigma_p^2\| x_0} \right), \quad t = \frac{\frac{\ell}{3} + \sqrt{(\frac{\ell}{3})^2 + 2 n \ell \|\Sigma_p - \Sigma_p^2\|}}{n}.
\end{align}
where $\|\cdot\|$ is the spectral norm.
With probability at least $1-x_0$, we have that 
\begin{align} \label{eq:concentration_ineq}
    |D_{\alpha,\lambda}(\hat{\Sigma}_p||\hat{\Sigma}_q) - D_{\alpha,\lambda}(\Sigma_p||\Sigma_q)| \leq B_n(x_0,\alpha,\lambda) := \frac{(\|\Sigma_q\| + (1+\frac{1}{\alpha}) \lambda)^{\alpha-1} (2 \alpha \lambda^{1-\alpha} + 4 (\alpha - 1))}{(\alpha-1)\mathrm{tr}[\Sigma_p^{\alpha}]} t.
\end{align}
\end{theorem}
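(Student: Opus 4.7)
The plan is to combine operator-norm concentration of $\hat\Sigma_p, \hat\Sigma_q$ with a Lipschitz estimate for the sandwich-trace functional underlying $D_{\alpha,\lambda}$, and then pass from a multiplicative perturbation to an additive one through the logarithm.

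\emph{Concentration.} Write $\hat\Sigma_p - \Sigma_p = \tfrac{1}{n}\sum_{i=1}^n (X_i - \Sigma_p)$ with $X_i = \phi(x_i)\phi(x_i)^*$. The assumption $k(x,x)=1$ makes each $X_i$ a rank-one projector with $\|X_i\|=1$ and $X_i^2 = X_i$, so $\mathbb{E}[X_i^2]=\Sigma_p$ and the centered second moment equals $V_p := \Sigma_p - \Sigma_p^2$. The intrinsic-dimension matrix Bernstein inequality (Tropp/Minsker) then yields $\|\hat\Sigma_p - \Sigma_p\|\leq t$ with probability at least $1 - x_0$, the factor $14\,\mathrm{Tr}(V_p)/\|V_p\|$ inside $\ell$ playing the role of intrinsic dimension and the rest of $t$ being the Bernstein tail solved for the deviation. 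The same argument applied to $\hat\Sigma_q$ gives $\|\hat\Sigma_q - \Sigma_q\|\leq t$ on the same good event.

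\emph{Lipschitz estimate and log step.} Set $M(A,B) := (B+\lambda\mathrm{Id})^{\frac{1-\alpha}{2\alpha}}\, A\, (B+\lambda\mathrm{Id})^{\frac{1-\alpha}{2\alpha}}$ and $\Phi(A,B) := \mathrm{tr}[M(A,B)^\alpha]$, so that $D_{\alpha,\lambda}(\Sigma_p\|\Sigma_q) = \tfrac{1}{\alpha-1}\log\Phi(\Sigma_p,\Sigma_q)$ (using $\mathrm{tr}[\Sigma_p]=1$, which follows from $k(x,x)=1$, and similarly $\mathrm{tr}[\hat\Sigma_p]=1$). Split $\Phi(\hat\Sigma_p,\hat\Sigma_q) - \Phi(\Sigma_p,\Sigma_q)$ into an $A$-perturbation $\Phi(\hat\Sigma_p,\hat\Sigma_q)-\Phi(\Sigma_p,\hat\Sigma_q)$ and a $B$-perturbation $\Phi(\Sigma_p,\hat\Sigma_q)-\Phi(\Sigma_p,\Sigma_q)$. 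The first, being linear in $M$, is handled via the trace-power inequality $|\mathrm{tr}(X^\alpha)-\mathrm{tr}(Y^\alpha)|\leq \alpha\max(\|X\|,\|Y\|)^{\alpha-1}\|X-Y\|_1$ (valid for $\alpha \geq 1$) and contributes the $4(\alpha-1)$ summand. The second must pass through the fractional power $T \mapsto T^{-r}$ with $r = \tfrac{\alpha-1}{2\alpha}\in(0,1/2]$ (since $\alpha\geq 2$); the integral representation $T^{-r} = \tfrac{\sin(\pi r)}{\pi}\int_0^\infty u^{-r}(T+u\mathrm{Id})^{-1}\,du$ combined with the resolvent identity produces the $2\alpha\lambda^{1-\alpha}$ summand. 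Bounding $\|\hat\Sigma_q + \lambda\mathrm{Id}\|\leq \|\Sigma_q\|+\lambda+t$ and absorbing $t$ into $\lambda/\alpha$ yields the factor $(\|\Sigma_q\| + (1+\tfrac{1}{\alpha})\lambda)^{\alpha-1}$. Finally, $|\log x - \log y|\leq|x-y|/\min(x,y)$ is applied with the lower bound $\Phi(A,B) \geq (\|B\|+\lambda)^{1-\alpha}\mathrm{tr}(A^\alpha)$, which in turn follows from the cyclic-spectrum identity $\mathrm{tr}[M(A,B)^\alpha] = \mathrm{tr}[(A^{1/2}(B+\lambda\mathrm{Id})^{\frac{1-\alpha}{\alpha}}A^{1/2})^\alpha]$, the operator inequality $(B+\lambda\mathrm{Id})^{\frac{1-\alpha}{\alpha}}\succeq(\|B\|+\lambda)^{\frac{1-\alpha}{\alpha}}\mathrm{Id}$ (valid for $\alpha>1$), and monotonicity of $X\mapsto\mathrm{tr}(X^\alpha)$ on positive operators. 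Dividing by $\alpha-1$ and collecting gives the announced $B_n(x_0,\alpha,\lambda)$.

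I expect the principal obstacle to be the $B$-perturbation of $\Phi$: matching the explicit coefficient $2\alpha\lambda^{1-\alpha}+4(\alpha-1)$ requires careful bookkeeping of the singular integrand near $u=0$ in the fractional-power representation, simultaneously for both halves of the sandwich, and the absorption of the Bernstein tolerance $t$ into $\lambda/\alpha$ relies on implicitly restricting to a regime where $t \lesssim \lambda/\alpha$ (or encoding this restriction in the final form of $B_n$).
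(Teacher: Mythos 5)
Your overall architecture (operator Bernstein for $\hat\Sigma_p, \hat\Sigma_q$, triangle-inequality split of the sandwich-trace functional into an $A$-perturbation and a $B$-perturbation, then a log/mean-value step using the lower bound on the trace) matches the paper, but you have attached the two summands to the wrong perturbations and the techniques you propose for each do not actually produce the claimed bounds. In the paper, the $A$-perturbation $\Phi(\hat\Sigma_p,\hat\Sigma_q)-\Phi(\Sigma_p,\hat\Sigma_q)$ is the one that yields the $\lambda$-dependent summand $2\alpha\lambda^{1-\alpha}$: it is bounded by $\lambda^{1-\alpha}\,|\mathrm{tr}[\hat\Sigma_p^\alpha-\Sigma_p^\alpha]|$, and then by working with eigenvalues directly (Weyl's inequality plus $\max\{\sigma_n(\hat\Sigma_p)^{\alpha-1},\sigma_n(\Sigma_p)^{\alpha-1}\}\le\sigma_n(\hat\Sigma_p)+\sigma_n(\Sigma_p)$, using $\alpha\ge 2$, unit trace, and eigenvalues in $[0,1]$) one gets $|\mathrm{tr}[\hat\Sigma_p^\alpha-\Sigma_p^\alpha]|\le 2\alpha t$. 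Your proposed trace-power inequality $|\mathrm{tr}(X^\alpha)-\mathrm{tr}(Y^\alpha)|\le\alpha\max(\|X\|,\|Y\|)^{\alpha-1}\|X-Y\|_1$ would leave you with $\|\hat\Sigma_p-\Sigma_p\|_1$, which is bounded only by $2$ (not by $t$), so it gives no $O(t)$ decay here.

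Conversely, the $B$-perturbation $\Phi(\Sigma_p,\hat\Sigma_q)-\Phi(\Sigma_p,\Sigma_q)$ is the one that yields the $\lambda$-free summand $4(\alpha-1)$, and the paper handles it much more simply than an integral representation for fractional powers: since $\hat\Sigma_q\ge\Sigma_q-t\,\mathrm{Id}$ and the exponent $\tfrac{1-\alpha}{2\alpha}$ is negative (operator antitone), the empirical sandwich is dominated by the one with $\Sigma_q+(\lambda-t)\mathrm{Id}$, and replacing $\Sigma_q+(\lambda-t)\mathrm{Id}$ by $(\lambda-t)\mathrm{Id}$ bounds the difference by $((\lambda-t)^{1-\alpha}-\lambda^{1-\alpha})\,\mathrm{tr}[\Sigma_p^\alpha]\le(\lambda-t)^{1-\alpha}-\lambda^{1-\alpha}$; writing $t=\tilde t\lambda$ with $\tilde t\le 1/\alpha$ and using $(1-1/\alpha)^{-\alpha}\le 4$ for $\alpha\ge 2$ gives $4(\alpha-1)t$. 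So: flip the attributions, drop the resolvent-integral machinery in favor of operator antitonicity, and replace your nuclear-norm trace-power step with the eigenvalue/Weyl argument that actually lands on $2\alpha t$. (Your observation that the proof implicitly needs $t\lesssim\lambda/\alpha$ is correct — the paper does assume $\tilde t\le 1/\alpha$.)
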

Since $t = O(1/\sqrt{n})$, we obtain that $|D_{\alpha,\lambda}(\hat{\Sigma}_p||\hat{\Sigma}_q) - D_{\alpha,\lambda}(\Sigma_p||\Sigma_q)| = O(1/\sqrt{n})$ as promised. Also, remark that when $\lambda$ is small, $\lambda^{1-\alpha}$ is large, which makes the upper bound worse; In particular, the bound becomes vacuous for $\lambda = 0$. Hence, \autoref{thm:estimating} gives statistical guarantees for the estimation of the RKRD, not for the estimation of the KRD. In fact, the main reason to introduce the RKRD is its superior statistical performance. The proof of \autoref{thm:estimating} is based on a version of the Bernstein inequality for Hermitian operators (\autoref{thm:bernstein_operator}). 

The following proposition provides a tractable way to compute $D_{\alpha,\lambda}(\hat{\Sigma}_p||\hat{\Sigma}_q)$, or equivalently $\exp((\alpha-1)D_{\alpha,\lambda}(\hat{\Sigma}_p||\hat{\Sigma}_q))$, in terms of the samples ${(x_i)}_{i=1}^{n}$ and ${(y_i)}_{i=1}^{n}$.
\begin{proposition} \label{lem:empirical_RKRD}
Let $(x_i)_{i=1}^{n}$ be the samples from $p$ and $(y_i)_{i=1}^{n}$ be the samples from $q$, and let $K_{x,x}, K_{x,y}, K_{y,x}, K_{y,y} \in \R^{n \times n}$ be the matrices defined component-wise as $[K_{x,x}]_{i,j} = K(x_i,x_j)$, $[K_{x,y}]_{i,j} = K(x_i,y_j)$, $[K_{y,x}]_{i,j} = K(y_i,x_j)$, $[K_{y,y}]_{i,j} = K(y_i,y_j)$. We define the matrices $K_p, K_q \in \R^{2n \times 2n}$ block-wise as
\begin{align}
    K_p = 
    \begin{bmatrix}
    K_{x,x} & K_{x,y} \\
    0 & 0
    \end{bmatrix} \qquad 
    K_q = 
    \begin{bmatrix}
    0 & 0 \\
    K_{y,x} & K_{y,y}
    \end{bmatrix}.
\end{align}
We obtain that
\begin{align}
\begin{split}
    &\mathrm{tr}\left[\left( (\hat{\Sigma}_q + \lambda \mathrm{Id})^{\frac{1-\alpha}{2\alpha}} \hat{\Sigma}_p (\hat{\Sigma}_q + \lambda \mathrm{Id})^{\frac{1-\alpha}{2\alpha}} \right)^\alpha \right] \\ &= \mathrm{tr}\bigg[\bigg( \left(\frac{1}{n}K_q + \lambda \mathrm{Id} \right)^{\frac{1-\alpha}{2\alpha}} \frac{1}{n}K_p \left(\frac{1}{n}K_q + \lambda \mathrm{Id} \right)^{\frac{1-\alpha}{2\alpha}} \bigg)^\alpha \bigg].
\end{split}
\end{align}
\end{proposition}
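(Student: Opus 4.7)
The plan is to reduce the infinite-dimensional trace on $\mathcal{H}$ to the $2n\times 2n$ matrix trace on the RHS via a joint feature map. Let $(z_1,\ldots,z_{2n}) := (x_1,\ldots,x_n,y_1,\ldots,y_n)$ and define $\Phi : \mathbb{R}^{2n} \to \mathcal{H}$ by $\Phi e_i = \phi(z_i)$, so that $\Phi^*\Phi$ equals the full $2n\times 2n$ Gram matrix
\[ K := [k(z_i,z_j)]_{i,j} = \begin{pmatrix} K_{x,x} & K_{x,y} \\ K_{y,x} & K_{y,y} \end{pmatrix}. \]
With $E_p := \mathrm{diag}(I_n,0_n)$ and $E_q := \mathrm{diag}(0_n,I_n)$, one reads off $\hat{\Sigma}_p = \tfrac{1}{n}\Phi E_p\Phi^*$, $\hat{\Sigma}_q = \tfrac{1}{n}\Phi E_q\Phi^*$, $E_p K = K_p$, and $E_q K = K_q$. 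The crucial intertwining identity
\[
(\hat{\Sigma}_q + \lambda\,\mathrm{Id})\,\Phi \;=\; \tfrac{1}{n}\Phi E_q\Phi^*\Phi + \lambda\Phi \;=\; \Phi\bigl(\tfrac{1}{n}K_q + \lambda I\bigr)
\]
iterates to $(\hat{\Sigma}_q + \lambda\,\mathrm{Id})^k \Phi = \Phi(\tfrac{1}{n}K_q + \lambda I)^k$ for $k \in \mathbb{N}$, and via uniform polynomial approximation of $t\mapsto t^\sigma$ on the common compact spectrum (the operator has spectrum in $[\lambda,\lambda+\|\hat{\Sigma}_q\|]$, while $\tfrac{1}{n}K_q + \lambda I$ is block lower triangular with positive real spectrum $\{\lambda\}\cup(\tfrac{1}{n}\mathrm{spec}(K_{y,y})+\lambda)$) extends to every real $\sigma$. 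Setting $s := \tfrac{1-\alpha}{2\alpha}$ and $F := (\tfrac{1}{n}K_q + \lambda I)^s$, this becomes $\Phi^*(\hat{\Sigma}_q + \lambda\,\mathrm{Id})^{2s}\Phi = K F^2$.

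Factor $\hat{\Sigma}_p = XX^*$ with $X := \tfrac{1}{\sqrt n}\Phi E_p^{1/2}$, and let $C := (\hat{\Sigma}_q + \lambda\,\mathrm{Id})^s$. Then $C\hat{\Sigma}_p C = (CX)(CX)^* = YY^*$ shares its non-zero spectrum (with multiplicities) with $Y^*Y = X^* C^2 X$, so
\[
\mathrm{tr}_{\mathcal{H}}\bigl[(C\hat{\Sigma}_p C)^\alpha\bigr] \;=\; \mathrm{tr}\bigl[(X^*C^2 X)^\alpha\bigr] \;=\; \mathrm{tr}\bigl[(\tfrac{1}{n}\,E_p^{1/2}\,K\,F^2\,E_p^{1/2})^\alpha\bigr],
\]
where the last equality uses the intertwining identity above. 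Starting now from the RHS of the proposition and applying the cyclic trace identity twice — first with $A = F$ and $B = \tfrac{1}{n}K_p F = \tfrac{1}{n}E_p K F$, then with $A = E_p^{1/2}$ and $B = \tfrac{1}{n}E_p^{1/2}K F^2$ — yields
\[
\mathrm{tr}\bigl[(F\,\tfrac{1}{n}K_p\,F)^\alpha\bigr] \;=\; \mathrm{tr}\bigl[(\tfrac{1}{n}E_p K F^2)^\alpha\bigr] \;=\; \mathrm{tr}\bigl[(\tfrac{1}{n}\,E_p^{1/2}\,K\,F^2\,E_p^{1/2})^\alpha\bigr],
\]
matching the reduced LHS exactly.

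The main obstacle is making rigorous the non-integer power $F = (\tfrac{1}{n}K_q + \lambda I)^s$ of a non-symmetric matrix, and the cyclic trace identity $\mathrm{tr}[(AB)^\alpha] = \mathrm{tr}[(BA)^\alpha]$ for non-integer $\alpha$. The positive real spectrum identified above makes the holomorphic functional calculus for $F$ standard, while Flanders' theorem on the coincidence of non-zero Jordan block structures of $AB$ and $BA$ — combined with $0^\alpha = 0$ — delivers the trace identity whenever $(AB)^\alpha$ and $(BA)^\alpha$ are defined via the common spectral data (here, non-negative, since $X^*C^2 X$ is PSD). A degenerate case where $\{\phi(z_i)\}_{i=1}^{2n}$ is linearly dependent is handled by the perturbation $\lambda \mapsto \lambda + \varepsilon$ and continuity as $\varepsilon \downarrow 0$.
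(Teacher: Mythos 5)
Your proof is correct and takes essentially the same route as the paper's: the intertwining identity $(\hat{\Sigma}_q+\lambda\,\mathrm{Id})\Phi=\Phi(\tfrac{1}{n}K_q+\lambda I)$ is precisely the paper's observation that $\tfrac{1}{n}K_p$ and $\tfrac{1}{n}K_q+\lambda\,\mathrm{Id}$ represent $\hat{\Sigma}_p$ and $\hat{\Sigma}_q+\lambda\,\mathrm{Id}$ in the (non-orthonormal) basis $\{\phi(x_i),\phi(y_i)\}$ of the relevant span, after which both arguments equate spectra and hence traces of $\alpha$-powers. You are if anything more careful than the paper about the fractional powers of the non-symmetric matrices and the $\mathrm{tr}[(AB)^\alpha]=\mathrm{tr}[(BA)^\alpha]$ step; only your closing remark is off, since your argument nowhere needs $\{\phi(z_i)\}$ to be linearly independent and perturbing $\lambda$ would not restore independence anyway.
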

That is, $D_{\alpha,\lambda}(\hat{\Sigma}_p||\hat{\Sigma}_q) = \frac{1}{\alpha-1} \log \big( \mathrm{tr} \big[ \big( \left(\frac{1}{n}K_q + \lambda \mathrm{Id} \right)^{\frac{1-\alpha}{2\alpha}} \frac{1}{n}K_p \left(\frac{1}{n}K_q + \lambda \mathrm{Id} \right)^{\frac{1-\alpha}{2\alpha}} \big)^\alpha \big] \big)$.
Assuming that kernel evaluations take constant time, the time complexity is $O(n^3)$, because we need to compute the eigenvalue decomposition of $K_q$ to obtain the matrix power $\left(\frac{1}{n}K_q + \lambda \mathrm{Id} \right)^{\frac{1-\alpha}{2\alpha}}$, and then the product $\left(\frac{1}{n}K_q + \lambda \mathrm{Id} \right)^{\frac{1-\alpha}{2\alpha}} \frac{1}{n}K_p \left(\frac{1}{n}K_q + \lambda \mathrm{Id} \right)^{\frac{1-\alpha}{2\alpha}}$ and its eigenvalue decomposition.

\section{Auditing differential privacy with the regularized kernel Rényi divergence} \label{sec:auditing}
\textbf{A hypothesis test for RKRDP.} \autoref{sec:estimating} provides an estimator of the RKRD which can be used to audit RKRDP. Namely, suppose that we have access to samples ${(x_i)}_{i=1}^{n}$ and ${(y_i)}_{i=1}^{n}$ from the random variables $f(D)$, $f(D')$, where $D,D' \in \mathcal{D}$ are adjacent and $f : \mathcal{D} \to \mathcal{R}$ is a randomized map that satisfies $(k,\alpha,\lambda,\varepsilon)$-RKRDP. By definition, we have that $D_{\alpha,\lambda}(\Sigma_{\mathcal{L}(f(D))}||\Sigma_{\mathcal{L}(f(D'))}) \leq \varepsilon$. Applying \autoref{thm:estimating}, we obtain that with probability at least $1-x_0$,
\begin{align} \label{eq:test}
    D_{\alpha,\lambda}(\hat{\Sigma}_p||\hat{\Sigma}_q) \leq \varepsilon + B_n(x_0,\alpha,\lambda),
\end{align}
where $B_n(x_0,\alpha,\lambda) = O(1/\sqrt{n})$ is the bound defined in \eqref{eq:concentration_ineq}. Formally, this gives rise to a hypothesis test of level $x_0$ for $(k,\alpha,\lambda,\varepsilon)$-RKRDP: the null hypothesis is that RKRDP holds, the test statistic is $D_{\alpha,\lambda}(\hat{\Sigma}_p||\hat{\Sigma}_q)$ and the threshold is $\varepsilon + B_n(x_0,\alpha,\lambda)$; if the test statistic is larger than the threshold we reject the null hypothesis. The test has level $x_0$ because under the null hypothesis, the rejection probability (the type I error) is below $x_0$. An analysis of the power (i.e. the type II error) of this hypothesis test is out of the scope of this paper. 

\textbf{Testing for $(\alpha,\varepsilon)$-RDP and $\varepsilon$-DP in high dimensions.} As mentioned at the end of \autoref{sec:framework}, RKRDP is a relaxation of $(\alpha,\varepsilon)$-Rényi differential privacy, and \textit{a fortiori}, of $\varepsilon$-differential privacy. Hence, the test \eqref{eq:test} is also a hypothesis test of level $x_0$ for $(\alpha,\varepsilon)$-RDP and $\varepsilon$-DP. We reemphasize that in high dimensions, $(\alpha,\varepsilon)$-RDP cannot be audited directly using a test statistic which estimates the Rényi divergence from samples, because all such estimators are cursed by dimension (see \autoref{sec:estimating}).

\textbf{Testing for $(\varepsilon,\delta)$-DP in high dimensions.} We can further regard the test \eqref{eq:test} as a hypothesis test of level $x_0$ for $(\varepsilon,\delta)$-differential privacy by choosing the regularization parameter $\lambda$ appropriately. Namely, we use the following proposition:
\begin{proposition} \label{thm:epsilon_delta_implies}
For any $\varepsilon, \delta > 0$, $\alpha \in [1/2)\cup(1,+\infty)$ and any kernel $k$, $(\varepsilon,\delta)$-differential privacy implies $(k,\alpha, \delta e^{-\varepsilon},\varepsilon)$-regularized kernel Rényi differential privacy.
\end{proposition}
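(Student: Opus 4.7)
The plan is to convert the $(\varepsilon,\delta)$-DP probability inequality into an operator inequality $\Sigma_p \leq e^\varepsilon(\Sigma_q + \lambda \mathrm{Id})$ with $\lambda = \delta e^{-\varepsilon}$, and then invoke the scaling and order properties of the sandwiched quantum Rényi divergence from \autoref{prop:prop_regularized}.

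\textbf{Step 1 (measure-theoretic decomposition).} Write $p = \mathcal{L}(f(D))$ and $q = \mathcal{L}(f(D'))$. Applying the Hahn--Jordan decomposition to the signed measure $p - e^\varepsilon q$, I would define $\nu := (p - e^\varepsilon q)_+$ as its positive part. The $(\varepsilon,\delta)$-DP hypothesis says $(p - e^\varepsilon q)(S) \leq \delta$ for every measurable $S$, so $\nu(\mathcal{X}) \leq \delta$ and $p \leq e^\varepsilon q + \nu$ as positive measures.

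\textbf{Step 2 (lift to covariance operators).} Next, I integrate the rank-one map $x \mapsto \phi(x)\phi(x)^*$ against the positive measure $e^\varepsilon q + \nu - p \geq 0$, obtaining the positive operator $e^\varepsilon \Sigma_q + \Sigma_\nu - \Sigma_p \geq 0$. The normalization $k(x,x)=1$ makes $\Sigma_\nu$ trace-class with $\mathrm{tr}(\Sigma_\nu) = \nu(\mathcal{X}) \leq \delta$, and any positive trace-class operator $T$ satisfies $T \leq \mathrm{tr}(T)\,\mathrm{Id}$, so $\Sigma_\nu \leq \delta\,\mathrm{Id}$. Combining, $\Sigma_p \leq e^\varepsilon \Sigma_q + \delta\,\mathrm{Id} = e^\varepsilon(\Sigma_q + \lambda\,\mathrm{Id})$ with $\lambda := \delta e^{-\varepsilon}$.

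\textbf{Step 3 (scaling plus order).} A direct computation from \eqref{eq:quantum_renyi} yields the scaling identity $D_\alpha(A || cB) = D_\alpha(A || B) - \log c$ for every $c > 0$. Hence
\begin{align}
D_{\alpha,\lambda}(\Sigma_p || \Sigma_q) = D_\alpha(\Sigma_p || \Sigma_q + \lambda \mathrm{Id}) = \varepsilon + D_\alpha\bigl(\Sigma_p || e^\varepsilon(\Sigma_q + \lambda \mathrm{Id})\bigr).
\end{align}
Step 2 combined with the order property in \autoref{prop:prop_regularized} (part 4, applied with regularization $0$) makes the rightmost term non-positive, so $D_{\alpha,\lambda}(\Sigma_p || \Sigma_q) \leq \varepsilon$, which is the RKRDP condition.

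The crux is Step 2: the one-sided mass bound $\nu(\mathcal{X})\leq\delta$ becomes the spectral bound $\Sigma_\nu \leq \delta\,\mathrm{Id}$ only because $\|\phi(x)\|=1$ for every $x$, which is exactly what the assumption $k(x,x)=1$ delivers; without this normalization one would at best get a bound involving $\sup_x\|\phi(x)\|^2$. Once the operator-level translation of $(\varepsilon,\delta)$-DP is in hand, Step 3 is purely formal: the scaling identity absorbs the $e^\varepsilon$ prefactor and the order property of the sandwiched quantum Rényi divergence finishes the argument.
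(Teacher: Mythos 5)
Your proof is correct, and the key operator inequality $\Sigma_p \leq e^\varepsilon \Sigma_q + \delta\,\mathrm{Id}$ is the same one the paper establishes, but you arrive at it by a genuinely different argument. The paper proves it as a standalone lemma (Lemma~\ref{lem:hypothesis_test_1}) by contrapositive: assuming $\Sigma_p \not\leq e^\varepsilon\Sigma_q + \delta\,\mathrm{Id}$, it extracts a unit-norm witness $f\in\mathcal{H}$, forms the statistic $T_f(x) = \langle f,\phi(x)\rangle^2 \in [0,1]$, and uses a layer-cake representation of $\int T_f\,dp$ and $\int T_f\,dq$ to produce a level set $S = \{T_f \geq s\}$ violating $(\varepsilon,\delta)$-DP. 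Your Hahn--Jordan decomposition of $p - e^\varepsilon q$ together with the trace bound $\Sigma_\nu \leq \mathrm{tr}(\Sigma_\nu)\,\mathrm{Id} \leq \delta\,\mathrm{Id}$ reaches the same conclusion more directly, avoiding the variational/contrapositive detour; both routes hinge on the normalization $k(x,x)=1$, the paper through $T_f(x)\in[0,1]$ and yours through $\mathrm{tr}[\Sigma_\nu] = \nu(\mathcal{X})$. Your argument is arguably cleaner because it produces the measure-theoretic witness constructively, whereas the paper's lemma is stated as a hypothesis-testing-style equivalence that is reused for the testing framework. For the final step you use the scaling identity $D_\alpha(A\,\|\,cB) = D_\alpha(A\,\|\,B) - \log c$ plus the order property, while the paper applies Lemma~\ref{prop:4_lennert} (monotonicity in the second argument) to replace $\Sigma_q + \lambda\,\mathrm{Id}$ by $e^{-\varepsilon}\Sigma_p$ and then evaluates $D_\alpha(\Sigma_p\,\|\,e^{-\varepsilon}\Sigma_p) = \varepsilon$ directly; these are formally equivalent rearrangements of the same manipulation.
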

\autoref{thm:epsilon_delta_implies}, which is proven in \autoref{sec:proofs_auditing}, shows that $(k,\alpha,\lambda,\varepsilon)$-RKRDP is a relaxation of $(\varepsilon,\delta)$-DP when $\lambda = \delta e^{-\varepsilon}$. Thus, the test \eqref{eq:test} given by $D_{\alpha,\delta e^{-\varepsilon}}(\hat{\Sigma}_p||\hat{\Sigma}_q)$ has level $x_0$ for $(\varepsilon,\delta)$-DP. 
There does not seem to be any alternative approaches to audit $(\varepsilon,\delta)$-DP in high dimensions. Note that while \cite{mironov2017renyi} (Proposition 3) showed that $(\alpha,\varepsilon)$-RDP implies $(\varepsilon + \log(1/\delta)/(\alpha-1), \delta)$-DP, this implication cannot be leveraged into a hypothesis test for $(\varepsilon,\delta)$-DP even when a test for $(\alpha,\varepsilon)$-RDP is available (such as the one described in this section).
When testing for any of the mentioned DP notions, the choice of pairs of adjacent databases $D, D' \in \mathcal{D}$ can be done as described in \cite{Ding2018} and \cite{JagielskiUO20}.

\section{Experiments}\label{sec:exp}
We perform experiments in which we apply the Gaussian mechanism, and we compute the RKRD estimator $D_{\alpha,\lambda}(\hat{\Sigma}_p||\hat{\Sigma}_q)$ to audit DP notions. 

For a deterministic map $f : \mathcal{D} \to \R^d$, we define the $L^2$ sensitivity $\Delta = \sup_{D, D' \in \mathcal{D}, D \sim D'} \|f(D)-f(D')\|_2$, where the supremum is over adjacent elements. Given a deterministic map $f : \mathcal{D} \to \R^d$ with $L^2$ sensitivity $\Delta$ and parameters $\varepsilon, \delta > 0$, the Gaussian mechanism \citep{dwork2014thealgorithmic} consists in adding multivariate Gaussian noise $N(0,\sigma^2 \mathrm{Id})$ to the output $f(D)$, where $\sigma = \Delta \sqrt{2 \log(1.25/\delta)}/\varepsilon$ is chosen such that the mechanism satisfies $(\varepsilon,\delta)$-DP. 
Later on, \cite{balle2018improving} showed an exact characterization of the values $\sigma$ such that $(\varepsilon,\delta)$-DP holds: letting $\Phi$ be the Gaussian CDF, $(\varepsilon,\delta)$-DP is equivalent to
$\Phi(\frac{\Delta}{2 \sigma} - \frac{\varepsilon \sigma}{\Delta}) - e^{\varepsilon} \Phi (-\frac{\Delta}{2 \sigma} - \frac{\varepsilon \sigma}{\Delta} ) \leq \delta$.
Unlike for the Laplace mechanism, thanks to the rotational symmetry of the multivariate Gaussian distribution, the only relevant parameter when comparing the outputs of the Gaussian mechanism on $D, D'$ is the ratio between $\sigma$ and $\|f(D)-f(D')\|_2$. 

In our experiments, we take $d = 30$, $\Delta = \|f(D)-f(D')\|_2 = 0,10$. For $\Delta = 10$, we consider three pairs of $\varepsilon$, $\delta$, and for each we set $\sigma$ as small as possible such that $(\varepsilon,\delta)$-DP holds according to the characterization by \cite{balle2018improving} 
(for $\Delta = 0$ we just set $\sigma = 0.01$). We compute the RKRD for different values of $\lambda$ and $\alpha$, and several sample sizes $n$; the plots are shown in \autoref{fig:rkrd_audit}. At first sight, we see that the RKRD increases with $\alpha$ and decreases with $\lambda$ as shown in \autoref{prop:comparison_renyi}. Also, in accordance with \autoref{thm:estimating}, the RKRD estimators with different numbers of samples are close (and presumably close to the real value), more so for higher $\lambda$. We provide the experimental details in \autoref{sec:exp_details}. The code can be found at \url{https://github.com/CDEnrich/kernel_renyi_dp}.
\begin{figure}[ht!]
    \centering
    \includegraphics[width=0.48\textwidth]{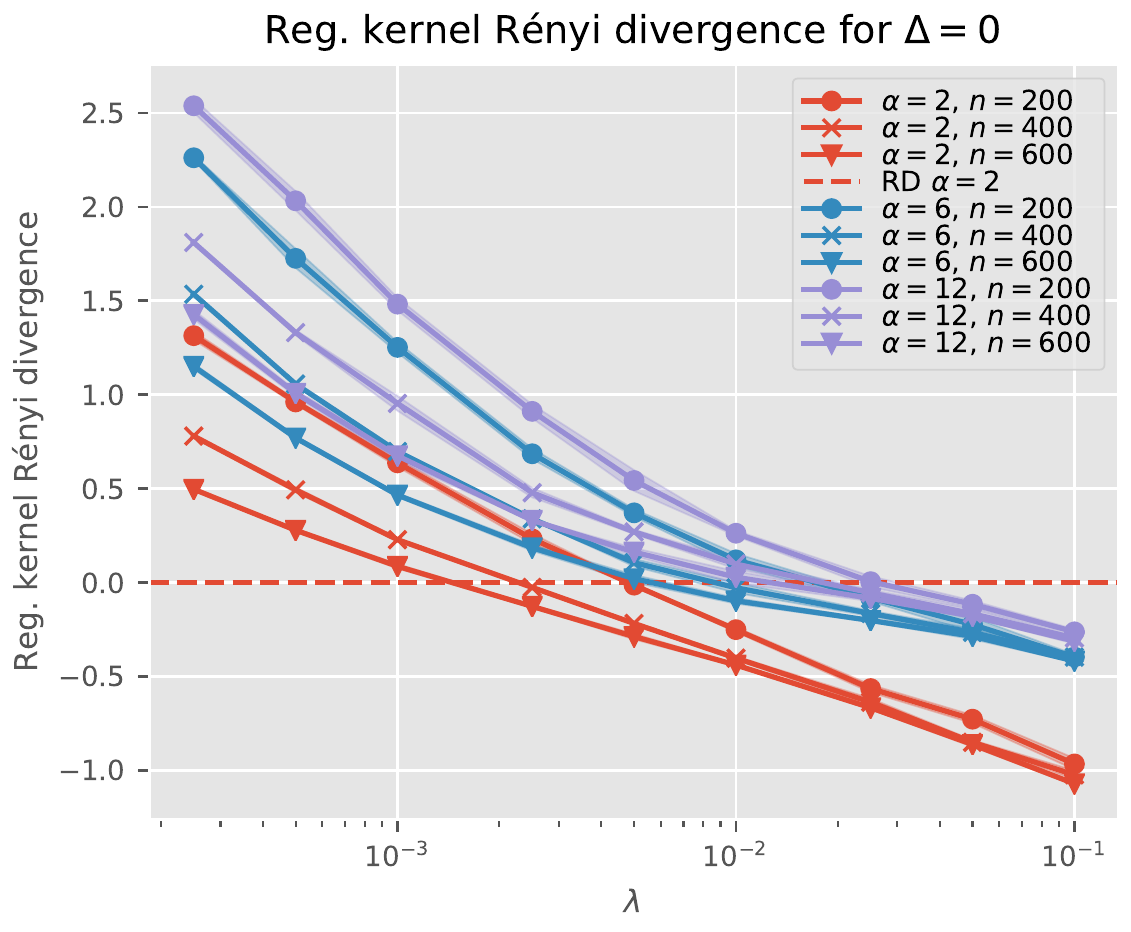}
    \includegraphics[width=0.49\textwidth]{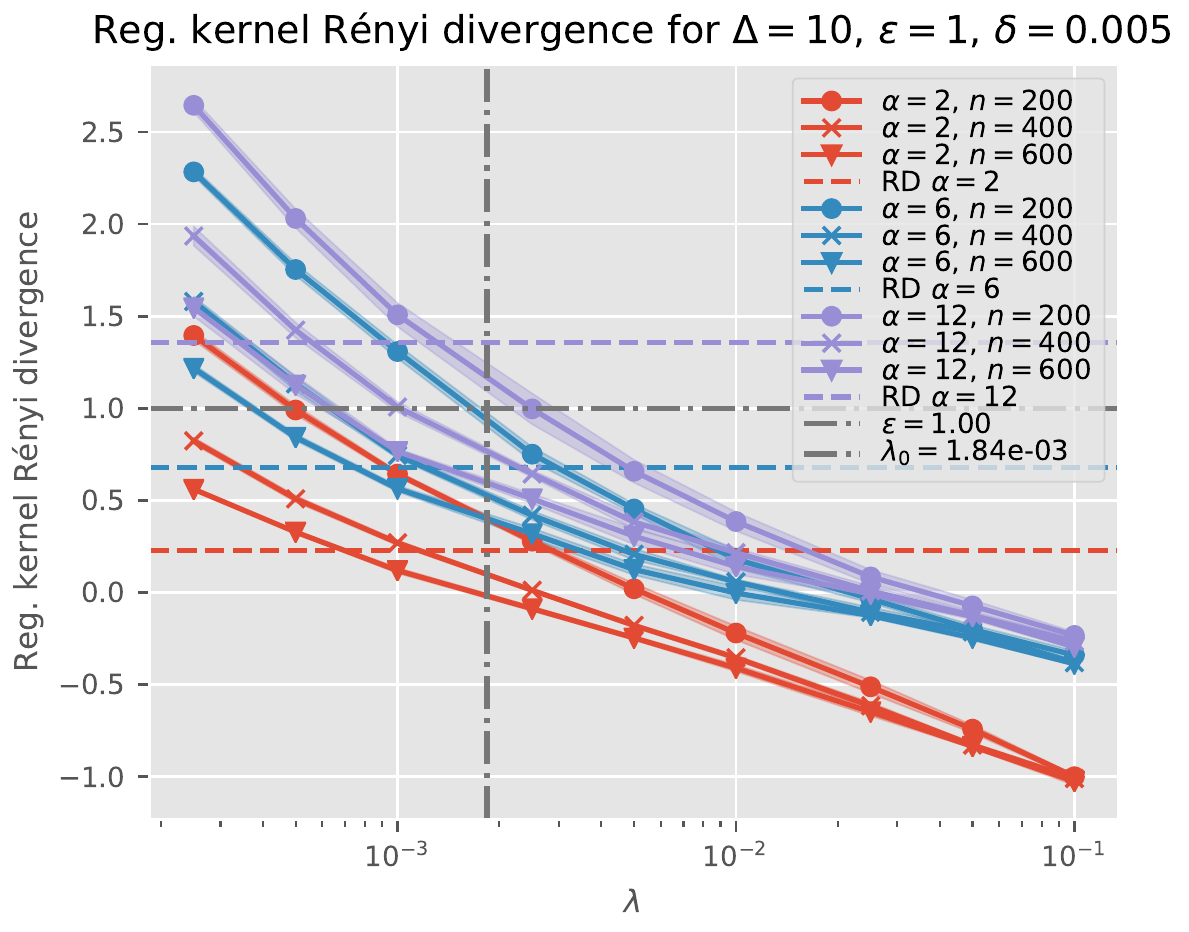} \\
    \includegraphics[width=0.49\textwidth]{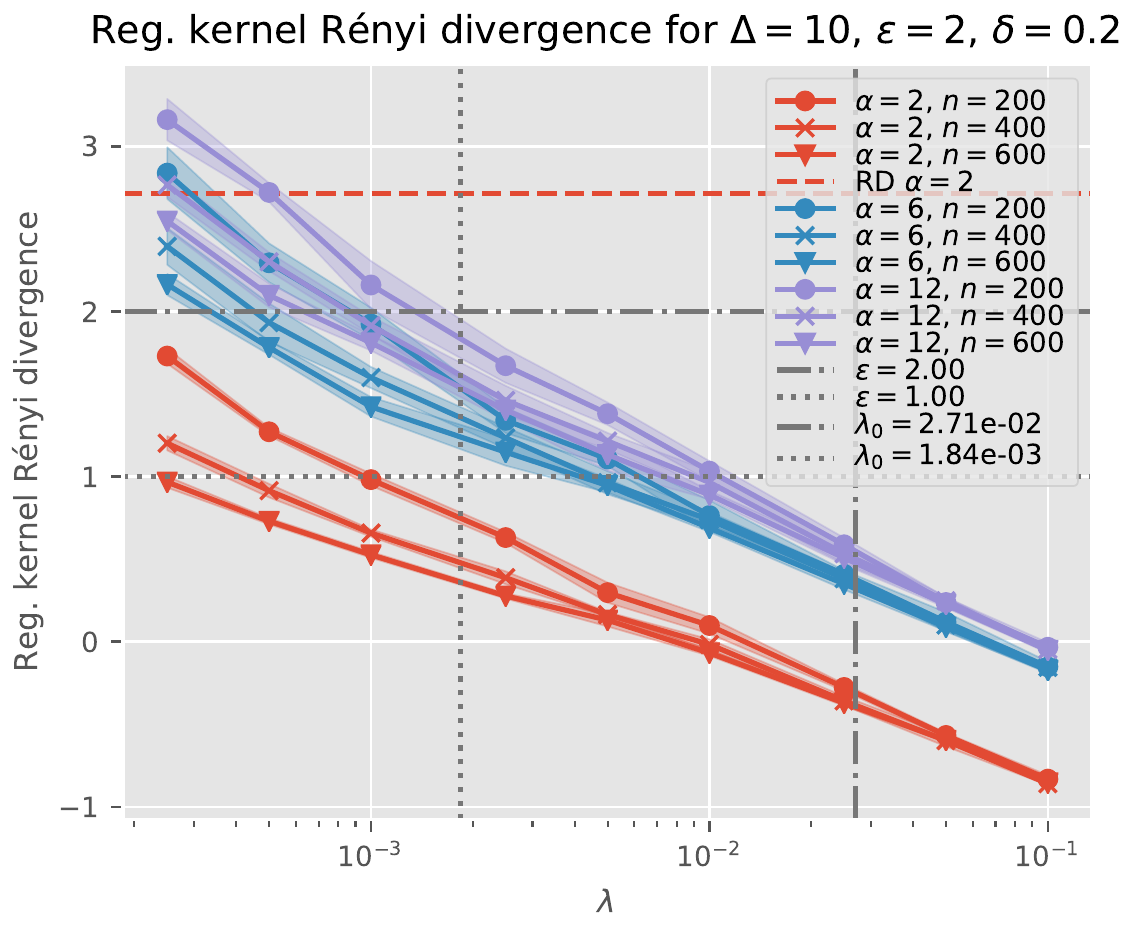}
    \includegraphics[width=0.49\textwidth]{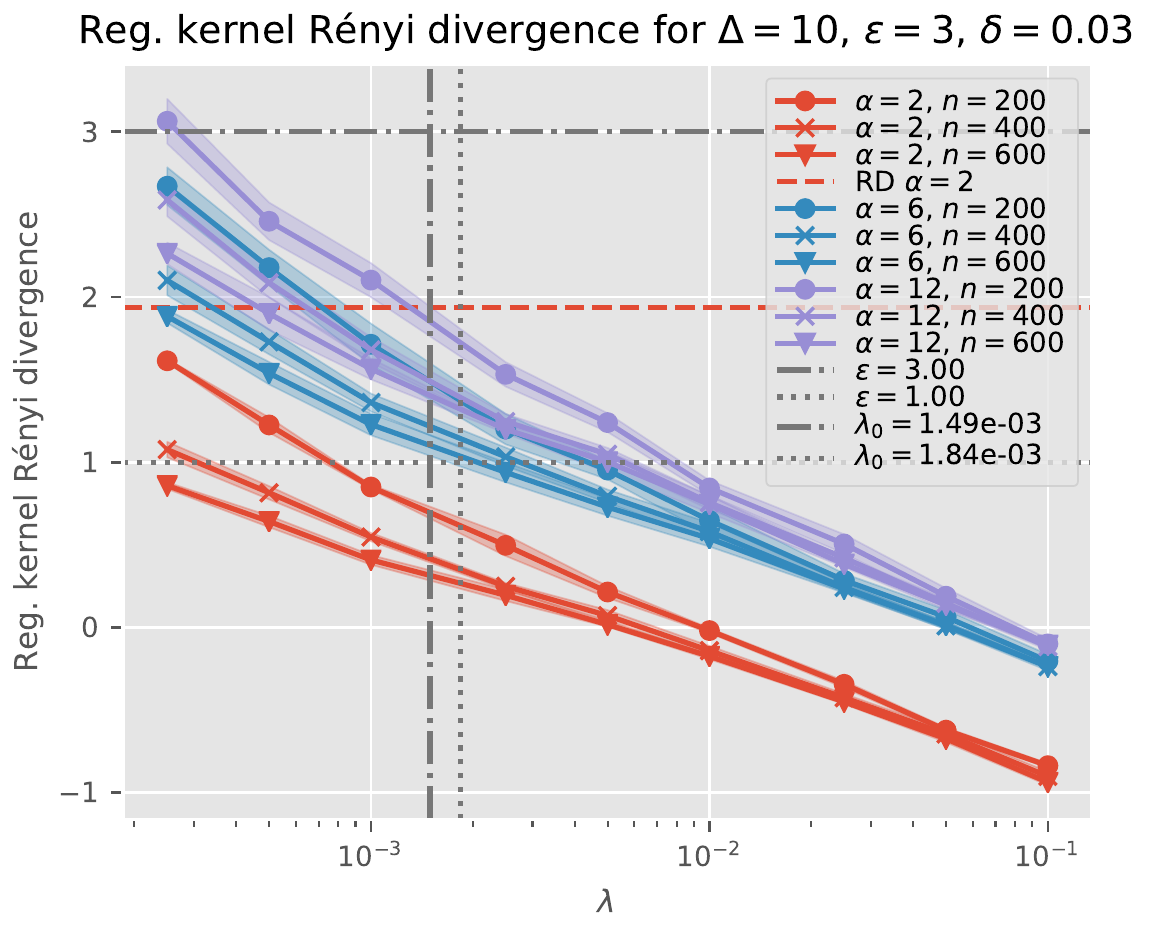}
    \caption{RKRD values for the Gaussian mechanism for $d=30$. Values of noise variance $\sigma^2$: 0.0001 (top left), 21.0444 (top right), 6.0669 (bottom left), 7.1850 (bottom right). RD values not shown: for $\alpha=6$; 8.1506 (bottom left), 5.8112 (bottom right), for $\alpha = 12$; 16.3013 (bottom left), 11.6224 (bottom right). Means and standard deviations computed from 5 runs. See explanation in the text.}
    \label{fig:rkrd_audit}
\end{figure}

\textbf{Testing for $(\varepsilon,\delta)$-DP.} The horizontal and vertical dash-dotted lines in the top right plot of \autoref{fig:rkrd_audit} show $\varepsilon = 1$ and $\lambda_0 = \delta e^{-\varepsilon} = \num{1.84e-3}$, where $\delta = 0.005$. Since $\sigma$ is chosen such that $(\varepsilon,\delta)$-DP holds, \autoref{thm:epsilon_delta_implies} implies that $(k,\alpha, \lambda_0,\varepsilon)$-RKRDP holds, i.e. the RKRD with $\lambda_0$ and any $\alpha$ is smaller than $\varepsilon$. This can be observed graphically; for $n = 600$ and $\alpha = 2, 6, 12$, the RKRD curves lie below the intersection of dash-dotted lines. In the bottom plots, the dash-dotted lines represent $\varepsilon$ and $\lambda_0 = \delta e^{-\varepsilon}$ for their corresponding $\varepsilon, \delta$ pairs (2, 0.2 and 3, 0.03, respectively). Additionally, we use dotted lines to show $\varepsilon$ and $\lambda_0$ for $\varepsilon = 1, \delta = 0.005$. Since in both cases the intersection of dotted lines lies well below the RKRD curves for $\alpha = 12$ (so much so that the statistical error is not an issue), we can reject the hypothesis that these two mechanisms satisfy $(\varepsilon,\delta)$-DP with $\varepsilon = 1, \delta = 0.005$. 

\textbf{Testing for $(\alpha,\varepsilon)$-RDP.} The dashed lines in the plots show the values of $\alpha$-RD for different $\alpha$, which admit the analytic expression $\frac{\alpha \Delta^2}{2 \sigma^2}$ for isotropic Gaussians \citep{gil2013renyi}. When the statistical error is small ($n$ high enough), we see that the RKRD curves lie below the corresponding RD line, as expected. This provides a way to test for $(\alpha,\varepsilon)$-RDP.

\section{Conclusion}
We introduced in this paper a new notion of differential privacy the regularized  kernel Rényi differential privacy based on the kernel Rényi divergence and showed that its estimation enjoys parametric rates and that it leads to a principled auditing of violations of differential privacy. Future work includes learning the kernel by taking the supremum of the regularized kernel Rényi divergence on a class of kernels or feature maps inducing a kernel, as well as exploring the link of the learned kernel with  adversaries with limited capacity  \citep{chaudhuri2019capacity}.




\bibliography{biblio}

\begin{thebibliography}{30}
\providecommand{\natexlab}[1]{#1}
\providecommand{\url}[1]{\texttt{#1}}
\expandafter\ifx\csname urlstyle\endcsname\relax
  \providecommand{\doi}[1]{doi: #1}\else
  \providecommand{\doi}{doi: \begingroup \urlstyle{rm}\Url}\fi

\bibitem[Abadi et~al.(2016)Abadi, Chu, Goodfellow, McMahan, Mironov, Talwar,
  and Zhang]{DP-SGD}
M.~Abadi, A.~Chu, I.~Goodfellow, H.~B. McMahan, I.~Mironov, K.~Talwar, and
  L.~Zhang.
\newblock Deep learning with differential privacy.
\newblock CCS '16, page 308–318, New York, NY, USA, 2016.

\bibitem[Bach(2022)]{bach2022information}
F.~Bach.
\newblock Information theory with kernel methods.
\newblock \emph{arXiv preprint arXiv:2202.08545v1}, 2022.

\bibitem[Balle and Wang(2018)]{balle2018improving}
B.~Balle and Y.-X. Wang.
\newblock Improving the {G}aussian mechanism for differential privacy:
  Analytical calibration and optimal denoising.
\newblock In \emph{Proceedings of the 35th International Conference on Machine
  Learning}, volume~80, pages 394--403. PMLR, 2018.

\bibitem[Beigi(2013)]{beigi2013sandwiched}
S.~Beigi.
\newblock Sandwiched rényi divergence satisfies data processing inequality.
\newblock \emph{Journal of Mathematical Physics}, 54, 06 2013.
\newblock \doi{10.1063/1.4838855}.

\bibitem[Chaudhuri et~al.(2019)Chaudhuri, Imola, and
  Machanavajjhala]{chaudhuri2019capacity}
K.~Chaudhuri, J.~Imola, and A.~Machanavajjhala.
\newblock Capacity bounded differential privacy.
\newblock In \emph{Advances in Neural Information Processing Systems},
  volume~32. Curran Associates, Inc., 2019.

\bibitem[Chefles et~al.(2004)Chefles, Jozsa, and
  Winter]{chefles2004ontheexistence}
A.~Chefles, R.~Jozsa, and A.~J. Winter.
\newblock On the existence of physical transformations between sets of quantum
  states.
\newblock \emph{International Journal of Quantum Information}, 02:\penalty0
  11--21, 2004.

\bibitem[Chen et~al.(2020)Chen, Yu, Zhang, and Fritz]{chen2020gan}
D.~Chen, N.~Yu, Y.~Zhang, and M.~Fritz.
\newblock Gan-leaks: A taxonomy of membership inference attacks against
  generative models.
\newblock In \emph{Proceedings of the 2020 ACM SIGSAC conference on computer
  and communications security}, pages 343--362, 2020.

\bibitem[Ding et~al.(2018)Ding, Wang, Wang, Zhang, and Kifer]{Ding2018}
Z.~Ding, Y.~Wang, G.~Wang, D.~Zhang, and D.~Kifer.
\newblock Detecting violations of differential privacy.
\newblock In \emph{Proceedings of the 2018 ACM SIGSAC Conference on Computer
  and Communications Security}, CCS '18, page 475–489, New York, NY, USA,
  2018. Association for Computing Machinery.

\bibitem[Dwork and Roth(2014)]{dwork2014thealgorithmic}
C.~Dwork and A.~Roth.
\newblock The algorithmic foundations of differential privacy.
\newblock \emph{Found. Trends Theor. Comput. Sci.}, 9\penalty0
  (3–4):\penalty0 211–407, 2014.

\bibitem[Dwork et~al.(2006{\natexlab{a}})Dwork, Kenthapadi, McSherry, Mironov,
  and Naor]{dwork2006ourdata}
C.~Dwork, K.~Kenthapadi, F.~McSherry, I.~Mironov, and M.~Naor.
\newblock Our data, ourselves: Privacy via distributed noise generation.
\newblock In \emph{Advances in Cryptology - EUROCRYPT 2006}, pages 486--503.
  Springer Berlin Heidelberg, 2006{\natexlab{a}}.

\bibitem[Dwork et~al.(2006{\natexlab{b}})Dwork, McSherry, Nissim, and
  Smith]{dwork2006calibrating}
C.~Dwork, F.~McSherry, K.~Nissim, and A.~Smith.
\newblock Calibrating noise to sensitivity in private data analysis.
\newblock In S.~Halevi and T.~Rabin, editors, \emph{Theory of Cryptography},
  pages 265--284, Berlin, Heidelberg, 2006{\natexlab{b}}. Springer Berlin
  Heidelberg.

\bibitem[Frank and Lieb(2013)]{frank2013monotonicity}
R.~Frank and E.~Lieb.
\newblock Monotonicity of a relative r\'enyi entropy.
\newblock \emph{Journal of Mathematical Physics}, 54, 06 2013.

\bibitem[Gil et~al.(2013)Gil, Alajaji, and Linder]{gil2013renyi}
M.~Gil, F.~Alajaji, and T.~Linder.
\newblock R\'enyi divergence measures for commonly used univariate continuous
  distributions.
\newblock \emph{Information Sciences}, 249:\penalty0 124--131, 2013.

\bibitem[Google(2019)]{TFPrivacy}
Google.
\newblock {TensorFlow Privacy}.
\newblock https://github.com/tensorflow/privacy, 2019.

\bibitem[Haney et~al.(2017)Haney, Machanavajjhala, Abowd, Graham, Kutzbach, and
  Vilhuber]{UScensus}
S.~Haney, A.~Machanavajjhala, J.~M. Abowd, M.~Graham, M.~Kutzbach, and
  L.~Vilhuber.
\newblock Utility cost of formal privacy for releasing national
  employer-employee statistics.
\newblock In \emph{Proceedings of the 2017 ACM International Conference on
  Management of Data}, SIGMOD '17, page 1339–1354, New York, NY, USA, 2017.
  Association for Computing Machinery.
\newblock ISBN 9781450341974.

\bibitem[Homer et~al.(2008)Homer, Szelinger, Redman, Duggan, Tembe, Muehling,
  Pearson, Stephan, Nelson, and Craig]{homer2008resolving}
N.~Homer, S.~Szelinger, M.~Redman, D.~Duggan, W.~Tembe, J.~Muehling, J.~V.
  Pearson, D.~A. Stephan, S.~F. Nelson, and D.~W. Craig.
\newblock Resolving individuals contributing trace amounts of {DNA} to highly
  complex mixtures using high-density {SNP} genotyping microarrays.
\newblock \emph{PLoS Genet}, 4\penalty0 (8):\penalty0 e1000167, 08 2008.

\bibitem[Jagielski et~al.(2020)Jagielski, Ullman, and Oprea]{JagielskiUO20}
M.~Jagielski, J.~R. Ullman, and A.~Oprea.
\newblock Auditing differentially private machine learning: How private is
  private sgd?
\newblock In \emph{NeurIPS}, 2020.

\bibitem[Jayaraman and Evans(2019)]{Jayaraman2019}
B.~Jayaraman and D.~Evans.
\newblock Evaluating differentially private machine learning in practice.
\newblock In \emph{28th USENIX Security Symposium (USENIX Security 19)}, pages
  1895--1912, Santa Clara, CA, Aug. 2019. USENIX Association.

\bibitem[Kifer and Machanavajjhala(2012)]{kifer2012arigorous}
D.~Kifer and A.~Machanavajjhala.
\newblock A rigorous and customizable framework for privacy.
\newblock In \emph{Proceedings of the 31st ACM SIGMOD-SIGACT-SIGAI Symposium on
  Principles of Database Systems}, PODS '12, page 77–88, New York, NY, USA,
  2012. Association for Computing Machinery.

\bibitem[Krishnamurthy et~al.(2014)Krishnamurthy, Kandasamy, Póczos, and
  Wasserman]{krishnamurty2014nonparametric}
A.~Krishnamurthy, K.~Kandasamy, B.~Póczos, and L.~A. Wasserman.
\newblock Nonparametric estimation of {Renyi} divergence and friends.
\newblock In \emph{Proceedings of the 31st International Conference on Machine
  Learning}, pages 919--927, 2014.

\bibitem[Minsker(2017)]{minsker2017onsome}
S.~Minsker.
\newblock On some extensions of bernstein’s inequality for self-adjoint
  operators.
\newblock \emph{Statistics \& Probability Letters}, 127:\penalty0 111--119,
  2017.

\bibitem[Mironov(2017)]{mironov2017renyi}
I.~Mironov.
\newblock Rényi differential privacy.
\newblock \emph{2017 IEEE 30th Computer Security Foundations Symposium (CSF)},
  Aug 2017.
\newblock \doi{10.1109/csf.2017.11}.

\bibitem[{M{\"u}ller-Lennert} et~al.(2013){M{\"u}ller-Lennert}, {Dupuis},
  {Szehr}, {Fehr}, and {Tomamichel}]{mullerlennert2013onquantum}
M.~{M{\"u}ller-Lennert}, F.~{Dupuis}, O.~{Szehr}, S.~{Fehr}, and
  M.~{Tomamichel}.
\newblock On quantum {R\'e}nyi entropies: A new generalization and some
  properties.
\newblock \emph{Journal of Mathematical Physics}, 54\penalty0 (12):\penalty0
  122203--122203, 2013.

\bibitem[Rahimian et~al.(2021)Rahimian, Orekondy, and
  Fritz]{Rahimian2021DifferentialPD}
S.~Rahimian, T.~Orekondy, and M.~Fritz.
\newblock Differential privacy defenses and sampling attacks for membership
  inference.
\newblock \emph{Proceedings of the 14th ACM Workshop on Artificial Intelligence
  and Security}, 2021.

\bibitem[R{\'e}nyi(1961)]{renyi1961onmeasures}
A.~R{\'e}nyi.
\newblock On measures of entropy and information.
\newblock In \emph{Proceedings of the fourth Berkeley symposium on mathematical
  statistics and probability}, volume~1, pages 547--561, 1961.

\bibitem[Sablayrolles et~al.(2019)Sablayrolles, Douze, Schmid, Ollivier, and
  J{\'e}gou]{sablayrolles2019white}
A.~Sablayrolles, M.~Douze, C.~Schmid, Y.~Ollivier, and H.~J{\'e}gou.
\newblock White-box vs black-box: Bayes optimal strategies for membership
  inference.
\newblock In \emph{International Conference on Machine Learning}, pages
  5558--5567. PMLR, 2019.

\bibitem[Shokri et~al.(2017)Shokri, Stronati, Song, and
  Shmatikov]{shokri2017membership}
R.~Shokri, M.~Stronati, C.~Song, and V.~Shmatikov.
\newblock Membership inference attacks against machine learning models.
\newblock In \emph{2017 IEEE symposium on security and privacy (SP)}, pages
  3--18. IEEE, 2017.

\bibitem[Yeom et~al.(2018)Yeom, Giacomelli, Fredrikson, and
  Jha]{yeom2018privacy}
S.~Yeom, I.~Giacomelli, M.~Fredrikson, and S.~Jha.
\newblock Privacy risk in machine learning: Analyzing the connection to
  overfitting.
\newblock In \emph{2018 IEEE 31st computer security foundations symposium
  (CSF)}, pages 268--282. IEEE, 2018.

\bibitem[Yousefpour et~al.(2021)Yousefpour, Shilov, Sablayrolles, Testuggine,
  Prasad, Malek, Nguyen, Ghosh, Bharadwaj, Zhao, Cormode, and Mironov]{opacus}
A.~Yousefpour, I.~Shilov, A.~Sablayrolles, D.~Testuggine, K.~Prasad, M.~Malek,
  J.~Nguyen, S.~Ghosh, A.~Bharadwaj, J.~Zhao, G.~Cormode, and I.~Mironov.
\newblock Opacus: {U}ser-friendly differential privacy library in {PyTorch}.
\newblock \emph{arXiv preprint arXiv:2109.12298}, 2021.

\bibitem[Zhao and Lai(2020)]{zhao2020minimax}
P.~Zhao and L.~Lai.
\newblock Minimax optimal estimation of {KL} divergence for continuous
  distributions.
\newblock \emph{IEEE Transactions on Information Theory}, 66\penalty0
  (12):\penalty0 7787--7811, 2020.

\end{thebibliography}


\appendix

\tableofcontents

\addtocontents{toc}{\protect\setcounter{tocdepth}{2}}

\section{Proofs of \autoref{sec:framework}} \label{sec:proofs_framework}

\textit{\textbf{Proof of \autoref{prop:prop_regularized}.}}
Properties 1, 2, 3 and 4 are immediate given the corresponding properties for the quantum Rényi divergence (\cite{mullerlennert2013onquantum}, Theorem 2).
To show 5, remark that $B \otimes D + \lambda \mathrm{Id}_{\mathcal{H}_1 \otimes \mathcal{H}_2} \leq B \otimes D + \sqrt{\lambda} B \otimes \mathrm{Id}_{\mathcal{H}_1} + \sqrt{\lambda} \mathrm{Id}_{\mathcal{H}_2} \otimes D + \lambda \mathrm{Id}_{\mathcal{H}_1 \otimes \mathcal{H}_2} = (B + \sqrt{\lambda} \mathrm{Id}_{\mathcal{H}_1}) \otimes (D + \sqrt{\lambda} \mathrm{Id}_{\mathcal{H}_2})$. Hence, applying \autoref{prop:4_lennert}, we obtain that
\begin{align}
    D_{\alpha}(A \otimes C||B \otimes D + \lambda \mathrm{Id}) &\geq D_{\alpha}(A \otimes C||(B + \sqrt{\lambda} \mathrm{Id}_{\mathcal{H}_1}) \otimes (D + \sqrt{\lambda} \mathrm{Id}_{\mathcal{H}_2})) \\ &= D_{\alpha}(A||B + \sqrt{\lambda} \mathrm{Id}_{\mathcal{H}_1}) + D_{\alpha}(C||D + \sqrt{\lambda} \mathrm{Id}_{\mathcal{H}_2}).
\end{align}
The last equality holds because of the additivity property of the quantum Rényi divergence (\cite{mullerlennert2013onquantum}, Theorem 2).
To show the other inequality, we write 
\begin{align}
\left(B + \frac{\lambda}{3} \mathrm{Id} \right) \otimes \left(D + \frac{\lambda}{3} \mathrm{Id} \right) = B \otimes D + \frac{\lambda}{3} \mathrm{Id} \otimes D + \frac{\lambda}{3} B \otimes \mathrm{Id} + \frac{\lambda^2}{9} \mathrm{Id} \otimes \mathrm{Id} \leq B \otimes D + \lambda \mathrm{Id} \otimes \mathrm{Id},
\end{align}
where we used that $B, D\leq \mathrm{Id}$, 
and that $\frac{\lambda^2}{9} \leq \frac{\lambda}{3}$ because $\lambda \leq 3$.
To show 6, we use that $B \oplus D + \lambda \mathrm{Id}_{\mathcal{H}_1 \oplus \mathcal{H}_2} = (B + \lambda \mathrm{Id}_{\mathcal{H}_1}) \oplus (D + \lambda \mathrm{Id}_{\mathcal{H}_2})$, which implies that
\begin{align}
    &D_{\alpha}(A \oplus C||B \oplus D + \lambda \mathrm{Id}_{\mathcal{H}_1 \oplus \mathcal{H}_2}) = D_{\alpha}(A \oplus C||(B + \lambda \mathrm{Id}_{\mathcal{H}_1}) \oplus (D + \lambda \mathrm{Id}_{\mathcal{H}_2})) \\ &= g^{-1} \bigg( \frac{\mathrm{tr}[A]}{\mathrm{tr}[A+C]} g(D_{\alpha}(A||B + \lambda \mathrm{Id}_{\mathcal{H}_1})) + \frac{\mathrm{tr}[C]}{\mathrm{tr}[A+C]} g(D_{\alpha}(C||D + \lambda \mathrm{Id}_{\mathcal{H}_2})) \bigg), 
\end{align}
where second inequality holds by the mean property of the quantum Rényi divergence.
\qed

\textit{\textbf{Proof of \autoref{prop:comparison_renyi}.}}
To prove $D_{\alpha,\lambda_1}(\Sigma_p||\Sigma_q) \geq D_{\alpha,\lambda_2}(\Sigma_p||\Sigma_q)$, we use that $\Sigma_q + \lambda_1 \mathrm{Id} \geq \Sigma_q + \lambda_2 \mathrm{Id}$, which means that $D_{\alpha,\lambda_1}(\Sigma_p||\Sigma_q) = D_{\alpha}(\Sigma_p||\Sigma_q + \lambda_1 \mathrm{Id}) \leq D_{\alpha}(\Sigma_p||\Sigma_q + \lambda_2 \mathrm{Id}) = D_{\alpha,\lambda_2}(\Sigma_p||\Sigma_q)$, where we used \autoref{prop:4_lennert} and that $\alpha \in [1/2,+\infty)$. 

To prove $D_{\alpha}(p||q) \geq D_{\alpha,\lambda}(\Sigma_p||\Sigma_q)$, we use that for $\alpha \in [1/2,1)\cup(1,+\infty)$ by \autoref{lem:frank_lieb_lemma}(ii), $(A,B) \mapsto \exp((\alpha-1)D_{\alpha}(A||B))$ is convex when $A$ has a fixed trace and thus,
\begin{align}
    &\exp((\alpha-1)D_{\alpha}(\Sigma_p || \Sigma_q + \lambda \mathrm{Id})) \\
    &= \exp\bigg((\alpha-1)D_{\alpha}\bigg(\int_{\mathcal{R}} \phi(x) \phi^{*}(x) \frac{dp}{dq}(x) \, dq(x) \bigg|\bigg| \int_{\mathcal{R}} \phi(x) \phi^{*}(x) \, dq(x) + \lambda \mathrm{Id} \bigg) \bigg) \\ &\leq \int_{\mathcal{R}} \exp\bigg((\alpha-1)D_{\alpha}\bigg(\phi(x) \phi^{*}(x) \frac{dp}{dq}(x) \bigg|\bigg| \phi(x) \phi^{*}(x) + \lambda \mathrm{Id} \bigg) \bigg) \, dp(x) 
    \\ &\leq \int_{\mathcal{R}} \exp\bigg((\alpha-1)D_{\alpha}\bigg(\phi(x) \phi^{*}(x) \frac{dp}{dq}(x) \bigg|\bigg| \phi(x) \phi^{*}(x) \bigg) \bigg) \, dp(x)
    \\ &= \int_{\mathcal{R}} \exp\bigg((\alpha-1) \log\left(\frac{dp}{dq}(x) \right) \bigg) \, dp(x) = \int_{\mathcal{R}} \left(\frac{dp}{dq}(x) \right)^{\alpha-1} \, dp(x) \\ &= \int_{\mathcal{R}} \left(\frac{dp}{dq}(x) \right)^{\alpha} \, dq(x) = \exp((\alpha-1)D_{\alpha}(p||q)).
\end{align}
The second inequality holds by \autoref{prop:4_lennert}.
We have also used that by definition, $D_{\alpha}(\phi(x) \phi^{*}(x) \frac{dp}{dq}(x) || \phi(x) \phi^{*}(x) )$ is equal to
\begin{align}
 &\frac{\log \left( \left( \frac{dp}{dq}(x) \right)^{-1} \mathrm{tr}[\phi(x) \phi(x)^{*}]^{-1} \bigg(\frac{dp}{dq}(x) \bigg)^{\alpha} \mathrm{tr}\bigg[ \left( (\phi(x) \phi^{*}(x))^{\frac{1-\alpha}{2\alpha}} \phi(x) \phi^{*}(x) (\phi(x) \phi^{*}(x))^{\frac{1-\alpha}{2\alpha}} \right)^\alpha \bigg] \right)}{\alpha-1} \\ &= \frac{1}{\alpha-1} \log\left(\frac{dp}{dq}(x) \right)^{\alpha-1} + D_{\alpha}(\phi(x) \phi^{*}(x) || \phi(x) \phi^{*}(x)) = \log\left(\frac{dp}{dq}(x) \right).
\end{align}
To prove that $D_{\alpha_2,\lambda}(\Sigma_p||\Sigma_q) \leq D_{\alpha_1,\lambda}(\Sigma_p||\Sigma_q)$ for $\alpha_1 \geq \alpha_2$, we use \autoref{lem:monotonicity_alpha}, which implies that when $\alpha_1 \geq \alpha_2$, $D_{\alpha_1}(\Sigma_{f(D)}||\Sigma_{f(D')} + \lambda \mathrm{Id}) \geq D_{\alpha_2}(\Sigma_{f(D)}||\Sigma_{f(D')} + \lambda \mathrm{Id})$. 
\qed

\begin{lemma}[\cite{mullerlennert2013onquantum}, Proposition 4] \label{prop:4_lennert}
Let $A \neq 0$ and $B \geq B' \geq 0$. Then, for $\alpha \in [1/2,1) \cup (1,+\infty)$, we have $D_{\alpha}(A || B) \leq D_{\alpha}(A || B')$.
\end{lemma}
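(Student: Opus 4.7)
The plan is to reduce the claim to monotonicity of $Q_\alpha(A\|B) := \mathrm{tr}[(B^{(1-\alpha)/(2\alpha)} A B^{(1-\alpha)/(2\alpha)})^\alpha]$ in its second argument, since $D_\alpha(A\|B) = (\alpha-1)^{-1} \log(Q_\alpha(A\|B)/\mathrm{tr}(A))$ and the direction of monotonicity of $Q_\alpha$ will conspire with the sign of $\alpha-1$ to yield the stated inequality both when $\alpha > 1$ and when $\alpha \in [1/2,1)$.

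First I would rewrite $Q_\alpha$ via the elementary spectral identity
\begin{equation}
\mathrm{tr}\left[(B^s A B^s)^\alpha\right] = \mathrm{tr}\left[(A^{1/2} B^{2s} A^{1/2})^\alpha\right],
\end{equation}
which holds because $B^s A B^s = (A^{1/2} B^s)^{*}(A^{1/2} B^s)$ and $A^{1/2} B^{2s} A^{1/2} = (A^{1/2} B^s)(A^{1/2} B^s)^{*}$ share the same nonzero spectrum. Taking $s = (1-\alpha)/(2\alpha)$ this gives $Q_\alpha(A\|B) = \mathrm{tr}[(A^{1/2} B^{(1-\alpha)/\alpha} A^{1/2})^\alpha]$. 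For $\alpha > 1$ the exponent $(1-\alpha)/\alpha$ lies in $(-1,0)$, and L\"owner's theorem (operator anti-monotonicity of $x\mapsto x^p$ on $(0,\infty)$ for $p \in [-1,0]$) yields $B^{(1-\alpha)/\alpha} \leq (B')^{(1-\alpha)/\alpha}$; conjugating by $A^{1/2}$ gives $A^{1/2} B^{(1-\alpha)/\alpha} A^{1/2} \leq A^{1/2} (B')^{(1-\alpha)/\alpha} A^{1/2}$, and then Weyl monotonicity of the spectrum gives $\mathrm{tr}[X^\alpha] \leq \mathrm{tr}[Y^\alpha]$ whenever $0 \leq X \leq Y$ and $\alpha > 0$, so $Q_\alpha(A\|B) \leq Q_\alpha(A\|B')$; dividing by $\alpha-1 > 0$ after $\log$ preserves the inequality. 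For $\alpha \in [1/2,1)$ the exponent lies instead in $(0,1]$, where $x \mapsto x^p$ is itself operator monotone, so the operator inequality flips to $B^{(1-\alpha)/\alpha} \geq (B')^{(1-\alpha)/\alpha}$, $Q_\alpha(A\|B) \geq Q_\alpha(A\|B')$, and dividing by $\alpha-1 < 0$ flips it back to the desired direction.

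Observe that the hypothesis $\alpha \geq 1/2$ is used precisely to ensure $|(1-\alpha)/\alpha| \leq 1$, which is the range where L\"owner's theorem applies to the power function. The main obstacle is the degenerate case where $B'$ has nontrivial kernel, so that inverse fractional powers are ill-defined and $Q_\alpha(A\|B')$ may be $+\infty$; the inequality is then trivial unless $D_\alpha(A\|B')$ is already finite, i.e.\ $B' \gg A$ holds. I would handle finiteness uniformly by a limiting argument, applying the inequality to the strictly positive perturbations $B + \epsilon\,\mathrm{Id} \geq B' + \epsilon\,\mathrm{Id} > 0$ and sending $\epsilon \to 0^+$, invoking property 1 of \autoref{prop:prop_regularized} (continuity of $D_\alpha$ on the domain where kernel containment holds).
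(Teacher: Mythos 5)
The paper itself offers no proof of this lemma; it simply cites Proposition~4 of \cite{mullerlennert2013onquantum}, so there is no in-paper argument to compare against. Your proof is nevertheless correct and is essentially the standard one for this monotonicity-in-the-second-argument statement: rewrite $Q_\alpha(A\|B) := \mathrm{tr}\bigl[\bigl(B^{(1-\alpha)/(2\alpha)}\,A\,B^{(1-\alpha)/(2\alpha)}\bigr)^\alpha\bigr]$ via the $XX^{*}$--$X^{*}X$ spectral identity as $\mathrm{tr}\bigl[\bigl(A^{1/2}\,B^{(1-\alpha)/\alpha}\,A^{1/2}\bigr)^\alpha\bigr]$, apply L\"owner's theorem to $x\mapsto x^{(1-\alpha)/\alpha}$ (operator anti-monotone for $\alpha>1$, operator monotone for $\alpha\in[1/2,1)$, and the constraint $\alpha\geq 1/2$ is exactly what keeps $|(1-\alpha)/\alpha|\leq 1$), conjugate by $A^{1/2}$, and finish with Weyl's eigenvalue monotonicity, with the sign of $\alpha-1$ in the prefactor $1/(\alpha-1)$ compensating the flip in the monotone/anti-monotone dichotomy. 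Your degenerate-kernel handling is also fine; one could shorten it by observing that $B\geq B'\geq 0$ already forces $\ker B\subseteq\ker B'$, so if $D_\alpha(A\|B')=+\infty$ (only possible for $\alpha>1$) the claimed inequality is vacuous, while if $B'\gg A$ the same holds for $B$ and the regularized limit $\epsilon\to 0^{+}$ is a genuine continuity statement on both sides. The only caveat worth flagging for the infinite-dimensional case handled in this paper is that one should verify the relevant Schatten-class membership so that the traces of $\alpha$-th powers are finite, but under the paper's standing assumption $\mathrm{tr}(A)<\infty$ with $B$ bounded this is routine.
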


\begin{lemma}[Monotonicity in $\alpha$, \cite{mullerlennert2013onquantum},Theorem 7] \label{lem:monotonicity_alpha}
Let $A, B \geq 0$ with $A \neq 0$. Then, $\alpha \mapsto D_{\alpha}(A || B)$ is monotonically increasing for $\alpha \in (0,1)\cup(1,+\infty)$.
\end{lemma}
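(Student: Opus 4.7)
The statement claims monotonicity of $\alpha \mapsto D_{\alpha}(A\|B)$ on $(0,1)\cup(1,+\infty)$. Writing $\psi(\alpha) := \log \mathrm{tr}[(B^{(1-\alpha)/(2\alpha)} A B^{(1-\alpha)/(2\alpha)})^{\alpha}] - \log \mathrm{tr}[A]$ so that $D_{\alpha}(A\|B) = \psi(\alpha)/(\alpha - 1)$, my approach is to reduce monotonicity to convexity of $\psi$ and then prove that convexity by complex interpolation, following the route used in the original sandwiched-R\'enyi literature.

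First I would verify $\psi(1) = 0$ by direct substitution: at $\alpha = 1$ the exponent $(1-\alpha)/(2\alpha)$ vanishes, so the operator in the bracket reduces to $A$ and its trace is $\mathrm{tr}[A]$. The chord-slope characterization of convex functions then applies — if $\psi$ is convex on an open interval containing $1$ and $\psi(1) = 0$, the function $\alpha \mapsto \psi(\alpha)/(\alpha - 1)$ is non-decreasing on each of $(0,1)$ and $(1,+\infty)$ — reducing the lemma to proving $\psi$ is convex on $(0,+\infty)$. For that convexity I would reparametrize by $z = 1/\alpha$, which turns $\psi(\alpha)$ into $z \log \|B^{(z-1)/2} A B^{(z-1)/2}\|_{1/z} - \log \mathrm{tr}[A]$, where $\|\cdot\|_{p}$ is the Schatten $p$-norm. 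One then extends the expression $z \mapsto \mathrm{tr}[(B^{(z-1)/2} A B^{(z-1)/2})^{1/z}]$ to a complex strip containing the positive real axis via the holomorphic functional calculus for the positive operator $B$, and applies the Hadamard three-lines theorem together with Riesz-Thorin-type interpolation of Schatten norms to deduce log-convexity in the real parameter $z$; this transfers back to convexity of $\psi$ in $\alpha$ via the change of variables, giving exactly the property needed.

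\textbf{Main obstacle.} The hard part is the operator-analytic content of the interpolation step: one must justify a consistent holomorphic extension of the operator powers $B^{s}$ for complex $s$ (choosing branches compatibly with the positive real axis, and handling any kernel of $B$ using the convention $B + \lambda \mathrm{Id}$ with $\lambda \downarrow 0$ or restricting to the domination regime $B \gg A$), verify the growth bounds on vertical lines that the three-lines lemma requires, and track Schatten-$p$-norm interpolation with the varying exponent $1/z$. A subtlety is that the two regimes $\alpha \in (0,1)$ and $\alpha > 1$ correspond to $z > 1$ and $z < 1$ respectively, and the sign of the exponent $(1-\alpha)/(2\alpha)$ flips at $\alpha = 1$; however, once a holomorphic extension is set up on a strip containing the whole positive real axis, a single convexity statement for $\psi$ handles both regimes uniformly, with the endpoint $\alpha = 1$ recovered as a continuous limit. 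An elementary alternative avoiding complex analysis is a direct two-point comparison via the Araki-Lieb-Thirring inequality, at the cost of heavier exponent bookkeeping separating the two regimes.
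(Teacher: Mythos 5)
The paper does not prove this lemma; it is cited as Theorem~7 of \cite{mullerlennert2013onquantum}, so there is no in-paper proof to compare against. Evaluating your sketch on its own terms:

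Your chord--slope reduction is clean and correct: writing $\psi(\alpha) = (\alpha-1)D_\alpha(A\|B)$, verifying $\psi(1)=0$, and noting that a convex $\psi$ makes the secant slope $\psi(\alpha)/(\alpha-1)$ non-decreasing would indeed give the statement. The convexity of $\psi$ is in fact a known property of the sandwiched R\'enyi divergence (see, e.g., Mosonyi--Ogawa, who obtain it by representing $\tilde D_\alpha$ as a limit of pinched classical R\'enyi divergences, for which the classical H\"older argument gives convexity of $\psi$, and limits preserve convexity). So the target lemma you need is true --- but your proposed route to it breaks down.

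The main gap is in the ``transfers back'' step. You reparametrize $z = 1/\alpha$, propose to show log-convexity of the resulting expression as a function of $z$, and then claim this gives convexity of $\psi$ in $\alpha$ ``via the change of variables.'' That inference is invalid: convexity is not preserved under the non-affine substitution $\alpha \mapsto 1/\alpha$. For a trivial counterexample, $\tilde\psi(z) = -z$ is convex (linear), yet $\tilde\psi(1/\alpha) = -1/\alpha$ is strictly concave on $(0,\infty)$. So even if you succeeded in establishing log-convexity in $z$ by Hadamard three-lines plus Schatten-norm interpolation, you would not have convexity of $\psi$ in $\alpha$, which is what the chord--slope argument needs. (There is also a minor slip: since $\mathrm{tr}[X^{1/z}] = \|X\|_{1/z}^{1/z}$, the reparametrized expression is $(1/z)\log\|B^{(z-1)/2} A B^{(z-1)/2}\|_{1/z}$, not $z\log\|\cdot\|_{1/z}$.) Additionally, the complex-interpolation machinery in the sandwiched-R\'enyi literature (Beigi; Frank--Lieb) is deployed to prove the \emph{data processing} inequality under CPTP maps, not monotonicity in $\alpha$; attributing your strategy to that route is a mischaracterization. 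A working proof along convexity lines must establish convexity of $\psi$ \emph{directly in} $\alpha$ (pinching-limit or variational formula), or else take the Araki--Lieb--Thirring two-point comparison you mention as an alternative but do not develop.
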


\begin{lemma} [\cite{frank2013monotonicity}, Prop. 3] \label{lem:frank_lieb_lemma}
(i) The following map on pairs of positive operators 
\begin{align}
    (A,B) \mapsto \mathrm{tr}\left[ \left( B^{\frac{1-\alpha}{2\alpha}} A B^{\frac{1-\alpha}{2\alpha}} \right)^{\alpha} \right]
\end{align}
is jointly concave for $1/2 \leq \alpha < 1$ and jointly convex for $\alpha > 1$. 
(ii) This implies that on the set of pairs of positive operators $(A,B)$ such that $\mathrm{tr}(A)$ has a fixed value, $(A,B) \mapsto \exp((\alpha-1)D_{\alpha}(A||B))$ is jointly convex for any $\alpha \in [1/2,1)\cup(1,+\infty)$.
\end{lemma}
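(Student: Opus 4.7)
The plan is to take part (ii) as a direct corollary of part (i), so the real work is establishing joint concavity/convexity of the sandwiched trace functional
$$Q_\alpha(A,B) := \mathrm{tr}\!\left[\left(B^{(1-\alpha)/(2\alpha)} A B^{(1-\alpha)/(2\alpha)}\right)^{\alpha}\right]$$
in part (i). My approach would follow the Frank-Lieb / Hiai variational strategy: rewrite $Q_\alpha$ as an extremum of functionals that are jointly linear, or at least jointly concave/convex in $(A,B)$ for elementary reasons, so that the sign of the extremum automatically propagates. A supremum of jointly affine maps is convex; an infimum is concave.

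For $\alpha \in [1/2,1)$, I would combine Lieb's concavity theorem --- joint concavity of $(A,B) \mapsto \mathrm{tr}[K^{*} A^{p} K B^{q}]$ for $p,q \geq 0$, $p+q \leq 1$ --- with the Balakrishnan integral representation of fractional powers,
$$t^{\alpha} = \frac{\sin(\pi\alpha)}{\pi} \int_0^\infty s^{\alpha-1} \frac{t}{t+s} \, ds \qquad (0<\alpha<1),$$
which, applied to the operator $K = B^{(1-\alpha)/(2\alpha)} A B^{(1-\alpha)/(2\alpha)}$, rewrites $\mathrm{tr}[K^{\alpha}]$ as an integral over resolvent expressions that are jointly concave in $(A,B)$ by the Lieb-Ando machinery; concavity then survives the integration. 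For $\alpha>1$ I would instead use the dual Legendre representation $x^{\alpha} = \sup_{y > 0}\{\alpha y^{\alpha-1} x - (\alpha-1) y^{\alpha}\}$, which expresses $\mathrm{tr}[K^{\alpha}]$ as a supremum of expressions affine in $K$, and combine with Lieb/Ando concavity of the corresponding trace inputs to obtain joint convexity of $Q_\alpha$. The range $\alpha>2$ can be reduced to $(1,2]$ by iteration together with operator convexity of $t \mapsto t^{\alpha}$ on $[1,2]$.

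Part (ii) is then immediate by unwinding definitions:
$$\exp\!\big((\alpha-1) D_\alpha(A\|B)\big) = \frac{Q_\alpha(A,B)}{\mathrm{tr}[A]},$$
so on the slice $\{\mathrm{tr}[A] = c\}$ it equals $c^{-1} Q_\alpha(A,B)$ and the claim transfers from part (i) as a positive scalar multiple. The main obstacle is squarely inside part (i), specifically the range $\alpha \in [1/2,1)$: the sandwiched exponents $(1-\alpha)/(2\alpha)$ fall in a regime where Lieb's theorem does not apply in one step, and the Frank-Lieb proof there invokes either Stein-Hirschman complex interpolation or a delicate Fenchel-type dualization to close the gap. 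This is the technical heart of the argument and where a self-contained rewrite would need to spend the bulk of its effort.
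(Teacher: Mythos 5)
The paper does not actually prove this lemma: part (i) is imported verbatim as Proposition~3 of \cite{frank2013monotonicity}, and part (ii) is asserted as an immediate consequence. So the honest comparison point is your one-line derivation of (ii), which is indeed the same reduction the paper has in mind. But that reduction does not deliver the statement as written. You correctly identify
\begin{align}
\exp\big((\alpha-1)D_{\alpha}(A\|B)\big)=\frac{1}{\mathrm{tr}[A]}\,\mathrm{tr}\Big[\big(B^{\frac{1-\alpha}{2\alpha}}AB^{\frac{1-\alpha}{2\alpha}}\big)^{\alpha}\Big],
\end{align}
so on the slice $\{\mathrm{tr}[A]=c\}$ this is $c^{-1}Q_{\alpha}(A,B)$; a positive scalar multiple preserves whatever part (i) gives. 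For $\alpha>1$ that is joint convexity, as claimed. For $\alpha\in[1/2,1)$, however, part (i) gives joint \emph{concavity} of $Q_{\alpha}$, hence joint concavity of $\exp((\alpha-1)D_{\alpha}(\cdot\|\cdot))$ — not the joint convexity asserted in (ii). Your phrase ``the claim transfers from part (i)'' papers over this sign issue; what actually transfers is concavity. The stated part (ii) is therefore false for $\alpha\in[1/2,1)$ and you should have flagged it. (The paper's downstream uses, e.g.\ in Proposition~2, still go through on that range because the Jensen inequality and the decreasing map $t\mapsto\frac{1}{\alpha-1}\log t$ both flip direction; but the lemma as stated does not.)

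On part (i) itself: your outline for $\alpha>1$ (Legendre representation $x^{\alpha}=\sup_{y>0}\{\alpha y^{\alpha-1}x-(\alpha-1)y^{\alpha}\}$ plus Lieb--Ando joint convexity of $(A,B)\mapsto\mathrm{tr}[K^{*}AKB^{(1-\alpha)/\alpha}]$, noting $1+\frac{1-\alpha}{\alpha}=\frac{1}{\alpha}\le 1$) is essentially the Frank--Lieb route and is plausible, though the proposed reduction of $\alpha>2$ to $(1,2]$ ``by iteration with operator convexity of $t\mapsto t^{\alpha}$'' is not needed and does not work as stated ($t^{\alpha}$ is operator convex only for $\alpha\in[1,2]$; the variational argument covers all $\alpha>1$ at once). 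For $\alpha\in[1/2,1)$ you explicitly concede that Lieb's theorem does not apply in one step and that the gap would have to be closed by interpolation or dualization — so, as a proof, this case is left open. Since the paper simply cites the result, citing it is acceptable here; but then the only content you owe is the (ii) implication, and that is exactly where the sign error sits.
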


\textit{\textbf{Proof of \autoref{prop:krdp_implications}.}}
To prove 1, we use that if $+\infty \geq \lambda_1 \geq \lambda_2 \geq 0$, $D_{\alpha,\lambda_1}(\Sigma_p||\Sigma_q) \leq D_{\alpha,\lambda_2}(\Sigma_p||\Sigma_q)$ by \autoref{prop:comparison_renyi}.

To prove 2, we use that if $\alpha_1 \geq \alpha_2$, $D_{\alpha_1,\lambda}(\Sigma_{f(D)}||\Sigma_{f(D')}) \geq D_{\alpha_2,\lambda}(\Sigma_{f(D)}||\Sigma_{f(D')})$ by \autoref{prop:comparison_renyi}. 

To prove 3, we use that for $\alpha \in [1/2,1)\cup(1,+\infty)$, $D_{\alpha,\lambda}(\Sigma_p || \Sigma_q) \leq D_{\alpha}(p||q)$ by \autoref{prop:comparison_renyi}.

To prove 4, we focus on the case $J=2$, as the general case follows from finite induction. Suppose that $\phi_1 : \mathcal{R} \to \mathcal{H}_1$, $\phi_2 : \mathcal{R} \to \mathcal{H}_2$ are the feature maps for $k_1$ and $k_2$. Then, the kernel $k$ corresponds to the RKHS $\mathcal{H} = \mathcal{H}_1 \otimes \mathcal{H}_2$, and it has feature map $\phi = \phi_1 \otimes \phi_2 : \mathcal{R} \to \mathcal{H}$ defined as $\phi(x) = \phi_1(x) \otimes \phi_2(x)$. Thus, for any probability measure $p$, we have
\begin{align} \label{eq:sigma_H_sigma_H12}
\Sigma^{\mathcal{H}}_{p} &:= \int_{\mathcal{R}} (\phi_1(x) \otimes \phi_2(x)) (\phi_1(x) \otimes \phi_2(x))^{*} \, dp(x) = \int_{\mathcal{R}} \phi_1(x) \phi_1(x)^{*} \otimes \phi_2(x) \phi_2(x)^{*} \, dp(x) \\ &= \bigg(\int_{\mathcal{R}} \phi_1(x) \phi_1(x)^{*} \, dp(x) \bigg) \otimes \bigg(\int_{\mathcal{R}} \phi_2(x) \phi_2(x)^{*} \, dp(x) \bigg) := \Sigma^{\mathcal{H}_1}_{p} \otimes \Sigma^{\mathcal{H}_2}_{p}.
\end{align}
By the additivity property 5 of the regularized quantum Rényi divergence (\autoref{prop:prop_regularized}), we obtain that
\begin{align} \label{eq:additivity_QRD}
    D_{\alpha,\lambda}(\Sigma^{\mathcal{H}}_{p} || \Sigma^{\mathcal{H}}_{q}) = D_{\alpha,\lambda}(\Sigma^{\mathcal{H}_1}_{p} \otimes \Sigma^{\mathcal{H}_2}_{p} || \Sigma^{\mathcal{H}_1}_{q} \otimes \Sigma^{\mathcal{H}_2}_{q}) \geq D_{\alpha,\sqrt{\lambda}}(\Sigma^{\mathcal{H}_1}_{p} || \Sigma^{\mathcal{H}_1}_{q}) + D_{\alpha,\sqrt{\lambda}}(\Sigma,\lambda^{\mathcal{H}_2}_{p} || \Sigma^{\mathcal{H}_2}_{q}).
\end{align}
\qed

\section{Proofs of \autoref{sec:properties}} \label{sec:proofs_properties}

\textbf{\textit{Proof of \autoref{prop:convexity}.}} 
For arbitrary adjacent $D, D' \in \mathcal{D}$, let $p = \mathcal{L}(f(D))$, $p' = \mathcal{L}(g(D))$, $q = \mathcal{L}(f(D'))$, $q' = \mathcal{L}(g(D'))$.
We rely on \autoref{lem:frank_lieb_lemma}(ii), which states that
\begin{align}
    \beta \mapsto \exp((\alpha - 1) D_{\alpha}(\beta \Sigma_{p'} + (1-\beta) \Sigma_{p}||\beta \Sigma_{q'} + (1-\beta) \Sigma_{q} + \lambda \mathrm{Id}))
\end{align}
is a convex function for $\alpha \in [1/2,1)\cup(1,+\infty)$. Hence, for all $\beta \in [0,1]$,
\begin{align}
    &\exp((\alpha - 1) D_{\alpha}(\beta \Sigma_{p'} + (1-\beta) \Sigma_{p}||\beta \Sigma_{q'} + (1-\beta) \Sigma_{q} + \lambda \mathrm{Id})) \\ &\leq \beta \exp((\alpha - 1) D_{\alpha}(\Sigma_{p'}||\Sigma_{q'} + \lambda \mathrm{Id})) + (1-\beta)\exp((\alpha - 1) D_{\alpha}(\Sigma_{p}||\Sigma_{q} + \lambda \mathrm{Id})) \\ &\leq \exp((\alpha-1) \varepsilon).
\end{align}
Since 
\begin{align}
\Sigma_{\mathcal{L}(h(D))} &= \int_{\mathcal{R}} \phi(x) \phi(x)^{*} \, d\mathcal{L}(h(D))(x) \\ &= \beta \int_{\mathcal{R}} \phi(x) \phi(x)^{*} \, d\mathcal{L}(f(D))(x) + (1-\beta) \int_{\mathcal{R}} \phi(x) \phi(x)^{*} \, d\mathcal{L}(g(D))(x) \\ &= \beta \Sigma_{\mathcal{L}(f(D))} + (1-\beta) \Sigma_{\mathcal{L}(g(D))},
\end{align}
we obtain that $\exp((\alpha - 1) D_{\alpha,\lambda}(\Sigma_{\mathcal{L}(h(D))}||\Sigma_{\mathcal{L}(h(D'))} + \lambda \mathrm{Id})) \leq \exp((\alpha-1) \varepsilon)$, which implies that $D_{\alpha,\lambda}(\Sigma_{\mathcal{L}(h(D))}||\Sigma_{\mathcal{L}(h(D'))} + \lambda \mathrm{Id}) \leq \varepsilon$ when $\alpha > 1$.
\qed

\subsection{Results and proofs of the postprocessing properties} \label{subsec:proofs_postprocessing}

First, we introduce some concepts and necessary results. 
\paragraph{Compact operators and trace-class operators.} Let $\mathcal{H}, \mathcal{H}'$ be Hilbert spaces. 

\begin{definition}[Compact operators, trace-class operators on $\mathcal{H}$]
Let $\mathcal{K}(\mathcal{H})$ be the space of compact linear operators on a Hilbert space $\mathcal{H}$, which contains the operators $A : \mathcal{H} \to \mathcal{H}$ of the form
\begin{align} \label{eq:compact_operator}
    A = \sum_{n=1}^{N} \rho_n \langle f_n, \cdot \rangle g_n, \quad \text{where } 0 \leq N \leq +\infty, \, \rho_n \geq 0, \, (f_n), (g_n) \text{ orthonormal sets of } \mathcal{H}. 
\end{align}
The operator norm on $\mathcal{K}(\mathcal{H})$ is defined as $\|A\| = \sup_{h \in \mathcal{H}, \|h\| \leq 1} \|Ah\|$. When $A$ admits the expression \eqref{eq:compact_operator}, $\|A\| = \sup_{n} \rho_n$.

The dual space of $\mathcal{K}(\mathcal{H})$, denoted as $\mathcal{K}^{*}(\mathcal{H})$, is known as the space of trace-class (or nuclear) linear operators, and it contains the linear operators $A : \mathcal{H} \to \mathcal{H}$ of the form
\begin{align}
    A = \sum_{i=1}^{N} \rho_n \langle f_n, \cdot \rangle g_n, \quad \text{where } 0 \leq N \leq +\infty, \, \rho_n \geq 0, \, (f_n), (g_n) \text{ orthonormal sets of } \mathcal{H},
\end{align}
such that $\sum_{i=1}^{n} \rho_n < +\infty$. Given an arbitrary orthonormal basis $(e_i)_{i=1}^{+\infty}$ of $\mathcal{H}$, the quantity $\mathrm{tr}(A) := \sum_{i=1}^{+\infty} \langle e_i, A e_i \rangle$ is well defined and is known as the trace of $A$. The dual norm of $A \in \mathcal{K}^{*}(\mathcal{H})$ is defined as
\begin{align}
    \|A\| = \sup_{B \in \mathcal{K}(\mathcal{H}), \|B\| \leq 1} \mathrm{tr}(A B).
\end{align}
\end{definition}

$\mathcal{K}^{*}(\mathcal{H})$ is the dual space of $\mathcal{K}(\mathcal{H})$ in the sense that any linear form from $\mathcal{K}(\mathcal{H})$ to $\R$ can be written as $A \mapsto \mathrm{tr}(B A)$, with $B \in \mathcal{K}^{*}(\mathcal{H})$. 

We can define the dual operator of an operator from $\mathcal{K}(\mathcal{H})$ to $\mathcal{K}(\mathcal{H}')$ as follows:
\begin{definition}[Dual operator of an operator from $\mathcal{K}(\mathcal{H})$ to $\mathcal{K}(\mathcal{H}')$]
Given an operator $T: \mathcal{K}(\mathcal{H}) \to \mathcal{K}(\mathcal{H}')$, the dual operator $T^{*} : \mathcal{K}^{*}(\mathcal{H}') \to \mathcal{K}^{*}(\mathcal{H})$ is defined as
\begin{align}
    \forall B \in \mathcal{K}^{*}(\mathcal{H}'), \forall A \in \mathcal{K}(\mathcal{H}), \quad \langle T^{*}(B), A \rangle
    = \langle B, T(A) \rangle
\end{align}
Here, for $A \in \mathcal{K}^{*}(\mathcal{H})$, $B \in \mathcal{K}(\mathcal{H})$ we used the notation $\langle A, B \rangle
= \mathrm{tr}(A^{*} B)$ (and similarly for $\mathcal{H}'$). 
\end{definition}
\paragraph{Completely positive trace-preserving maps and the data processing inequality.} The next definition introduces a class of maps from $\mathcal{K}^{*}(\mathcal{H})$ to $\mathcal{K}^{*}(\mathcal{H}')$ which will be useful.
\begin{definition}[Completely positive trace-preserving (CPTP) map] \label{def:CPTP}
A map $\Phi : \mathcal{K}^{*}(\mathcal{H}) \to \mathcal{K}^{*}(\mathcal{H}')$ is positive if it maps positive elements of $\mathcal{K}^{*}(\mathcal{H})$ to positive elements of $\mathcal{K}^{*}(\mathcal{H}')$. $\Phi$ is completely positive if when an ancilla of arbitrary finite dimension $n$ is coupled to the system, the induced map $I_n \otimes \Phi$, where $I_n$ is the identity map on the ancilla, is positive. $\Phi$ is trace-preserving if for any $A \in \mathcal{K}(\mathcal{H})$, $\mathrm{tr}[\Phi(A)] = \mathrm{tr}[A]$.
\end{definition}

\begin{theorem}[Stinespring representation theorem] \label{thm:stinespring} 
$T$ is a CPTP map between $\mathcal{K}^{*}(\mathcal{H})$ and $\mathcal{K}^{*}(\mathcal{H}')$ if and only if there exists a Hilbert space $\mathcal{H}_{E}$ and a isometric linear operator $V : \mathcal{H} \to \mathcal{H}' \otimes \mathcal{H}_{E}$ (i.e. $V^{*} V = \mathrm{Id}$) such that
\begin{align} \label{eq:ancilla_representation}
    T(A) = \mathrm{tr}_{E}(V A V^{*}),
\end{align}
where $\mathrm{tr}_{E}$ is the partial trace over the space $\mathcal{H}_{E}$. The space $\mathcal{H}_{E}$ can be chosen as $\mathcal{K}(\mathcal{H}')$.
\end{theorem}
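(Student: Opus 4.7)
The plan is to prove the two implications separately, as is standard for Stinespring-type results. The reverse ($\Leftarrow$) direction is a direct verification; the forward ($\Rightarrow$) direction adapts the classical Stinespring dilation theorem to the present infinite-dimensional trace-class setting.

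For the ``if'' direction, assume $V: \mathcal{H} \to \mathcal{H}' \otimes \mathcal{H}_E$ is isometric and set $T(A) := \mathrm{tr}_{E}(V A V^{*})$. Trace preservation is immediate from cyclicity and $V^{*}V = \mathrm{Id}$: $\mathrm{tr}[T(A)] = \mathrm{tr}[V A V^{*}] = \mathrm{tr}[A V^{*} V] = \mathrm{tr}[A]$. For complete positivity, the conjugation $A \mapsto V A V^{*}$ is completely positive because tensoring with any identity still yields an operator of the form $(\mathrm{Id} \otimes V)(\cdot)(\mathrm{Id} \otimes V^{*})$, which preserves positivity by $\langle g, (\mathrm{Id} \otimes V)(A)(\mathrm{Id} \otimes V^{*}) g\rangle = \langle (\mathrm{Id} \otimes V^{*}) g, A (\mathrm{Id} \otimes V^{*}) g\rangle \geq 0$; the partial trace $\mathrm{tr}_E$ is completely positive by construction; and compositions of CP maps are CP.

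For the ``only if'' direction, I would pass to the dual: the adjoint $T^{*}: \mathcal{K}(\mathcal{H}') \to \mathcal{K}(\mathcal{H})$ is a completely positive unital map (unitality follows from trace preservation of $T$: the identity $\mathrm{tr}[A] = \mathrm{tr}[T(A)] = \langle T^{*}(\mathrm{Id}), A\rangle$ for all $A \in \mathcal{K}^{*}(\mathcal{H})$ forces $T^{*}(\mathrm{Id}) = \mathrm{Id}$). After extending $T^{*}$ to all bounded operators by $\sigma$-weak continuity, the classical Stinespring theorem for normal CP unital maps produces a Hilbert space $\mathcal{K}$, a normal $*$-representation $\pi$ of $B(\mathcal{H}')$, and an isometry $W: \mathcal{H} \to \mathcal{K}$ with $T^{*}(X) = W^{*} \pi(X) W$. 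Normality of $\pi$ implies it is unitarily equivalent to an amplification $X \mapsto X \otimes \mathrm{Id}_{\mathcal{H}_E}$ on $\mathcal{H}' \otimes \mathcal{H}_E$, so after identification $W$ becomes the sought isometry $V: \mathcal{H} \to \mathcal{H}' \otimes \mathcal{H}_E$ with $T^{*}(X) = V^{*}(X \otimes \mathrm{Id}_{\mathcal{H}_E}) V$. Dualizing back to trace-class operators yields $T(A) = \mathrm{tr}_{E}(V A V^{*})$. Equivalently, one can build the dilation concretely via a GNS-type construction on the algebraic tensor product $\mathcal{H} \otimes \mathcal{H}'$ equipped with the sesquilinear form
\begin{align}
    \langle h_1 \otimes g_1,\, h_2 \otimes g_2 \rangle_T := \langle g_1,\, T(|h_2\rangle\langle h_1|) g_2\rangle_{\mathcal{H}'},
\end{align}
whose positive semidefiniteness on finite sums is exactly the complete positivity of $T$; quotienting by the null space, completing, and realizing the resulting environment inside $\mathcal{K}(\mathcal{H}')$ justifies the explicit choice $\mathcal{H}_E = \mathcal{K}(\mathcal{H}')$ in the statement.

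The main obstacle is the infinite-dimensional analytic bookkeeping. One must verify that $V$ extends from its natural definition on a dense domain to a bounded isometric operator, that $V A V^{*}$ lies in $\mathcal{K}^{*}(\mathcal{H}' \otimes \mathcal{H}_E)$ for every $A \in \mathcal{K}^{*}(\mathcal{H})$ so that the partial trace $\mathrm{tr}_E(V A V^{*}) \in \mathcal{K}^{*}(\mathcal{H}')$ is well-defined, and that the Stinespring representation can genuinely be realized as an amplification---this relies on normality (i.e.\ $\sigma$-weak continuity) of $T^{*}$, itself a consequence of $T$ being a bounded map on the trace-class ideal. Identifying the ancilla with $\mathcal{K}(\mathcal{H}')$ is the quantitative statement that the minimal Stinespring dilation has environment dimension bounded by that of $\mathcal{K}(\mathcal{H}')$, which closes the argument.
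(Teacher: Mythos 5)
The paper does not prove this statement at all: it is quoted as the classical Stinespring representation theorem (in its Schr\"odinger-picture / CPTP form) and used as a black box in the proofs of \autoref{prop:CPTP_kernels} and \autoref{prop:existence_CPTP_generic}. Your argument is the standard textbook proof and is essentially correct: the ``if'' direction by checking that conjugation by an isometry and the partial trace are each completely positive and jointly trace-preserving, and the ``only if'' direction by dualizing to a normal unital CP map, invoking the classical Stinespring dilation, and using the structure theorem that every normal $*$-representation of $B(\mathcal{H}')$ is an amplification $X \mapsto X \otimes \mathrm{Id}_{\mathcal{H}_E}$.

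Two small imprecisions are worth fixing. First, the Banach adjoint of $T : \mathcal{K}^{*}(\mathcal{H}) \to \mathcal{K}^{*}(\mathcal{H}')$ is naturally a map $B(\mathcal{H}') \to B(\mathcal{H})$ (the dual of the trace class is all bounded operators), and there is no reason it should send compacts to compacts; so rather than defining $T^{*}$ on $\mathcal{K}(\mathcal{H}')$ and then ``extending by $\sigma$-weak continuity,'' you should take the adjoint on $B(\mathcal{H}')$ directly, which is automatically weak-$*$ continuous and satisfies $T^{*}(\mathrm{Id}) = \mathrm{Id}$ by your trace computation (note $\mathrm{Id}$ is not compact in infinite dimensions, so this point matters). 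Second, $\mathcal{K}(\mathcal{H}')$ equipped with the operator norm is not a Hilbert space, so the final claim that ``$\mathcal{H}_E$ can be chosen as $\mathcal{K}(\mathcal{H}')$'' should be read as choosing the Hilbert--Schmidt completion (or any Hilbert space of that cardinality of dimension); this looseness is already present in the paper's own statement and does not affect how the theorem is used downstream.
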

The following result, which was shown by \cite{frank2013monotonicity} and \cite{beigi2013sandwiched}, shows that CPTP maps do not increase the quantum Rényi divergence between positive Hermitian operators.

\begin{lemma}[Data processing inequality the quantum Rényi divergence, \cite{frank2013monotonicity}, Thm. 1; \cite{beigi2013sandwiched}, Thm. 6] \label{lem:data_processing_QRD}
Let $\Phi : \mathcal{K}^{*}(\mathcal{H}) \to \mathcal{K}^{*}(\mathcal{H}')$ be a completely positive trace-preserving (CPTP) map (\autoref{def:CPTP}). Let $A, B$ be positive Hermitian operators in $\mathcal{K}^{*}(\mathcal{H})$ and let $\alpha \in [1/2,1) \cup (1,+\infty)$. Then,
\begin{align}
    D_{\alpha}(\Phi(A) || \Phi(B)) \leq D_{\alpha}(A || B)
\end{align}
\end{lemma}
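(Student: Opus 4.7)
Since the statement is the cited data processing inequality (DPI) for the sandwiched quantum Rényi divergence, the cleanest route is to reduce the claim to two elementary operations via the Stinespring representation theorem (\autoref{thm:stinespring}) and then handle each case separately. Concretely, I would write $\Phi(A) = \mathrm{tr}_E(V A V^{*})$ with $V: \mathcal{H} \to \mathcal{H}' \otimes \mathcal{H}_E$ isometric ($V^{*}V = \mathrm{Id}$), and decompose $\Phi$ as the composition of the isometric embedding $\iota_V : X \mapsto V X V^{*}$ and the partial trace $\mathrm{tr}_E$. It therefore suffices to establish (i) $D_\alpha(\iota_V(A) \| \iota_V(B)) = D_\alpha(A \| B)$ and (ii) $D_\alpha(\mathrm{tr}_E(X) \| \mathrm{tr}_E(Y)) \leq D_\alpha(X \| Y)$ for $X,Y \geq 0$ on $\mathcal{H}' \otimes \mathcal{H}_E$.

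For step (i), I would use that $V^{*}V = \mathrm{Id}$ implies $VV^{*}$ is a projector onto the range of $V$, and use the functional calculus to show $(VBV^{*})^{r} = V B^{r} V^{*}$ on the range of $V$ for any real power $r$, since $V$ implements a unitary isomorphism from $\mathcal{H}$ onto $\mathrm{Range}(V)$. Plugging this into the definition of $D_\alpha$ and using cyclicity of the trace together with $V^{*}V = \mathrm{Id}$ collapses $\mathrm{tr}[((VBV^{*})^{(1-\alpha)/(2\alpha)} VAV^{*} (VBV^{*})^{(1-\alpha)/(2\alpha)})^{\alpha}]$ to $\mathrm{tr}[(B^{(1-\alpha)/(2\alpha)} A B^{(1-\alpha)/(2\alpha)})^{\alpha}]$; since $\mathrm{tr}[VAV^{*}] = \mathrm{tr}[A]$, the two divergences coincide.

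For step (ii), which is the substantial part, I would invoke the joint convexity ($\alpha > 1$) and joint concavity ($1/2 \leq \alpha < 1$) of the Frank-Lieb trace functional in \autoref{lem:frank_lieb_lemma}(i). The partial trace $\mathrm{tr}_E$ can be represented as an average over a group of unitaries acting on the $E$-factor, i.e.\ $\mathrm{tr}_E(X) \otimes \rho_E = \int (I \otimes U) X (I \otimes U)^{*} \, d\mu(U)$ for a suitable invariant measure, and an auxiliary ancilla reduction then expresses $\mathrm{tr}_E$ as a convex combination of unitary conjugations (which by step (i) preserve $D_\alpha$). Combining joint convexity/concavity with Jensen's inequality and the monotonicity of $\log$ in $\alpha-1$ (care with the sign of $\alpha-1$ for $\alpha < 1$) yields the desired inequality on $\exp((\alpha-1) D_\alpha)$, which translates back to the DPI after taking logs.

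\textbf{Main obstacle.} The delicate part is the infinite-dimensional functional calculus: powers $(\cdot)^{(1-\alpha)/(2\alpha)}$ of unbounded or non-injective positive operators need the convention $0^{r} = 0$ and the restriction that $B \gg A$ (or the regularization by $\lambda \mathrm{Id}$ in analogous contexts), and the averaging representation of the partial trace requires either a finite-dimensional $\mathcal{H}_E$ or a careful limiting argument over increasing finite-dimensional projections. I would handle this by first proving the finite-rank case (where all operators live in a joint finite-dimensional invariant subspace), and then passing to the limit using the continuity property of $D_\alpha$ (\autoref{prop:prop_regularized}(1)) and the trace-norm density of finite-rank operators in $\mathcal{K}^{*}(\mathcal{H})$.
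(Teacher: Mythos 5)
The paper does not actually prove this lemma: it is cited verbatim from \cite{frank2013monotonicity} (Thm.~1) and \cite{beigi2013sandwiched} (Thm.~6), with the authors remarking only that they could not extend the argument to the regularized version (left as an open problem). So there is no ``paper proof'' to compare against; your proposal should be evaluated on its own merits.

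Your sketch is the standard Frank--Lieb route and is essentially correct, but one step is blurred. The unitary twirl gives
\begin{align}
\mathrm{tr}_E(X) \otimes \rho_E = \int (I \otimes U)\, X\, (I \otimes U)^{*} \, d\mu(U),
\end{align}
i.e.\ a convex combination of unitary conjugations of $X$ recovers $\mathrm{tr}_E(X)\otimes\rho_E$, not $\mathrm{tr}_E(X)$ itself. Joint convexity/concavity of the Frank--Lieb functional combined with unitary invariance therefore yields $D_\alpha(\mathrm{tr}_E(X)\otimes\rho_E \,\|\, \mathrm{tr}_E(Y)\otimes\rho_E) \leq D_\alpha(X\|Y)$, and you then need the tensorization (additivity) property $D_\alpha(M\otimes\rho \,\|\, N\otimes\rho) = D_\alpha(M\|N)$ (e.g.\ \autoref{prop:prop_regularized}, Property 5, with $\lambda=0$) to strip off the ancilla factor. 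Your phrase ``an auxiliary ancilla reduction then expresses $\mathrm{tr}_E$ as a convex combination of unitary conjugations'' compresses this into a claim that is literally false and omits the additivity ingredient, which is the step that makes the reduction close. You should also note that your step (i) (isometric dilation preserves $D_\alpha$) and the unitary invariance used inside the twirl are two instances of the same mechanism but are applied at different places; keeping them separate avoids a circularity impression. The rest --- functional calculus with the $0^{r}=0$ convention for negative powers, the sign-flip bookkeeping when $\alpha<1$, and the finite-dimensional truncation followed by continuity of $D_\alpha$ --- is handled appropriately. As a side note, \cite{beigi2013sandwiched} proves the same result by a genuinely different method (complex interpolation / Riesz--Thorin for operator $L_p$-norms), which avoids the twirl and the joint convexity of the trace functional entirely and is cleaner in infinite dimensions; the paper cites both.
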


\paragraph{Existence of CPTP maps between rank-1 operators.} Suppose that $\mathcal{H}, \mathcal{H}'$ are RKHS on $\mathcal{R}, \mathcal{R}'$ respectively, with feature maps $\phi : \mathcal{R} \to \mathcal{H}$, $\psi : \mathcal{R}' \to \mathcal{H}'$. Consider a mapping $g: \mathcal{R} \to \mathcal{R}'$. We are interested in finding a CPTP map $\Phi : \mathcal{K}^{*}(\mathcal{H}) \to \mathcal{K}^{*}(\mathcal{H}')$ such that
\begin{align}
    \forall x \in \mathcal{R}, \quad \Phi(\phi(x) \phi(x)^{*}) = \psi(g(x)) \psi(g(x))^{*}
\end{align}
The following lemma from \cite{chefles2004ontheexistence} can be used to prove the existence of such a map, in the case of $\mathcal{R}$ finite.
\begin{lemma}[\cite{chefles2004ontheexistence}, Thm. 2]
Let $(a_i a_i^{*})_i \subseteq \mathcal{K}^{*}(\mathcal{H})$ and $(b_i b_i^{*})_i \subseteq \mathcal{K}^{*}(\mathcal{H}')$ be finite families of rank-1 linear operators such that $\|a_i\| = 1$, $\|b_i\| = 1$, with equal cardinality. Let $G_a = {(G_a)}_{ij} = \langle a_i, a_j \rangle$ and $G_b = {(G_a)}_{ij} = \langle b_i, b_j \rangle$ be the Gram matrices. There exists a CPTP map that maps $a_i a_i^{*}$ to $b_i b_i^{*}$ if and only if $G_a = M \circ G_b$ for some matrix $M \geq 0$, where $\circ$ denotes the Hadamard (element-wise) product.
\end{lemma}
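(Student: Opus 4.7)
My approach is to prove the lemma via the Stinespring dilation theorem (\autoref{thm:stinespring}), which characterizes CPTP maps as partial traces of isometric conjugations. For the necessity direction, I would start from a CPTP map $\Phi$ with $\Phi(a_i a_i^{*}) = b_i b_i^{*}$ for all $i$, and write $\Phi(X) = \mathrm{tr}_E(V X V^{*})$ for some isometry $V : \mathcal{H} \to \mathcal{H}' \otimes \mathcal{H}_E$. The operator $V a_i (V a_i)^{*}$ is then a rank-one positive operator on $\mathcal{H}' \otimes \mathcal{H}_E$ whose partial trace over $\mathcal{H}_E$ equals the pure state $b_i b_i^{*}$. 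Since the reduced density matrix of a pure bipartite state is itself pure only when the joint state is a product state, this forces $V a_i = b_i \otimes e_i$ for some unit vector $e_i \in \mathcal{H}_E$. Combining this with the isometry identity $\langle V a_i, V a_j \rangle = \langle a_i, a_j \rangle$ yields $\langle a_i, a_j \rangle = \langle b_i, b_j \rangle \langle e_i, e_j \rangle$, so $G_a = M \circ G_b$ where $M_{ij} = \langle e_i, e_j \rangle$ is a Gram matrix, hence positive semidefinite.

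For the sufficiency direction, I would start from the assumption $G_a = M \circ G_b$ with $M \geq 0$ and factor $M$ as a Gram matrix $M_{ij} = \langle e_i, e_j \rangle$ in some auxiliary Hilbert space $\mathcal{H}_E$ (taken finite-dimensional since the family is finite; note that $\|a_i\| = \|b_i\| = 1$ together with $(G_a)_{ii} = M_{ii} (G_b)_{ii}$ gives $\|e_i\| = 1$). Define $V$ on the span of $\{a_i\}$ by $V a_i = b_i \otimes e_i$. To verify well-definedness when the $a_i$'s are linearly dependent, observe that if $\sum_i c_i a_i = 0$ then the identity $\langle a_i, a_j \rangle = \langle b_i, b_j \rangle \langle e_i, e_j \rangle$ gives $\| \sum_i c_i b_i \otimes e_i \|^{2} = \sum_{i,j} \bar{c}_i c_j \langle a_i, a_j \rangle = 0$, so $V$ factors consistently through the linear relations. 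The same computation shows $V$ is an isometry on the span, and I would extend it arbitrarily to an isometry on all of $\mathcal{H}$. Finally, $\Phi(X) := \mathrm{tr}_E(V X V^{*})$ is CPTP by \autoref{thm:stinespring}, and $\Phi(a_i a_i^{*}) = \mathrm{tr}_E((b_i \otimes e_i)(b_i \otimes e_i)^{*}) = \mathrm{tr}_E(b_i b_i^{*} \otimes e_i e_i^{*}) = b_i b_i^{*}$ as required.

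The main obstacle is justifying that $V a_i$ must factorize as $b_i \otimes e_i$ in the necessity direction; this is the pivot of the entire argument. The cleanest way is via the Schmidt decomposition: writing $V a_i = \sum_k \sigma_k u_k \otimes v_k$ with $\sigma_k > 0$ and $\{u_k\}, \{v_k\}$ orthonormal, the partial trace over $\mathcal{H}_E$ gives $\sum_k \sigma_k^{2} u_k u_k^{*}$, which equals the rank-one operator $b_i b_i^{*}$ only if exactly one Schmidt coefficient is nonzero; this forces $u_1 = b_i$ up to a phase that can be absorbed into $v_1 =: e_i$. The finite-dimensionality of $\mathcal{H}_E$ (bounded by the cardinality of the family) keeps the construction entirely within elementary linear algebra and sidesteps any subtleties with infinite-dimensional Stinespring ancillas.
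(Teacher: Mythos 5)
Your proof is correct and follows essentially the same route as the paper: the paper cites this lemma from \cite{chefles2004ontheexistence} without reproving it, but its proof of the infinite-family generalization (\autoref{prop:CPTP_kernels}) uses exactly your strategy --- Stinespring dilation plus the observation that a pure reduced state forces $V a_i = b_i \otimes e_i$ (the paper packages your Schmidt-decomposition step as \autoref{lem:rank_1}), followed by the explicit isometry $a_i \mapsto b_i \otimes e_i$ for sufficiency. Your extra care about well-definedness of $V$ on linearly dependent $a_i$'s and about keeping the ancilla finite-dimensional is a welcome refinement, not a divergence in method.
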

We can generalize this result to infinite families of rank-1 operators:
\begin{proposition}[Existence of CPTP map between rank-1 operators] \label{prop:CPTP_kernels}
Let $(a_x a_x^{*})_{x \in \mathcal{R}} \subseteq \mathcal{K}^{*}(\mathcal{H})$ and $(b_x b_x^{*})_{x\in \mathcal{R}} \subseteq \mathcal{K}^{*}(\mathcal{H}')$ be indexed families of rank-1 linear operators such that $\|a_x\| = 1$, $\|b_x\| = 1$. Let $k_a$, $k_b$ be kernel functions such that for all $x, x' \in \mathcal{R}$, $k_a(x,x') = \langle a_x, a_{x'} \rangle$ and $k_b(x,x') = \langle b_x, b_{x'} \rangle$. There exists a CPTP map that maps $a_x a_x^{*}$ to $b_x b_x^{*}$ for all $x \in \mathcal{R}$, if and only if $k_a = k_b \cdot \tilde{k}$ for some kernel function $\tilde{k}$.
\end{proposition}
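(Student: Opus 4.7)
The plan is to establish both implications using the Stinespring representation theorem (\autoref{thm:stinespring}), with the rank-one structure of the operators $a_x a_x^*, b_x b_x^*$ doing all the work.

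For the sufficiency direction ($\Leftarrow$), suppose $k_a(x,x') = k_b(x,x')\, \tilde{k}(x,x')$ for some kernel $\tilde{k}$. Since $\|a_x\| = \|b_x\| = 1$, we have $\tilde{k}(x,x) = k_a(x,x)/k_b(x,x) = 1$, so the feature map $\tilde{\phi}: \mathcal{R} \to \tilde{\mathcal{H}}$ of the RKHS of $\tilde{k}$ is unit-norm. On the generating family $\{a_x\}_{x \in \mathcal{R}}$ of $\mathcal{H}$, define the assignment $a_x \mapsto b_x \otimes \tilde{\phi}(x)$. It preserves inner products since
\begin{align}
\langle b_x \otimes \tilde{\phi}(x), b_{x'} \otimes \tilde{\phi}(x')\rangle = k_b(x,x')\, \tilde{k}(x,x') = k_a(x,x') = \langle a_x, a_{x'}\rangle,
\end{align}
so by the reproducing property (which gives $\mathcal{H} = \overline{\mathrm{span}}\{a_x : x \in \mathcal{R}\}$) it extends by linearity and continuity to an isometry $V : \mathcal{H} \to \mathcal{H}' \otimes \tilde{\mathcal{H}}$. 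By \autoref{thm:stinespring}, $\Phi(A) := \mathrm{tr}_{\tilde{\mathcal{H}}}(V A V^*)$ is CPTP, and a direct computation using $\|\tilde{\phi}(x)\|^2 = 1$ gives $\Phi(a_x a_x^*) = b_x b_x^*$ for every $x$.

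For the necessity direction ($\Rightarrow$), assume a CPTP map $\Phi$ with $\Phi(a_x a_x^*) = b_x b_x^*$ exists, and invoke \autoref{thm:stinespring} to obtain an environment Hilbert space $\mathcal{H}_E$ and an isometry $V: \mathcal{H} \to \mathcal{H}' \otimes \mathcal{H}_E$ such that $\Phi(A) = \mathrm{tr}_E(V A V^*)$. Then $\mathrm{tr}_E((V a_x)(V a_x)^*) = b_x b_x^*$ is rank-one with range spanned by $b_x$. Writing the Schmidt decomposition $V a_x = \sum_i \sigma_i e_i \otimes f_i$ with $\sigma_i > 0$ and orthonormal families $(e_i), (f_i)$ in $\mathcal{H}', \mathcal{H}_E$ respectively, the partial trace equals $\sum_i \sigma_i^2 e_i e_i^*$; matching with $b_x b_x^*$ forces a single nonzero term with $\sigma_1 = 1$ and $e_1$ equal to $b_x$ up to a phase. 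Absorbing the phase into the second factor, $V a_x = b_x \otimes c_x$ for some unit vector $c_x \in \mathcal{H}_E$. Hence
\begin{align}
k_a(x,x') = \langle V a_x, V a_{x'}\rangle = \langle b_x, b_{x'}\rangle\, \langle c_x, c_{x'}\rangle = k_b(x,x')\, \tilde{k}(x,x'),
\end{align}
where $\tilde{k}(x,x') := \langle c_x, c_{x'}\rangle$ is a kernel function, being an inner product of unit-norm vectors in a Hilbert space (so $\tilde{k}(x,x) = 1$).

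The only delicate technical step is the Schmidt-decomposition argument forcing $V a_x$ to factor as $b_x \otimes c_x$ in infinite dimensions. This reduces to the singular value decomposition of compact (indeed Hilbert--Schmidt) operators between separable Hilbert spaces, which holds without any finite-dimensional hypothesis, so the finite-dimensional argument of \cite{chefles2004ontheexistence} carries over verbatim once the infinite-dimensional Stinespring theorem is in hand.
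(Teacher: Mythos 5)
Your proof is correct and follows the same overall route as the paper's: both directions hinge on the Stinespring representation and on showing that the isometry $V$ sends $a_x$ to a product vector $b_x \otimes c_x$, from which the kernel factorization is read off. The one place you diverge is the mechanism used to extract the product structure in the necessity direction. You invoke the Schmidt decomposition of $V a_x \in \mathcal{H}' \otimes \mathcal{H}_E$ and observe that a rank-one reduced density operator forces a single Schmidt term. The paper instead expands $V a_x = \sum_{i,j} c_{ij}\, u_i \otimes v_j$ in \emph{fixed} orthonormal bases, matches the partial trace against $b_x b_x^*$ to get $\sum_j c_{ij} c_{i'j} = \alpha_i \alpha_{i'}$, and then proves a dedicated lemma (\autoref{lem:rank_1}) showing that this Gram-type identity forces the coefficient array to factor as $c_{ij} = \beta_i \gamma_j$. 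These are two presentations of the same fact — a pure state with a rank-one marginal is a product state — and the Schmidt-decomposition phrasing is the more standard and economical one: it packages the rank argument of \autoref{lem:rank_1} into the SVD of the Hilbert--Schmidt operator with coefficient array $(c_{ij})$, which you correctly note carries over to the separable infinite-dimensional setting. The paper's version is more elementary and self-contained (it avoids invoking SVD theory for compact operators), at the cost of an extra lemma. One small point both you and the paper gloss over: the extension of the isometry from $\overline{\mathrm{span}}\{a_x\}$ to all of $\mathcal{H}$ is only automatic when that span is dense, which holds in the RKHS applications but is not imposed by the hypotheses; neither version flags this, so it is not a deficit of yours specifically.
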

\begin{proof}
Suppose that there exists a CPTP map $T$ such that $T(a_x a_x^{*}) = b_x b_x^{*}$ for all $x \in \mathcal{R}$. By the Stinespring representation theorem (\autoref{thm:stinespring}), this is equivalent to 
\begin{align} \label{eq:stinespring_consequence}
    \forall x \in \mathcal{R}, \quad \mathrm{tr}_{E}(V a_x a_x^{*} V^{*}) = b_x b_x^{*}, \quad \text{for some unitary } V : \mathcal{H} \to \mathcal{H}' \otimes \mathcal{H}_E. 
\end{align}
Note that $V a_x a_x^{*} V^{*} = V a_x (V a_x)^{*}$, If we let $(u_i)$ and $(v_i)$ be orthonormal bases of $\mathcal{H}'$ and $\mathcal{H}_E$, there exist coefficients $(c_{ij})$ such that
$V a_x = \sum_{i,j=1}^{+\infty} c_{ij} u_i \otimes v_j$. 
Hence, $V a_x (V a_x)^{*} = \sum_{i,j=1}^{+\infty} c_{ij} u_i \otimes v_j (\sum_{i',j'=1}^{+\infty} c_{i'j'} u_{i'}^{*} \otimes v_{j'}^{*}) = \sum_{i,j,i',j'=1}^{+\infty} c_{ij} c_{i'j'} u_i u_{i'}^{*} \otimes v_j v_{j'}^{*}$, which means that \begin{align} 
\mathrm{tr}_{E}(V a_x (V a_x)^{*}) = \sum_{i,i'=1}^{+\infty} \left(\sum_{j=1}^{+\infty} c_{ij} c_{i'j} \right) u_i u_{i'}^{*}.
\end{align}
This is equal to $b_x b_x^{*}$ if and only if $\sum_{j=1} c_{ij} c_{i'j} = \alpha_{i} \alpha_{i'}$ for all $i \in \mathbb{Z}^{+}$, in which case $b_x = \sum_{i=1}^{\infty} \alpha_i u_i$. \autoref{lem:rank_1} implies that this is equivalent to $c_{ij} = \beta_i \gamma_j$ for some sequences $(\beta_i)$ and $(\gamma_j)$, which holds iff $V a_x = (\sum_{i=1}^{+\infty} \beta_i u_i) \otimes (\sum_{j=1}^{+\infty} \gamma_j v_j)$, and $V a_x (V a_x)^{*} = (\sum_{i=1}^{+\infty} \beta_i u_i) (\sum_{i=1}^{+\infty} \beta_i u_i)^{*} \otimes (\sum_{j=1}^{+\infty} \gamma_j v_j) (\sum_{j=1}^{+\infty} \gamma_j v_j)^{*}$.

Note that the sequences $(\beta_i)$, $(\gamma_j)$ depend on the point $x$ by construction. Hence, defining $e_x = \sum_{j=1}^{+\infty} \gamma_j v_j$, we get that \eqref{eq:stinespring_consequence} is equivalent to
\begin{align} \label{eq:second_consequence}
    \forall x, x' \in \mathcal{R}, \quad V a_x (V a_{x'})^{*} = b_x b_{x'}^{*} \otimes e_x e_{x'}^{*}.
\end{align}
Note that $\mathrm{tr}(V a_x (V a_{x'})^{*}) = \langle V a_x, V a_{x'} \rangle = \langle a_x, V^{*} V a_{x'} \rangle = \langle a_x, a_{x'} \rangle$, while $\mathrm{tr}(b_x b_{x'}^{*} \otimes e_x e_{x'}^{*}) = \mathrm{tr}(b_x b_{x'}^{*}) \mathrm{tr}(e_x e_{x'}^{*}) = \langle b_x, b_{x'} \rangle \langle e_x, e_{x'} \rangle$. 
If we define the kernel $\tilde{k}(x,x') = \langle e_x, e_{x'} \rangle$, the implication from left to right follows. 

We still need to show \eqref{eq:second_consequence} from 
\begin{align} \label{eq:trace_equality}
\mathrm{tr}(a_x a_{x'}^{*}) = \mathrm{tr}((b_x \otimes e_x) (b_{x'}^{*} \otimes e_{x'}^{*}))
\end{align}
for the proof of the reverse implication. Equation \eqref{eq:trace_equality} implies that the map $a_x \mapsto b_x \otimes e_x$ is an isometry. Calling this map $V$, we get that $V a_x = b_x \otimes e_x$, and thus \eqref{eq:second_consequence} follows.
\end{proof}

\begin{lemma} \label{lem:rank_1}
Consider an indexed family $(c_{ij})_{i,j=1}^{+\infty}$ such that $\sum_{i,j=1}^{+\infty} c_{ij}^2 = 1$, and a sequence $(\alpha_i)_{i=1}^{+\infty}$ such that $\sum_{i=1}^{+\infty} \alpha_{i}^2 = 1$. The condition
\begin{align}
    \sum_{j=1}^{\infty} c_{ij} c_{i'j} = \alpha_{i} \alpha_{i'}
\end{align}
is equivalent to $\forall i,j \in \mathbb{Z}^{+}$, $c_{ij} = \beta_{i} \gamma_{j}$ for some sequences $(\beta_{i})_{i=1}^{+\infty}$, $(\gamma_{j})_{j=1}^{+\infty}$ that can be chosen such that $\sum_{i=1}^{+\infty} \beta_{i}^2 = 1$ and $\sum_{j=1}^{+\infty} \gamma_{j}^2 = 1$.
\end{lemma}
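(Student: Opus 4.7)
\textit{\textbf{Proof plan for Lemma \ref{lem:rank_1}.}} The plan is to read the indexed family $(c_{ij})$ as an infinite matrix whose row vectors $r_i := (c_{ij})_{j=1}^{+\infty}$ lie in $\ell^2$, and to translate the hypothesis $\sum_j c_{ij} c_{i'j} = \alpha_i \alpha_{i'}$ into the statement that the Gram matrix of the $r_i$ has rank one with prescribed diagonal.

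For the easy direction, assuming $c_{ij} = \beta_i \gamma_j$ with $\sum_i \beta_i^2 = \sum_j \gamma_j^2 = 1$, a direct calculation gives $\sum_j c_{ij} c_{i'j} = \beta_i \beta_{i'} \sum_j \gamma_j^2 = \beta_i \beta_{i'}$, so identifying $\alpha_i = \beta_i$ recovers the identity, and $\sum_{i,j} c_{ij}^2 = (\sum_i \beta_i^2)(\sum_j \gamma_j^2) = 1$ follows automatically.

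For the main direction, setting $i = i'$ in the hypothesis yields $\|r_i\|_{\ell^2}^2 = \alpha_i^2 < +\infty$, so each $r_i$ belongs to $\ell^2$. Combined with $\langle r_i, r_{i'} \rangle = \alpha_i \alpha_{i'}$, this gives $|\langle r_i, r_{i'} \rangle| = |\alpha_i||\alpha_{i'}| = \|r_i\|\|r_{i'}\|$ whenever $\alpha_i \alpha_{i'} \neq 0$, which is precisely the equality case of Cauchy--Schwarz. Picking any $i_0$ with $\alpha_{i_0} \neq 0$ (which exists since $\sum_i \alpha_i^2 = 1$), I then set $\gamma := r_{i_0}/\alpha_{i_0}$ and $\beta_i := \alpha_i$. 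The Cauchy--Schwarz equality case forces $r_i = (\alpha_i/\alpha_{i_0})\, r_{i_0} = \alpha_i \gamma$ for every $i$ with $\alpha_i \neq 0$, while for $i$ with $\alpha_i = 0$ the identity $\|r_i\|^2 = 0$ forces $r_i = 0$; componentwise this gives $c_{ij} = \beta_i \gamma_j$ in all cases. The normalizations $\sum_i \beta_i^2 = \sum_i \alpha_i^2 = 1$ and $\|\gamma\|^2 = \|r_{i_0}\|^2/\alpha_{i_0}^2 = 1$ are then immediate.

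The only subtlety is bookkeeping for the degenerate indices: rows with $\alpha_i = 0$ are automatically zero and thus consistent with any choice of $\gamma$, and the availability of a unit reference $\gamma$ is secured by the assumption $\sum_i \alpha_i^2 = 1$. Once these are in hand, the reduction to the equality case of Cauchy--Schwarz makes the rank-one factorization essentially immediate, and no further infinite-dimensional machinery is required.
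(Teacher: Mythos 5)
Your proof is correct and takes a genuinely different route from the paper's. The paper fixes two indices $i \neq i'$, forms the linear map $C : \R^2 \to \ell^2$ whose columns are the rows $r_i, r_{i'}$, computes the $2\times 2$ matrix $C^\top C = (\alpha_i, \alpha_{i'})^\top (\alpha_i, \alpha_{i'})$, and invokes the identity $\mathrm{rank}(C^\top C) = \mathrm{rank}(C)$ to conclude the two rows are proportional. You instead observe that setting $i = i'$ gives $\|r_i\|^2 = \alpha_i^2$, so together with $\langle r_i, r_{i'}\rangle = \alpha_i \alpha_{i'}$ the hypothesis is exactly the equality case of Cauchy--Schwarz, forcing parallelism of nonzero rows; picking a reference $i_0$ with $\alpha_{i_0} \neq 0$, defining $\gamma = r_{i_0}/\alpha_{i_0}$, and using $\langle r_i, r_{i_0}\rangle = \alpha_i \alpha_{i_0}$ to pin down the proportionality constant yields $r_i = \alpha_i \gamma$ for all $i$, with the degenerate rows $\alpha_i = 0$ handled by $\|r_i\|^2 = 0$. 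Your version is more elementary (Cauchy--Schwarz equality rather than a Gram-matrix rank argument), and arguably tighter: by anchoring a single reference direction $\gamma$ you avoid the step, left implicit in the paper, of patching pairwise proportionalities into one global factorization, and you make the normalizations $\sum_i \beta_i^2 = \sum_j \gamma_j^2 = 1$ explicit rather than merely asserting them. Both arguments are sound.
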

\begin{proof}
We will show that if we take any $i,i' \in \mathbb{Z}^{+}$ different from each other, there exist $\beta_{i}, \beta_{i'}$ such that $\forall j \in \mathbb{Z}^{+}$, $c_{ij} = \beta_{i} \gamma_{j}$ and $c_{i'j} = \beta_{i'} \gamma_{j}$.
Let $\R^{\infty}$ be the space of square-summable sequences.
Consider the linear function $C : \R^2 \to \R^{\infty}$ defined as $(v_i, v_{i'}) \mapsto v_{i} (c_{ij})_{j=1}^{+\infty} + v_{i'} (c_{i'j})_{j=1}^{+\infty}$. Its adjunct (or transpose) map is $C^{\top} : \R^{\infty} \to \R^2$ defined as $(u_{j})_{j=1}^{+\infty} \mapsto (\sum_{j=1}^{+\infty} u_{j} c_{ij}, \sum_{j=1}^{+\infty} u_{j} c_{i'j})$. Thus, the map $C^{\top} C : \R^2 \to \R^2$ is of the form
\begin{align}
    \begin{bmatrix}
    v_i \\
    v_{i'}
    \end{bmatrix} \mapsto
    \begin{bmatrix}
    \sum_{j=1}^{+\infty} (v_{i} c_{ij} + v_{i'} c_{i'j}) c_{ij} \\
    \sum_{j=1}^{+\infty} (v_{i} c_{ij} + v_{i'} c_{i'j}) c_{i'j}
    \end{bmatrix} &= 
    \begin{bmatrix}
    \sum_{j=1}^{+\infty} c_{ij} c_{ij} & \sum_{j=1}^{+\infty} c_{i'j} c_{ij} \\
    \sum_{j=1}^{+\infty} c_{ij} c_{i'j} & \sum_{j=1}^{+\infty} c_{i'j} c_{i'j}
    \end{bmatrix}
    \begin{bmatrix}
    v_i \\
    v_{i'}
    \end{bmatrix} \\ &= \begin{bmatrix}
    \alpha_{i} \\
    \alpha_{i'}
    \end{bmatrix}
    \begin{bmatrix}
    \alpha_{i} &
    \alpha_{i'}
    \end{bmatrix}
    \begin{bmatrix}
    v_i \\
    v_{i'}
    \end{bmatrix}
\end{align}
Hence, we obtain that $C^{\top} C$ has rank 1. Then, we apply the basic result that $\mathrm{rank}(C^{\top} C) = \mathrm{rank}(C)$ (the fact that $C$ maps to $\R^{\infty}$ is not a problem for the proof to go through, what matters is that the domain of $C$ is finite-dimensional), and we obtain that $\mathrm{rank}(C) = 1$. Thus, $(c_{ij})_{j=1}^{+\infty}$ and $(c_{i'j})_{j=1}^{+\infty}$ are proportional to each other, or equivalently, there exists $(\gamma_j)_{j=1}^{+\infty}$ such that $c_{ij} = \beta_{i} \gamma_{j}$ and $c_{i'j} = \beta_{i'} \gamma_{j}$. Using the condition $\sum_{i,j=1}^{+\infty} c_{ij}^2 = 1$ to show $\sum_{i=1}^{+\infty} \beta_{i}^2 = 1$ concludes the proof of the first implication.
The other implication is straightforward.
\end{proof}

\autoref{prop:CPTP_kernels} has an immediate corollary that we will use.


\begin{corollary} \label{cor:CPTP_kernel}
Consider a mapping $g: \mathcal{R} \to \mathcal{R}'$, and let $\mathcal{H}$, $\mathcal{H}'$ be RKHS over $\mathcal{R}$ and $\mathcal{R}'$, respectively, with feature functions $\phi : \mathcal{R} \to \mathcal{H}$, $\phi' : \mathcal{R}' \to \mathcal{H}'$ and kernel functions $k, k'$. We have that there exists a CPTP map $\Phi : \mathcal{K}^{*}(\mathcal{H}) \to \mathcal{K}^{*}(\mathcal{H}')$ such that
\begin{align}
    \forall x \in \mathcal{R}, \quad \Phi(\phi(x) \phi(x)^{*}) = \psi(g(x)) \psi(g(x))^{*}
\end{align}
if and only if there exists a kernel function $\tilde{k}$ such that
\begin{align} \label{eq:k_k'_k''_condition2}
    k(x,x') = k'(g(x),g(x')) \tilde{k}(x,x'), \quad \forall x, x' \in \mathcal{R}.
\end{align}
\end{corollary}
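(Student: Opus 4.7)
\textit{\textbf{Proof proposal for \autoref{cor:CPTP_kernel}.}}
The plan is to invoke \autoref{prop:CPTP_kernels} with the specific choices $a_x := \phi(x)$ and $b_x := \psi(g(x))$, both indexed by $x \in \mathcal{R}$. The corollary then follows by mere translation of the hypotheses and conclusion.

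First I would verify that the normalization hypotheses of \autoref{prop:CPTP_kernels} are met. By the standing assumption on kernels (namely $k(x,x) = 1$ for all $x$, and similarly $k'(y,y) = 1$ for all $y \in \mathcal{R}'$) and the identity $\|\phi(x)\|^2 = \langle \phi(x), \phi(x) \rangle = k(x,x)$, we obtain $\|a_x\| = 1$ for every $x \in \mathcal{R}$. Likewise, $\|b_x\|^2 = \langle \psi(g(x)), \psi(g(x)) \rangle = k'(g(x), g(x)) = 1$, so $\|b_x\| = 1$ as well.

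Next I would identify the two Gram kernels appearing in \autoref{prop:CPTP_kernels}. By construction, $k_a(x,x') = \langle a_x, a_{x'} \rangle = \langle \phi(x), \phi(x') \rangle = k(x,x')$, and $k_b(x,x') = \langle b_x, b_{x'} \rangle = \langle \psi(g(x)), \psi(g(x')) \rangle = k'(g(x), g(x'))$. The target condition of the CPTP map in \autoref{prop:CPTP_kernels}, namely $\Phi(a_x a_x^*) = b_x b_x^*$ for all $x$, is exactly the condition $\Phi(\phi(x) \phi(x)^*) = \psi(g(x)) \psi(g(x))^*$ stated in the corollary.

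\autoref{prop:CPTP_kernels} now asserts that such a CPTP map exists if and only if $k_a = k_b \cdot \tilde{k}$ for some kernel $\tilde{k}$, which under the above identifications reads precisely $k(x,x') = k'(g(x), g(x')) \tilde{k}(x,x')$ for all $x, x' \in \mathcal{R}$, i.e.\ the factorization \eqref{eq:k_k'_k''_condition2}. There is no substantial obstacle here: all the work is already done in \autoref{prop:CPTP_kernels} (and in \autoref{lem:rank_1} supporting it), so the proof is a two- or three-line argument consisting of checking the normalizations and reading off the kernel identification. \qed
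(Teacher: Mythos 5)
Your proposal is correct and is essentially identical to the paper's own proof, which likewise applies \autoref{prop:CPTP_kernels} with $a_x = \phi(x)$ and $b_x = \psi(g(x))$ and reads off the kernel identity $\langle \psi(g(x)), \psi(g(x'))\rangle = k'(g(x),g(x'))$. Your additional verification of the unit-norm hypotheses is a fine (if implicit in the paper) detail.
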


\begin{proof}
We apply \autoref{prop:CPTP_kernels} setting $a_x = \phi(x)$ and $b_x = \psi(g(x))$ for any $x \in \mathcal{R}$, and note that $\langle \psi(g(x)), \psi(g(x'))\rangle = k'(g(x),g(x'))$.
\end{proof}


\paragraph{Postprocessing property of RKRDP for deterministic $g$.} We proceed to show \autoref{thm:postprocessing_deterministic}.  

\textit{\textbf{Proof of \autoref{thm:postprocessing_deterministic}.}}
The theorem follows from the following statement: for any probability measures $p,q$ on $\mathcal{R}$, $D_{\alpha,\lambda}(\Sigma_{g_{\#} p}||\Sigma_{g_{\#} q}) \leq D_{\alpha,\lambda}(\Sigma_{p}||\Sigma_{q})$, where $g_{\#} p$ denotes the pushforward of $p$ by $g$. Indeed, if we set $p, q$ to be the laws $\mathcal{L}(f(D)),\mathcal{L}(f(D'))$ of the random variables $f(D)$, $f(D')$, respectively, we have that $g_{\#} p = \mathcal{L}(g(f(D)))$ and $g_{\#} q = \mathcal{L}(g(f(D')))$, and this implies equation \eqref{eq:contraction_postprocessing}.

To prove this statement, we use \autoref{lem:data_processing_QRD}. 
Applying \autoref{cor:CPTP_kernel} we obtain the existence of a CPTP map $\Phi : \mathcal{K}^{*}(\mathcal{H}) \to \mathcal{K}^{*}(\mathcal{H}')$ such that for any $x \in \mathcal{R}$, $\Phi(\phi(x) \phi(x)^{*}) = \psi(g(x)) \psi(g(x))^{*}$.
We reexpress $\Sigma_{p}$ and $\Sigma_{g_{\#} p}$ using their definition:
\begin{align} \label{eq:sigma_p_sigma_gp}
    \Sigma_{p} = \int_{\mathcal{R}} \phi(x) \phi(x)^{*} \, dp(x), \quad \Sigma_{g_{\#} p} = \int_{\mathcal{R}'} \psi(x) \psi(x)^{*} \, d(g_{\#} p)(x) = \int_{\mathcal{R}} \psi(g(x)) \psi(g(x))^{*} \, dp(x).
\end{align}
Hence, we obtain that 
\begin{align}
    \Phi(\Sigma_{p}) = \int_{\mathcal{R}} \Phi(\phi(x) \phi(x)^{*}) \, dp(x) = \int_{\mathcal{R}} \psi(g(x)) \psi(g(x))^{*} \, dp(x) = \Sigma_{g_{\#} p},
\end{align}
which concludes the proof.
\qed

The following corollary of \autoref{thm:postprocessing_deterministic} shows that when $\mathcal{H}$ is a tensor product of Hilbert spaces, the projection operator to one component satisfies the postprocessing inequality.
\begin{corollary}[Postprocessing property for projections]
Suppose that $\mathcal{H}_1, \mathcal{H}_2$ are RKHS on $\mathcal{R}_1$, $\mathcal{R}_2$ with kernels $k_1, k_2$ and we define $\mathcal{H} = \mathcal{H}_1 \otimes \mathcal{H}_2$, which has kernel $k((x,y),(x',y')) = k_1(x,x') k_2(y,y')$ by construction. Let $P_1 : \mathcal{R}_1 \times \mathcal{R}_2 \to \mathcal{R}_1$ be the projection operator to the space $\mathcal{R}_1$, i.e. $(x,y) \mapsto x$. If $f : \mathcal{D} \to \mathcal{R}_1 \times \mathcal{R}_2$ is a randomized mapping, then
\begin{align} \label{eq:contraction_postprocessing2}
    D_{\alpha,\lambda}(\Sigma_{\mathcal{L}(P_1(f(D)))}||\Sigma_{\mathcal{L}(P_1(f(D')))}) \leq D_{\alpha,\lambda}(\Sigma_{\mathcal{L}(f(D))}||\Sigma_{\mathcal{L}(f(D'))}).
\end{align}
Thus, if $f$ satisfies $(k,\alpha,\lambda,\varepsilon)$-RKRDP, then $P_1 \circ f$ satisfies $(k_1,\alpha,\lambda,\varepsilon)$-RKRDP.
\end{corollary}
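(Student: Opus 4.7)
The plan is to recognize this statement as an immediate application of \autoref{thm:postprocessing_deterministic} to the deterministic map $g = P_1$. All that needs to be done is to verify the kernel factorization hypothesis \eqref{eq:k_k'_k''_condition}.

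First I would exhibit the auxiliary kernel. Set $\tilde{k}((x,y),(x',y')) := k_2(y,y')$. This is a positive-definite kernel on $\mathcal{R}_1\times\mathcal{R}_2$, since it is the pullback of the kernel $k_2$ under the projection onto $\mathcal{R}_2$, and pullbacks preserve positive definiteness. Moreover, $\tilde{k}((x,y),(x,y)) = k_2(y,y) = 1$, so the standing normalization assumption is satisfied.

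Second I would verify the factorization condition. By the definition of the product kernel and the fact that $P_1(x,y)=x$,
\begin{align}
    k((x,y),(x',y')) = k_1(x,x')\,k_2(y,y') = k_1(P_1(x,y),P_1(x',y'))\,\tilde{k}((x,y),(x',y')),
\end{align}
which is exactly \eqref{eq:k_k'_k''_condition} with $k'=k_1$. Applying \autoref{thm:postprocessing_deterministic} with $p = \mathcal{L}(f(D))$ and $q=\mathcal{L}(f(D'))$, and using $(P_1)_{\#}\mathcal{L}(f(D)) = \mathcal{L}(P_1(f(D)))$ and similarly for $D'$, one directly obtains \eqref{eq:contraction_postprocessing2}. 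The RKRDP conclusion is then immediate: if $D_{\alpha,\lambda}(\Sigma_{\mathcal{L}(f(D))}\|\Sigma_{\mathcal{L}(f(D'))})\le\varepsilon$ for all adjacent $D,D'$, the contraction transfers that same bound to $P_1\circ f$ with kernel $k_1$.

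Since this is a verification-only argument, there is no real obstacle; the only conceptual point is to notice that the tensor-product structure $k=k_1\otimes k_2$ is tailor-made for the factorization \eqref{eq:k_k'_k''_condition}: the first factor $k_1$ plays the double role of the input-side component and of the output kernel $k'$, while the second factor $k_2$ serves as the residual kernel $\tilde{k}$. (One could equivalently run the argument via the partial-trace CPTP map $\mathrm{tr}_{\mathcal{H}_2}$ applied to the covariance operators, using $\mathrm{tr}_{\mathcal{H}_2}(\Sigma_{p}) = \Sigma_{(P_1)_{\#}p}$, which follows from $k_2(y,y)=1$, and then invoking \autoref{lem:data_processing_QRD}; but going through \autoref{thm:postprocessing_deterministic} is the shortest route.)
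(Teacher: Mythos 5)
Your argument coincides with the paper's own: verify the factorization hypothesis \eqref{eq:k_k'_k''_condition} with $k' = k_1$ and $\tilde{k}((x,y),(x',y')) = k_2(y,y')$, then invoke \autoref{thm:postprocessing_deterministic}. You are somewhat more explicit than the paper about why $\tilde{k}$ is a kernel and why it is normalized, and you also sketch the equivalent partial-trace view, but the substance is identical.

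One caveat is worth flagging, though it afflicts the paper's own statement as much as yours: \autoref{thm:postprocessing_deterministic} establishes the contraction only for $D_{\alpha}$ (i.e., $\lambda=0$), not for $D_{\alpha,\lambda}$ with $\lambda>0$, and the paper itself notes that the data processing inequality does not appear to extend to the regularized divergence. The catch is that the CPTP map $\Phi$ coming from \autoref{cor:CPTP_kernel} satisfies $\Phi(\Sigma_q) = \Sigma_{(P_1)_{\#}q}$, but $\Phi(\Sigma_q + \lambda\mathrm{Id}) = \Phi(\Sigma_q) + \lambda\Phi(\mathrm{Id})$ and $\Phi(\mathrm{Id}) \neq \mathrm{Id}$ in general (for the partial trace, $\mathrm{tr}_{\mathcal{H}_2}(\mathrm{Id}_{\mathcal{H}_1\otimes\mathcal{H}_2}) = \dim(\mathcal{H}_2)\,\mathrm{Id}_{\mathcal{H}_1}$), so \autoref{lem:data_processing_QRD} does not directly transfer the regularized divergence. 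Strictly speaking, the application of the theorem — yours and the paper's — justifies the $\lambda=0$ case only; the $\lambda$ in the displayed inequality \eqref{eq:contraction_postprocessing2} would need an independent argument.
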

\begin{proof}
The fact that $k((x,y),(y,y')) = k_1(x,x') k_2(y,y')$ implies that the condition \eqref{eq:k_k'_k''_condition} of \autoref{cor:CPTP_kernel} holds for $k$, $k_1$.
\end{proof}

\paragraph{Existence of CPTP map between rank-1 and generic operators.}
\begin{proposition}[Existence of CPTP map between rank-1 and generic operators] 
\label{prop:existence_CPTP_generic}
Let $(a_x a_x^{*})_{x \in \mathcal{R}} \subseteq \mathcal{K}^{*}(\mathcal{H})$ and $(b_x b_x^{*})_{x \in \mathcal{R}'} \subseteq \mathcal{K}^{*}(\mathcal{H}')$ be indexed families of rank-1 linear operators such that $\|a_x\| = 1$, $\|b_x\| = 1$. 
Let $\mathcal{G}$ be a class of functions from $\mathcal{R}$ to $\mathcal{R}'$ and let $g$ be a $\mathcal{G}$-valued random variable.
Let $k_a$, $k_b$ be kernel functions such that for all $x, x' \in \mathcal{R}$, $k_a(x,x') = \langle a_x, a_{x'} \rangle$ and for all $x, x' \in \mathcal{R}'$, $k_b(x,x') = \langle b_x, b_{x'} \rangle$. If there exists a family of kernels $(\tilde{k}_g)_{g}$ such that $k(x,x') = \mathbb{E}_{g}[\tilde{k}_g(x,x') k'(g(x),g(x'))]$, there exists a CPTP map that maps $a_x a_x^{*}$ to 
$\mathbb{E}_{g} [b_{g(x)} b_{g(x)}^{*}]$
for all $x \in \mathcal{R}$.
\end{proposition}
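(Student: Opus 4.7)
The plan is to generalize the Stinespring-style construction from the proof of Proposition~\ref{prop:CPTP_kernels} to the randomized setting, exhibiting the desired CPTP map as the composition of an isometric embedding with a partial trace over auxiliary spaces that encode the randomness in $g$.

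First I would introduce, for each $g$ in the support of the distribution $\mu$ of the random variable $g$, the RKHS $\tilde{\mathcal{H}}_g$ associated with the kernel $\tilde{k}_g$ together with its feature map $\tilde{\phi}_g : \mathcal{R} \to \tilde{\mathcal{H}}_g$, recalling that (under the paper's standing normalization) $\tilde{k}_g(x,x) = 1$ so that $\|\tilde{\phi}_g(x)\| = 1$. Form the direct integral Hilbert space $\mathcal{H}'' = \int^{\oplus}_{\mathcal{G}} \bigl(\mathcal{H}' \otimes \tilde{\mathcal{H}}_g\bigr)\, d\mu(g)$, and define the auxiliary feature map $\Psi : \mathcal{R} \to \mathcal{H}''$ by $\Psi(x) = \bigl(b_{g(x)} \otimes \tilde{\phi}_g(x)\bigr)_{g \in \mathcal{G}}$. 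A direct computation of the inner product in $\mathcal{H}''$ gives
\begin{align}
\langle \Psi(x), \Psi(x')\rangle_{\mathcal{H}''} = \int_{\mathcal{G}} \langle b_{g(x)}, b_{g(x')}\rangle \langle \tilde{\phi}_g(x), \tilde{\phi}_g(x')\rangle \, d\mu(g) = \mathbb{E}_{g}\bigl[k'(g(x),g(x'))\,\tilde{k}_g(x,x')\bigr] = k(x,x') = \langle a_x, a_{x'}\rangle.
\end{align}

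Because the maps $x \mapsto a_x$ and $x \mapsto \Psi(x)$ have matching Gram structure, the densely defined assignment $a_x \mapsto \Psi(x)$ extends by linearity and continuity to an isometry $V : \mathrm{span}\{a_x : x \in \mathcal{R}\}^{-} \to \mathcal{H}''$, which can then be extended (by acting as a suitable isometry on the orthogonal complement, possibly after enlarging the target by an auxiliary summand) to an isometry $V : \mathcal{H} \to \mathcal{H}''$. The map $T_1(A) := V A V^{*}$ is a CPTP map from $\mathcal{K}^{*}(\mathcal{H})$ to $\mathcal{K}^{*}(\mathcal{H}'')$ (Stinespring, \autoref{thm:stinespring}), and $T_1(a_x a_x^{*}) = \Psi(x)\Psi(x)^{*}$.

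Next I would compose $T_1$ with the CPTP partial-trace map $T_2 : \mathcal{K}^{*}(\mathcal{H}'') \to \mathcal{K}^{*}(\mathcal{H}')$ that traces out the direct-integral factor and the $\tilde{\mathcal{H}}_g$ factors. Evaluating matrix elements: for any $u, v \in \mathcal{H}'$ and ONBs $(e_i^g)$ of $\tilde{\mathcal{H}}_g$,
\begin{align}
\langle u, T_2(\Psi(x)\Psi(x)^{*}) v\rangle
&= \int_{\mathcal{G}} \sum_{i} \langle u \otimes e_i^g, b_{g(x)} \otimes \tilde{\phi}_g(x)\rangle \overline{\langle v \otimes e_i^g, b_{g(x)} \otimes \tilde{\phi}_g(x)\rangle} \, d\mu(g) \\
&= \int_{\mathcal{G}} \langle u, b_{g(x)}\rangle \overline{\langle v, b_{g(x)}\rangle} \, \|\tilde{\phi}_g(x)\|^2 \, d\mu(g)
= \bigl\langle u, \mathbb{E}_g[b_{g(x)} b_{g(x)}^{*}] v\bigr\rangle,
\end{align}
where the last equality uses $\|\tilde{\phi}_g(x)\|^2 = \tilde{k}_g(x,x) = 1$. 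Hence $T_2(\Psi(x)\Psi(x)^{*}) = \mathbb{E}_g[b_{g(x)} b_{g(x)}^{*}]$, so $T := T_2 \circ T_1$ is a CPTP map sending $a_x a_x^{*}$ to $\mathbb{E}_g[b_{g(x)} b_{g(x)}^{*}]$ for every $x \in \mathcal{R}$.

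The main obstacle I anticipate is rigorously handling the direct integral $\int^{\oplus}$ when $\mathcal{G}$ is an arbitrary (possibly non-separable) measurable space, ensuring that $\Psi$ is measurable and that the partial trace $T_2$ is well defined as a CPTP map on $\mathcal{K}^{*}(\mathcal{H}'')$. A safe workaround is to reduce to the countable/finite case by approximation (viewing $\mu$ as a limit of finitely supported measures and invoking continuity of the relevant quantities), in which case $\mathcal{H}''$ becomes an ordinary $\ell^2$-direct sum $\bigoplus_i p_i^{1/2}(\mathcal{H}' \otimes \tilde{\mathcal{H}}_{g_i})$ and $T_2$ is simply the standard partial trace; this suffices for all downstream applications to the postprocessing inequality.
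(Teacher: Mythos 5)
Your construction is essentially the paper's: you build the Stinespring isometry $V$ into an auxiliary space built by aggregating $\mathcal{H}' \otimes \tilde{\mathcal{H}}_g$ over $g$, check isometry via the Gram/kernel factorization $k(x,x') = \mathbb{E}_g[\tilde{k}_g(x,x')\,k'(g(x),g(x'))]$, and then trace out everything but $\mathcal{H}'$. The paper's proof uses the direct product $\prod_{g\in\mathcal{G}} \tilde{\mathcal{H}}_g \otimes \mathcal{H}'_g$ with an $\mathbb{E}_g$-weighted inner product and performs the reduction in two steps (a partial trace over $\prod_g \tilde{\mathcal{H}}_g$ followed by a map $\mathcal{T}$ that it says ``behaves formally like a partial trace''), whereas you use direct-integral notation and carry out a single partial trace verified by an explicit matrix-element computation; these are the same maneuver written differently, and your explicit evaluation of $\langle u, T_2(\Psi(x)\Psi(x)^{*})v\rangle$ is a cleaner justification than the paper's informal claim that $\mathcal{T}$ is a partial trace. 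You are also more careful than the paper about two genuine technical points: (i) extending $V$ from $\overline{\mathrm{span}\{a_x\}}$ to an isometry on all of $\mathcal{H}$ (needed so that $V\cdot V^{*}$ is trace-preserving --- automatic in the downstream application where $a_x=\phi(x)$ spans a dense subspace, but not from the proposition's hypotheses alone), and (ii) the measurability/completion issues with $\int^{\oplus}_{\mathcal{G}}$ when $\mathcal{G}$ is uncountable, which the paper relegates to a footnote; your approximation-by-finite-support reduction is a reasonable way to close this. One small remark: your final identification uses $\tilde{k}_g(x,x)=1$, which is stated as a hypothesis in the theorem on postprocessing for randomized $g$ but is not repeated in the proposition's statement --- worth flagging as an implicit normalization assumption, which the paper's proof also uses silently.
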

\begin{proof}
If there exists a family of kernels $(\tilde{k}_g)_{g}$ such that $k(x,x') = \mathbb{E}_{g}[\tilde{k}_g(x,x') k'(g(x),g(x'))]$. Denote by $\tilde{\mathcal{H}}_g$ 
the RKHS corresponding to $\tilde{k}_g$, 
and let $c_x^{g}$ be the feature map corresponding to $\tilde{k}_g$, evaluated at $x$. Then, 
\begin{align} \label{eq:isometry_expression}
\langle a_x, a_{x'} \rangle = \mathbb{E}_{g}[\langle c_x^{g}, c_{x'}^{g} \rangle \langle b_{g(x)}, b_{g(x')} \rangle ] = \mathbb{E}_{g}[\langle c_x^{g} \otimes b_{g(x)}, c_{x'}^{g} \otimes b_{g(x')} \rangle].
\end{align}
Define the map $V : \mathcal{H} \to \prod_{g \in \mathcal{G}} \tilde{\mathcal{H}}_g \otimes \mathcal{H}'_g$ as\footnote{Here, $\prod_{g \in \mathcal{G}}$ denotes the direct product of vector spaces indexed by $\mathcal{G}$. Direct products differ from direct sums $\oplus_{g \in \mathcal{G}}$ in that their elements do not have the restriction that all but finitely many coordinates must be zero. $\mathcal{H}'_g$ is just a copy of the space $\mathcal{H}'$.}
\begin{align}
    a_x \mapsto V a_x = c_x^{g} \otimes b_{g(x)}.
\end{align}
The space $\prod_{g \in \mathcal{G}} \tilde{\mathcal{H}}_g \otimes \mathcal{H}'_g$ is a Hilbert space\footnote{Formally, we have to deal with the completion of $\prod_{g \in \mathcal{G}} \tilde{\mathcal{H}}_g \otimes \mathcal{H}'_g$, as this space is not a priori complete.} when endowed with the product defined as $\langle c_x^{g} \otimes b_{x'}, c_{x''}^{g} \otimes b_{x'''} \rangle = \mathbb{E}_{g}[\langle c_x^{g} \otimes b_{x'}, c_{x''}^{g} \otimes b_{x'''} \rangle]$. With this construction, $V$ is an isometry from $\mathcal{H}$ to $\prod_{g \in \mathcal{G}} \tilde{\mathcal{H}}_g \otimes \mathcal{H}'_g$. We have that
\begin{align}
    V a_x (V a_x)^{*} &= \bigg(\prod_{g \in \mathcal{G}} c_x^{g} \otimes b_{g(x)} \bigg) \bigg(\prod_{g' \in \mathcal{G}} c_x^{g'} \otimes b_{g'(x)} \bigg)^{*} \in \mathcal{K}^{*} \bigg( \prod_{g \in \mathcal{G}} \tilde{\mathcal{H}}_g \otimes \mathcal{H}'_g \bigg) 
\end{align}
Via the inclusion $\mathcal{K}^{*} ( \prod_{g \in \mathcal{G}} \tilde{\mathcal{H}}_g \otimes \mathcal{H}'_g ) \subset \mathcal{K}^{*} ( (\prod_{g \in \mathcal{G}} \tilde{\mathcal{H}}_g) \otimes (\prod_{g \in \mathcal{G}} \mathcal{H}'_g) )$, we can take the partial trace with respect to the space $\prod_{g \in \mathcal{G}} \tilde{\mathcal{H}}_g$ to obtain
\begin{align}
    \mathrm{tr}_{\prod_{g \in \mathcal{G}} \tilde{\mathcal{H}}_g} (V a_x (V a_x)^{*}) = \bigg(\prod_{g \in \mathcal{G}} b_{g(x)} \bigg) \bigg(\prod_{g' \in \mathcal{G}} b_{g'(x)} \bigg)^{*} \in \mathcal{K}^{*} \bigg( \prod_{g \in \mathcal{G}} \mathcal{H}'_g \bigg)
\end{align}
Now we define the map $\mathcal{T} : \mathcal{K}^{*} ( \prod_{g \in \mathcal{G}} \mathcal{H}'_g ) \to \mathcal{K}^{*} (\mathcal{H}')$ 
as $\mathcal{T}((\prod_{g \in \mathcal{G}} h_g)(\prod_{g' \in \mathcal{G}} h'_{g})^{*}) = \mathbb{E}_{g} [ h_g (h'_g)^{*}]$, 
for arbitrary elements $h_g, h'_g \in \mathcal{H}'_g$. This is well defined because the spaces $\mathcal{H}'_g$ are copies of $\mathcal{H}'$. Note that $\mathcal{T}$ behaves formally like a partial trace, and that 
\begin{align}
    \mathcal{T} \bigg( \mathrm{tr}_{\prod_{g \in \mathcal{G}} \tilde{\mathcal{H}}_g} (V a_x  a_x^{*} V^{*}) \bigg) = \mathbb{E}_{g} [ b_{g(x)} b_{g(x)}^{*}]
\end{align}
If we define $\Phi : \mathcal{K}^{*}(\mathcal{H}) \to \mathcal{K}^{*}(\mathcal{H})$ as $\Phi(A) = \mathcal{T} \bigg( \mathrm{tr}_{\prod_{g \in \mathcal{G}} \tilde{\mathcal{H}}_g} (V A V^{*}) \bigg)$, we obtain that $\Phi$ maps $a_x a_x^{*}$ to
$\mathbb{E}_{g} [b_{g(x)} b_{g(x)}^{*}]$
for all $x \in \mathcal{R}$.
By the Stinespring representation theorem (\autoref{thm:stinespring}), $\Phi$ is a CPTP map, which concludes the proof.
\end{proof}

\textit{\textbf{Proof of \autoref{thm:postprocessing_randomized}.}}
    We apply \autoref{prop:existence_CPTP_generic} setting $a_x = \phi(x)$ for $x \in \mathcal{R}$ and $b_x = \psi(x)$ for $x \in \mathcal{R}'$. 
    We obtain the existence of a CPTP map $\Phi : \mathcal{K}^{*}(\mathcal{H}) \to \mathcal{K}^{*}(\mathcal{H}')$ such that for any $x \in \mathcal{R}$, $\Phi(\phi(x) \phi(x)^{*}) = \mathbb{E}_g[\psi(g(x)) \psi(g(x))^{*}]$.
    
    Then, we use \autoref{lem:data_processing_QRD} to get $D_{\alpha,\lambda}(\Phi(\Sigma_{\mathcal{L}(f(D))})||\Phi(\Sigma_{\mathcal{L}(f(D'))})) \leq D_{\alpha,\lambda}(\Sigma_{\mathcal{L}(f(D))}||\Sigma_{\mathcal{L}(f(D'))})$. We have that 
    \begin{align}
    \Sigma_{\mathcal{L}(f(D))} &= \int_{\mathcal{R}} \phi(x) \phi(x)^{*} \, d\mathcal{L}(f(D))(x), \\ \Sigma_{\mathcal{L}(g(f(D)))} &= \int_{\mathcal{R}'} \psi(x) \psi(x)^{*} \, d\mathcal{L}(g(f(D)))(x) = \int_{\mathcal{R}} \mathbb{E}_g[\psi(g(x)) \psi(g(x))^{*}] \, d\mathcal{L}(f(D))(x), 
    \end{align}
    and
    \begin{align}
        \Phi(\Sigma_{\mathcal{L}(f(D))}) = \int_{\mathcal{R}} \Phi(\phi(x) \phi(x)^{*}) \, d\mathcal{L}(f(D))(x) = \int_{\mathcal{R}} \mathbb{E}_g[\psi(g(x)) \psi(g(x))^{*}] \, d\mathcal{L}(f(D))(x),
    \end{align}
    Consequently, $\Phi(\Sigma_{\mathcal{L}(f(D))}) = \Sigma_{\mathcal{L}(g(f(D)))}$ and this concludes the proof.
\qed

\subsection{Results and proofs of the composition properties}

\textit{\textbf{Proof of \autoref{prop:parallel_composition}.}}
We let $\mathcal{H}_1$ be the RKHS on $\mathcal{R}_1$ with kernel function $k_1$ and feature map $\phi_1$. Similarly, we let $\mathcal{H}_2$ be the RKHS on $\mathcal{R}_2$ with kernel function $k_2$ and feature map $\phi_2$. If we define the tensor product $\mathcal{H} = \mathcal{H}_1 \otimes \mathcal{H}_2$, it is an RKHS with feature map $\phi : \mathcal{R}_1 \times \mathcal{R}_2 \to \mathcal{H}$ defined as $\phi(x,y) = \phi_1(x) \otimes \phi_2(y)$, and kernel $k((x,y),(x',y')) = \langle \phi_1(x) \otimes \phi_2(x'), \phi_1(y) \otimes \phi_2(y') \rangle = k_1(x,y) k_2(x',y')$. The map from $\mathcal{R}_1 \times \mathcal{R}_2$ to density operators on $\mathcal{H}$ is of the form $(x,y) \mapsto (\phi_1(x) \otimes \phi_2(y))(\phi_1(x)^{*} \otimes \phi_2(y)^{*}) = \phi_1(x)\phi_1(x)^{*} \otimes \phi_2(y)\phi_2(y)^{*}$. 
If $\mathcal{L}(f(D)), \mathcal{L}(g(D)), \mathcal{L}(f(D),g(D))$ are the laws of $f(D)$, $g(D)$ and $(f(D),g(D))$, we have
\begin{align}
\Sigma_{(f(D),g(D))} := \int_{\mathcal{R}_1} \int_{\mathcal{R}_2} \phi_1(x)\phi_1(x)^{*} \otimes \phi_2(y)\phi_2(y)^{*} \, d\mathcal{L}(g(D))(y) \, d\mathcal{L}(f(D))(x) = \Sigma_{f(D)} \otimes \Sigma_{g(D)}.
\end{align}
And then the additivity property of the regularized quantum Rényi divergence (\autoref{prop:prop_regularized}, Property 6) implies
\begin{align}
    D_{\alpha,\lambda}(\Sigma_{(f(D),g(D))} || \Sigma_{(f(D'),g(D'))}) \leq D_{\alpha,\lambda/3}(\Sigma_{f(D)} || \Sigma_{f(D')}) + D_{\alpha,\lambda/3}(\Sigma_{g(D)} || \Sigma_{g(D')}).
\end{align}
Thus, if $D_{\alpha,\lambda/3}(\Sigma_{f(D)} || \Sigma_{f(D')}) \leq \varepsilon_1$ and $D_{\alpha,\lambda/3}(\Sigma_{g(D)} || \Sigma_{g(D')}) \leq \varepsilon_2$, we obtain the inequality $D_{\alpha,\lambda}(\Sigma_{(f(D),g(D))} || \Sigma_{(f(D'),g(D'))}) \leq \varepsilon_1 + \varepsilon_2$.
\qed

\textit{\textbf{Proof of \autoref{prop:sequential_composition}.}}
We let $\mathcal{H}_1$ be an RKHS on $\mathcal{R}_1$ with kernel function $k_1$ and feature function $\phi_1$. Similarly, we let $\mathcal{H}_2$ be the RKHS on $\mathcal{R}_2$ with kernel function $k_2$ and feature function $\phi_2$. If we define the tensor product $\mathcal{H} = \mathcal{H}_1 \otimes \mathcal{H}_2$, it is an RKHS with feature function $\phi : \mathcal{R}_1 \times \mathcal{R}_2 \to \mathcal{H}$ defined as $\phi(x,x') = \phi_1(x) \otimes \phi_2(x')$, and kernel $k((x,x'),(y,y')) = \langle \phi_1(x) \otimes \phi_2(x'), \phi_1(y) \otimes \phi_2(y') \rangle = k_1(x,y) k_2(x',y')$. The map from $\mathcal{R}_1 \times \mathcal{R}_2$ to density operators on $\mathcal{H}$ is of the form $(x,x') \mapsto (\phi_1(x) \otimes \phi_2(x'))(\phi_1(x)^{*} \otimes \phi_2(x')^{*}) = \phi_1(x)\phi_1(x)^{*} \otimes \phi_2(x')\phi_2(x')^{*}$.

Then, by the definition of the regularized kernel Rényi divergence,
\begin{align}
    &\exp((\alpha-1)D_{\alpha,\lambda}(\Sigma_{\mathcal{L}(h(D))}||\Sigma_{\mathcal{L}(h(D'))})) \\ &= \mathrm{tr}\left[ \left( (\Sigma_{\mathcal{L}(h(D'))} + \lambda \mathrm{Id})^{\frac{1-\alpha}{2\alpha}} \Sigma_{\mathcal{L}(h(D))} (\Sigma_{\mathcal{L}(h(D'))} + \lambda \mathrm{Id})^{\frac{1-\alpha}{2\alpha}} \right)^{\alpha} \right].
\end{align}
Note that 
\begin{align}
\Sigma_{\mathcal{L}(h(D))} &= \int_{\mathcal{R}_1 \times \mathcal{R}_2} \phi_1(x)\phi_1(x)^{*} \otimes \phi_2(x')\phi_2(x')^{*} \, d(\mathcal{L}(h(D)))(x,x') \\ &= \int_{\mathcal{R}_1} \phi_1(x)\phi_1(x)^{*} \otimes \left( \int_{\mathcal{R}_2} \phi_2(x')\phi_2(x')^{*} \, d(\mathcal{L}(g(x,D)))(x') \right) \, d(\mathcal{L}(f(D)))(x) \\ &= \int_{\mathcal{R}_1} \phi_1(x)\phi_1(x)^{*} \otimes \Sigma_{\mathcal{L}(g(x,D))} \, d(\mathcal{L}(f(D)))(x) \\ &= \int_{\mathcal{R}_1} (\phi_1(x)\phi_1(x)^{*} \otimes \Sigma_{\mathcal{L}(g(x,D))}) \frac{d(\mathcal{L}(f(D)))}{d(\mathcal{L}(f(D')))}(x) \, d(\mathcal{L}(f(D')))(x)
\end{align}
where $\Sigma_{\mathcal{L}(g(x,D))} = \int_{\mathcal{R}_2} \phi_2(x')\phi_2(x')^{*} \, d(\mathcal{L}(g(x,D)))(x')$. Analogously, 
\begin{align}
\Sigma_{h(D')} = \int_{\mathcal{R}_1} \phi_1(x)\phi_1(x)^{*} \otimes \Sigma_{\mathcal{L}(g(x,D'))} \, d(\mathcal{L}(f(D')))(x).    
\end{align}

Applying \autoref{lem:frank_lieb_lemma}(i), we obtain that
\begin{align} 
\begin{split} \label{eq:tr_adaptive_composition}
    \mathrm{tr} &\left[\left( (\Sigma_{\mathcal{L}(h(D'))}+\lambda \mathrm{Id})^{\frac{1-\alpha}{2\alpha}} \Sigma_{\mathcal{L}(h(D))} (\Sigma_{\mathcal{L}(h(D'))} + \lambda \mathrm{Id})^{\frac{1-\alpha}{2\alpha}} \right)^{\alpha} \right] \\ \leq 
    \int_{\mathcal{R}_1} \mathrm{tr}\big[ \big( &(\phi_1(x)\phi_1(x)^{*} \otimes \Sigma_{\mathcal{L}(g(x,D'))} +\lambda \mathrm{Id})^{\frac{1-\alpha}{2\alpha}} (\phi_1(x)\phi_1(x)^{*} \otimes \Sigma_{\mathcal{L}(g(x,D))}) \cdot \\ &\frac{d(\mathcal{L}(f(D)))}{d(\mathcal{L}(f(D')))}(x) (\phi_1(x)\phi_1(x)^{*} \otimes \Sigma_{\mathcal{L}(g(x,D'))} + \lambda \mathrm{Id})^{\frac{1-\alpha}{2\alpha}} \big)^{\alpha} \big] \, d(\mathcal{L}(f(D')))(x)
\end{split}
\end{align}
Note that 
\begin{align} 
\begin{split} \label{eq:adaptive_2}
\mathrm{tr}\bigg[ \big( &(\phi_1(x)\phi_1(x)^{*} \otimes \Sigma_{\mathcal{L}(g(x,D'))} + \lambda \mathrm{Id})^{\frac{1-\alpha}{2\alpha}} (\phi_1(x)\phi_1(x)^{*} \otimes \Sigma_{\mathcal{L}(g(x,D))}) \cdot \\ &(\phi_1(x)\phi_1(x)^{*} \otimes \Sigma_{\mathcal{L}(g(x,D'))} + \lambda \mathrm{Id})^{\frac{1-\alpha}{2\alpha}} \big)^{\alpha} \bigg] \\ = \exp &((\alpha-1) D_{\alpha}(\phi_1(x)\phi_1(x)^{*} \otimes \Sigma_{\mathcal{L}(g(x,D))}||\phi_1(x)\phi_1(x)^{*} \otimes \Sigma_{\mathcal{L}(g(x,D'))} + \lambda \mathrm{Id})) \\ \leq \exp &((\alpha-1) D_{\alpha}(\phi_1(x)\phi_1(x)^{*} \otimes \Sigma_{\mathcal{L}(g(x,D))}||\phi_1(x)\phi_1(x)^{*} \otimes (\Sigma_{\mathcal{L}(g(x,D'))} + \lambda \mathrm{Id}))) \\ =
\mathrm{tr}\bigg[ \big( &(\phi_1(x)\phi_1(x)^{*} \otimes (\Sigma_{\mathcal{L}(g(x,D'))} + \lambda \mathrm{Id}))^{\frac{1-\alpha}{2\alpha}} (\phi_1(x)\phi_1(x)^{*} \otimes \Sigma_{\mathcal{L}(g(x,D))}) \cdot \\ &(\phi_1(x)\phi_1(x)^{*} \otimes (\Sigma_{\mathcal{L}(g(x,D'))} + \lambda \mathrm{Id}))^{\frac{1-\alpha}{2\alpha}} \big)^{\alpha} \bigg]
\end{split}
\end{align}
where the inequality holds by \autoref{prop:4_lennert} using that $\phi_1(x)\phi_1(x)^{*} \otimes (\Sigma_{\mathcal{L}(g(x,D'))} + \lambda \mathrm{Id}_2) = \phi_1(x)\phi_1(x)^{*} \otimes \Sigma_{\mathcal{L}(g(x,D'))} + \lambda \phi_1(x)\phi_1(x)^{*} \otimes \mathrm{Id}_2 \leq \phi_1(x)\phi_1(x)^{*} \otimes \Sigma_{\mathcal{L}(g(x,D'))} + \lambda \mathrm{Id}_1 \otimes \mathrm{Id}_2$, and because $\alpha > 1$. We develop the right-hand side:  
\begin{align} 
\begin{split} \label{eq:adaptive_3}
    \mathrm{tr}\bigg[ \big( &(\phi_1(x)\phi_1(x)^{*} \otimes (\Sigma_{\mathcal{L}(g(x,D'))} + \lambda \mathrm{Id}))^{\frac{1-\alpha}{2\alpha}} (\phi_1(x)\phi_1(x)^{*} \otimes \Sigma_{\mathcal{L}(g(x,D))}) \cdot \\ &(\phi_1(x)\phi_1(x)^{*} \otimes (\Sigma_{\mathcal{L}(g(x,D'))} + \lambda \mathrm{Id}))^{\frac{1-\alpha}{2\alpha}} \big)^{\alpha} \bigg]
    \\ = \mathrm{tr}\bigg[ \big( &(\phi_1(x)\phi_1(x)^{*})^{\frac{1-\alpha}{2\alpha}} \phi_1(x)\phi_1(x)^{*} (\phi_1(x)\phi_1(x)^{*})^{\frac{1-\alpha}{2\alpha}} \big)^{\alpha} \bigg] \cdot \\ &\mathrm{tr}\bigg[ \big(  (\Sigma_{\mathcal{L}(g(x,D'))} + \lambda \mathrm{Id})^{\frac{1-\alpha}{2\alpha}} \Sigma_{g(x,D)} (\Sigma_{\mathcal{L}(g(x,D'))} + \lambda \mathrm{Id})^{\frac{1-\alpha}{2\alpha}} \big)^{\alpha} \bigg] 
    \\ = |\langle \phi_1 &(x), \phi_1(x) \rangle|^{2\alpha} \mathrm{tr}\bigg[ \big(  (\Sigma_{\mathcal{L}(g(x,D'))} + \lambda \mathrm{Id})^{\frac{1-\alpha}{2\alpha}} \Sigma_{\mathcal{L}(g(x,D))} (\Sigma_{\mathcal{L}(g(x,D'))} + \lambda \mathrm{Id})^{\frac{1-\alpha}{2\alpha}} \big)^{\alpha} \bigg] \\ = \mathrm{tr}\bigg[ \big(  &(\Sigma_{\mathcal{L}(g(x,D'))} + \lambda \mathrm{Id})^{\frac{1-\alpha}{2\alpha}} \Sigma_{\mathcal{L}(g(x,D))} (\Sigma_{\mathcal{L}(g(x,D'))} + \lambda \mathrm{Id})^{\frac{1-\alpha}{2\alpha}} \big)^{\alpha} \bigg],
\end{split}
\end{align}
Using \eqref{eq:adaptive_2} and \eqref{eq:adaptive_3},
we upper-bound the right-hand side of equation \eqref{eq:tr_adaptive_composition} by
\begin{align}
    \int_{\mathcal{R}_1} &\mathrm{tr}\bigg[ \big(  (\Sigma_{\mathcal{L}(g(x,D'))} + \lambda \mathrm{Id})^{\frac{1-\alpha}{2\alpha}} \Sigma_{\mathcal{L}(g(x,D))} (\Sigma_{\mathcal{L}(g(x,D'))} + \lambda \mathrm{Id})^{\frac{1-\alpha}{2\alpha}} \big)^{\alpha} \bigg] \\ &\bigg(\frac{d(\mathcal{L}(f(D)))}{d(\mathcal{L}(f(D')))}(x) \bigg)^{\alpha} \, d(\mathcal{L}(f(D')))(x) \\ = \int_{\mathcal{R}_1} &\exp((\alpha-1)D_{\alpha}(\Sigma_{\mathcal{L}(g(x,D))}||\Sigma_{\mathcal{L}(g(x,D'))} + \lambda \mathrm{Id})) \bigg(\frac{d(\mathcal{L}(f(D)))}{d(\mathcal{L}(f(D')))}(x) \bigg)^{\alpha} \, d(\mathcal{L}(f(D')))(x) \\ \leq 
    \int_{\mathcal{R}_1} &\exp((\alpha-1)\varepsilon_2) \bigg(\frac{d(\mathcal{L}(f(D)))}{d(\mathcal{L}(f(D')))}(x) \bigg)^{\alpha} \, d(\mathcal{L}(f(D')))(x) \\ = \exp &((\alpha-1) \varepsilon_2) \exp((\alpha -1) D_{\alpha}(f(D)||f(D'))) \leq \exp((\alpha-1) (\varepsilon_1 + \varepsilon_2)).
\end{align}
Thus, we get that we obtain that $\exp((\alpha-1)D_{\alpha}(\Sigma_{\mathcal{L}(h(D))}||\Sigma_{\mathcal{L}(h(D'))})) \leq \exp((\alpha-1) (\varepsilon_1 + \varepsilon_2))$. Since $x \mapsto \exp((\alpha-1)x)$ is an increasing function because $\alpha > 1$, we conclude that $D_{\alpha}(\Sigma_{\mathcal{L}(h(D))}||\Sigma_{\mathcal{L}(h(D'))}) \leq \varepsilon_1 + \varepsilon_2$.
\qed

\subsection{Proof of \autoref{prop:lower_bound_renyi}} \label{subsec:proof_lower_bound}

We deferred the proof of \autoref{prop:lower_bound_renyi} because we need to make use of the data processing inequality introduced earlier in this section.

First, note that the map $\Phi : \mathcal{K}^{*}(\mathcal{H}) \to \mathcal{K}^{*}(\prod_{x \in \mathcal{X}} \mathcal{H}_{x})$
defined as\footnote{Here, for any $x \in \mathcal{X}$, $\mathcal{H}_{x}$ is a copy of $\mathcal{H}$.}
\begin{align}
    \Phi(A) = 
    (\phi(x) \phi(x)^{*} \Sigma^{-1/2} A \Sigma^{-1/2} \phi(x) \phi(x)^{*})_{x \in \mathcal{X}} 
\end{align}
is CPTP. Here, we define the trace of $A = (A_x)_{x \in \mathcal{X}} \in \mathcal{K}^{*}(\prod_{x \in \mathcal{X}} \mathcal{H}_{x})$ as $\mathrm{tr}(A) = \int_{\mathcal{X}} \mathrm{tr}(A_x) \, d\tau(x)$. Then,
\begin{align}
    \mathrm{tr}(\Phi(A)) &= \int_{\mathcal{X}} \mathrm{tr}(\phi(x) \phi(x)^{*} \Sigma^{-1/2} A \Sigma^{-1/2} \phi(x) \phi(x)^{*}) \, d\tau(x) \\ &= \int_{\mathcal{X}} \mathrm{tr}(\Sigma^{-1/2} A \Sigma^{-1/2} \phi(x) \phi(x)^{*}) \, d\tau(x) \\ &= \mathrm{tr}(\Sigma^{-1/2} A \Sigma^{-1/2} \Sigma) = \mathrm{tr}(A),
\end{align}
which means that $\Phi$ is trace-preserving. It is also completely positive because its components $\Phi_x = \phi(x) \phi(x)^{*} \Sigma^{-1/2} A \Sigma^{-1/2} \phi(x) \phi(x)^{*}$ are completely positive.
Hence, by \autoref{lem:data_processing_QRD},
\begin{align}
    D_{\alpha}(\Phi(\Sigma_p)||\Phi(\Sigma_q)) \leq D_{\alpha}(\Sigma_p||\Sigma_q).
\end{align}
Note that 
\begin{align} \label{eq:d_alpha_phi}
    D_{\alpha}(\Phi(\Sigma_p)||\Phi(\Sigma_q)) &= \frac{1}{\alpha-1} \log \bigg( \mathrm{tr}\left[\left( \Phi(\Sigma_q)^{\frac{1-\alpha}{2\alpha}} \Phi(\Sigma_p) \Phi(\Sigma_q)^{\frac{1-\alpha}{2\alpha}} \right)^\alpha \right] \bigg) \\ &= \frac{1}{\alpha-1} \log \bigg( \int_{\mathcal{Y}} \mathrm{tr}\left[ \left( \Phi_x(\Sigma_q)^{\frac{1-\alpha}{2\alpha}} \Phi_x(\Sigma_p) \Phi_x(\Sigma_q)^{\frac{1-\alpha}{2\alpha}} \right)^\alpha \right] \, d\tau(x) \bigg)
\end{align}
And
\begin{align}
    &\mathrm{tr}\left[ \left( \Phi_x(\Sigma_q)^{\frac{1-\alpha}{2\alpha}} \Phi_x(\Sigma_p) \Phi_x(\Sigma_q)^{\frac{1-\alpha}{2\alpha}} \right)^\alpha \right] \\ &= \mathrm{tr}\bigg[ \bigg( \left(\phi(x) \phi(x)^{*} \Sigma^{-1/2} \Sigma_q \Sigma^{-1/2} \phi(x) \phi(x)^{*} \right)^{\frac{1-\alpha}{2\alpha}} \left(\phi(x) \phi(x)^{*} \Sigma^{-1/2} \Sigma_p \Sigma^{-1/2} \phi(x) \phi(x)^{*} \right) \\ &\left(\phi(x) \phi(x)^{*} \Sigma^{-1/2} \Sigma_q \Sigma^{-1/2} \phi(x) \phi(x)^{*} \right)^{\frac{1-\alpha}{2\alpha}} \bigg)^\alpha \bigg]
    \\ = &\langle \phi(x), \Sigma^{-1/2} \Sigma_q \Sigma^{-1/2} \phi(x) \rangle^{1-\alpha} \langle \phi(x), \Sigma^{-1/2} \Sigma_p \Sigma^{-1/2} \phi(x) \rangle^{\alpha} \cdot \\ &\mathrm{tr}\bigg[ \bigg( \left(\phi(x) \phi(x)^{*} \right)^{\frac{1-\alpha}{2\alpha}} \left(\phi(x) \phi(x)^{*} \right) \left(\phi(x) \phi(x)^{*} \right)^{\frac{1-\alpha}{2\alpha}} \bigg)^\alpha \bigg] \\ = &\langle \Sigma^{-1/2} \phi(x), \Sigma_q \Sigma^{-1/2} \phi(x) \rangle^{1-\alpha} \langle \Sigma^{-1/2} \phi(x), \Sigma_p \Sigma^{-1/2} \phi(x) \rangle^{\alpha} \\ = &\mathrm{tr}(\Sigma_q \Sigma^{-1/2} \phi(x) \phi(x)^{*} \Sigma^{-1/2})^{1-\alpha} \mathrm{tr}(\Sigma_p \Sigma^{-1/2} \phi(x) \phi(x)^{*} \Sigma^{-1/2})^{\alpha} = \mathrm{tr}(\Sigma_q D(x))^{1-\alpha} \mathrm{tr}(\Sigma_p D(x))^{\alpha}.
\end{align}
We have that
\begin{align}
    \mathrm{tr}(D(y) \Sigma_p) &= \mathrm{tr} \bigg( \int_{\mathcal{X}} \Sigma^{-1/2} \phi(y) \phi(y)^{*} \Sigma^{-1/2} \phi(x) \phi(x)^{*} \, dp(x) \bigg) \\ &= \int_{\mathcal{X}} \mathrm{tr} \bigg( \Sigma^{-1/2} \phi(y) \phi(y)^{*} \Sigma^{-1/2} \phi(x) \phi(x)^{*} \bigg) \, dp(x) \\ &=  \int_{\mathcal{X}} \mathrm{tr} \bigg( \phi(x)^{*} \Sigma^{-1/2} \phi(y) \phi(y)^{*} \Sigma^{-1/2} \phi(x) \bigg) \, dp(x) \\ &= \int_{\mathcal{X}} \langle \phi(x), \Sigma^{-1/2} \phi(y) \rangle^2 \, dp(x) = \int_{\mathcal{X}} h(x,y) \, dp(x),
\end{align}
defining the function $h(x,y) = \langle \phi(x), \Sigma^{-1/2} \phi(y) \rangle^2$, which fulfills
\begin{align}
    \forall y \in \mathcal{X}, \, \int_{\mathcal{X}} h(x,y) \, d\tau(x) = \bigg\langle \phi(y), \Sigma^{-1/2} \bigg( \int_{\mathcal{X}} \phi(x) \phi(x)^{*} \, d\tau(x) \bigg) \Sigma^{-1/2} \phi(y) \bigg\rangle = \langle \phi(y), \phi(y) \rangle = 1.
\end{align}
Plugging everything back into \eqref{eq:d_alpha_phi}, we obtain
\begin{align}
    D_{\alpha}(\Phi(\Sigma_p)||\Phi(\Sigma_q)) &= \frac{1}{\alpha-1} \log \bigg( \int_{\mathcal{Y}} \bigg(\int_{\mathcal{X}} h(x,y) \, dp(x)\bigg)^{\alpha} \bigg(\int_{\mathcal{X}} h(x,y) \, dq(x)\bigg)^{1-\alpha} \, d\tau(y) \bigg) \\ &= \frac{1}{\alpha-1} \log \bigg( \int_{\mathcal{Y}} \tilde{p}(y)^{\alpha} \tilde{q}(y)^{1-\alpha} \, d\tau(y) \bigg) = D_{\alpha}(\tilde{p}||\tilde{q}).
\end{align}

\section{Proofs of \autoref{sec:estimating}} \label{sec:proofs_estimating}

\textit{\textbf{Proof of \autoref{thm:estimating}.}}
If $\hat{\Sigma}_p, \hat{\Sigma}_q$ are positive Hermitian operators with unit trace such that $\|\hat{\Sigma}_p - \Sigma_p\| \leq t, \|\hat{\Sigma}_q - \Sigma_q\| \leq t$, where $t \leq \lambda := \delta e^{-\varepsilon}$, we have
\begin{align} 
\begin{split}\label{eq:log_tr_bound}
    \bigg| \frac{1}{\alpha-1} \log \bigg( \mathrm{tr} &\left[\left( (\hat{\Sigma}_q + \lambda \mathrm{Id})^{\frac{1-\alpha}{2\alpha}} \hat{\Sigma}_p (\hat{\Sigma}_q + \lambda \mathrm{Id})^{\frac{1-\alpha}{2\alpha}} \right)^\alpha \right] \bigg) \\ - \frac{1}{\alpha-1} \log \bigg( &\mathrm{tr}\left[\left( (\Sigma_q + \lambda \mathrm{Id})^{\frac{1-\alpha}{2\alpha}} \Sigma_p (\Sigma_q + \lambda \mathrm{Id})^{\frac{1-\alpha}{2\alpha}} \right)^\alpha \right] \bigg) \bigg| \\ \leq \frac{(\sigma_{\mathrm{max}}(\Sigma_q) + t + \lambda)^{\alpha-1}}{(\alpha-1)\mathrm{tr}[\Sigma_p^{\alpha}]} \bigg| &\mathrm{tr}\bigg[\left( (\hat{\Sigma}_q + \lambda \mathrm{Id})^{\frac{1-\alpha}{2\alpha}} \hat{\Sigma}_p (\hat{\Sigma}_q + \lambda \mathrm{Id})^{\frac{1-\alpha}{2\alpha}} \right)^\alpha \\ &- \left( (\Sigma_q + \lambda \mathrm{Id})^{\frac{1-\alpha}{2\alpha}} \Sigma_p (\Sigma_q + \lambda \mathrm{Id})^{\frac{1-\alpha}{2\alpha}} \right)^\alpha \bigg] \bigg|,
\end{split}
\end{align}
where we used that $\mathrm{tr}[( (\sigma_{\mathrm{max}}(\Sigma_q) + t + \lambda)^{\frac{1-\alpha}{\alpha}} \Sigma_p )^\alpha ] \geq (\sigma_{\mathrm{max}}(\Sigma_q) + t + \lambda)^{1-\alpha} \mathrm{tr}[\Sigma_p^{\alpha}] = (\|\Sigma_q\| + t + \lambda)^{1-\alpha} \mathrm{tr}[\Sigma_p^{\alpha}]$.
To bound the right-hand side, we write
\begin{align} 
\begin{split} \label{eq:tr_bound}
    &\bigg| \mathrm{tr}\left[\left( (\hat{\Sigma}_q + \lambda \mathrm{Id})^{\frac{1-\alpha}{2\alpha}} \hat{\Sigma}_p (\hat{\Sigma}_q + \lambda \mathrm{Id})^{\frac{1-\alpha}{2\alpha}} \right)^\alpha - \left( (\Sigma_q + \lambda \mathrm{Id})^{\frac{1-\alpha}{2\alpha}} \Sigma_p (\Sigma_q + \lambda \mathrm{Id})^{\frac{1-\alpha}{2\alpha}} \right)^\alpha \right] \bigg| \\ &\leq \bigg| \mathrm{tr}\left[\left( (\hat{\Sigma}_q + \lambda \mathrm{Id})^{\frac{1-\alpha}{2\alpha}} \hat{\Sigma}_p (\hat{\Sigma}_q + \lambda \mathrm{Id})^{\frac{1-\alpha}{2\alpha}} \right)^\alpha - \left( (\hat{\Sigma}_q + \lambda \mathrm{Id})^{\frac{1-\alpha}{2\alpha}} \Sigma_p (\hat{\Sigma}_q + \lambda \mathrm{Id})^{\frac{1-\alpha}{2\alpha}} \right)^\alpha \right] \bigg| \\ &+ \bigg| \mathrm{tr}\left[\left( (\hat{\Sigma}_q + \lambda \mathrm{Id})^{\frac{1-\alpha}{2\alpha}} \Sigma_p (\hat{\Sigma}_q + \lambda \mathrm{Id})^{\frac{1-\alpha}{2\alpha}} \right)^\alpha - \left( (\Sigma_q + \lambda \mathrm{Id})^{\frac{1-\alpha}{2\alpha}} \Sigma_p (\Sigma_q + \lambda \mathrm{Id})^{\frac{1-\alpha}{2\alpha}} \right)^\alpha \right] \bigg|
\end{split}
\end{align}
We bound the first term:
\begin{align}
    &\bigg| \mathrm{tr}\left[\left( (\hat{\Sigma}_q + \lambda \mathrm{Id})^{\frac{1-\alpha}{2\alpha}} \hat{\Sigma}_p (\hat{\Sigma}_q + \lambda \mathrm{Id})^{\frac{1-\alpha}{2\alpha}} \right)^\alpha - \left( (\hat{\Sigma}_q + \lambda \mathrm{Id})^{\frac{1-\alpha}{2\alpha}} \Sigma_p (\hat{\Sigma}_q + \lambda \mathrm{Id})^{\frac{1-\alpha}{2\alpha}} \right)^\alpha \right] \bigg| \\ &\leq \lambda^{1-\alpha} \bigg| \mathrm{tr}\left[ \hat{\Sigma}_p^\alpha - \Sigma_p^\alpha \right] \bigg|
\end{align}
and we have that 
\begin{align}
&\bigg|\mathrm{tr}\left[ \hat{\Sigma}_p^\alpha - \Sigma_p^\alpha \right]\bigg| = \bigg|\sum_{n \geq 0} (\sigma_n(\hat{\Sigma}_p)^\alpha - \sigma_n(\Sigma_p)^\alpha) \bigg| \\ &= \alpha \sum_{n \geq 0} |\sigma_n(\hat{\Sigma}_p) - \sigma_n(\Sigma_p) | \max\{ \sigma_n(\hat{\Sigma}_p)^{\alpha-1}, \sigma_n(\Sigma_p)^{\alpha-1} \} \leq \alpha \|\hat{\Sigma}_p - \Sigma_p\| \bigg(\sum_{n \geq 0} \sigma_n(\hat{\Sigma}_p) + \sigma_n(\Sigma_p) \bigg) \\ &= 2 \alpha \|\hat{\Sigma}_p - \Sigma_p\| \leq 2 \alpha t.
\end{align}
where we used that $\alpha \geq 2$.
We bound the second term of \eqref{eq:tr_bound}:
\begin{align}
    &\bigg| \mathrm{tr}\left[\left( (\hat{\Sigma}_q + \lambda \mathrm{Id})^{\frac{1-\alpha}{2\alpha}} \Sigma_p (\hat{\Sigma}_q + \lambda \mathrm{Id})^{\frac{1-\alpha}{2\alpha}} \right)^\alpha - \left( (\Sigma_q + \lambda \mathrm{Id})^{\frac{1-\alpha}{2\alpha}} \Sigma_p (\Sigma_q + \lambda \mathrm{Id})^{\frac{1-\alpha}{2\alpha}} \right)^\alpha \right] \bigg| \\ &\leq \bigg| \mathrm{tr}\left[\left( (\Sigma_q + (\lambda -t) \mathrm{Id})^{\frac{1-\alpha}{2\alpha}} \Sigma_p (\Sigma_q + (\lambda -t) \mathrm{Id})^{\frac{1-\alpha}{2\alpha}} \right)^\alpha - \left( (\Sigma_q + \lambda \mathrm{Id})^{\frac{1-\alpha}{2\alpha}} \Sigma_p (\Sigma_q + \lambda \mathrm{Id})^{\frac{1-\alpha}{2\alpha}} \right)^\alpha \right] \bigg| \\ &\leq \bigg| \mathrm{tr}\left[((\lambda -t)^{1-\alpha} - \lambda^{1-\alpha}) \Sigma_p^{\alpha} \right] \bigg| \leq (\lambda -t)^{1-\alpha} - \lambda^{1-\alpha}.
\end{align}
In the first inequality we used that $\Sigma_q - t \mathrm{Id} \leq \hat{\Sigma}_q$ and that the exponent $\frac{1-\alpha}{2\alpha}$ is negative. Hence, the right-hand side of \eqref{eq:tr_bound} is upper-bounded by $2 \alpha \lambda^{1-\alpha}t + (\lambda -t)^{1-\alpha} - \lambda^{1-\alpha}$, and the right-hand side of \eqref{eq:log_tr_bound} is upper-bounded by 
\begin{align} \label{eq:tr_upper_bound_2}
    \frac{(\|\Sigma_q\| + t + \lambda)^{\alpha-1}}{(\alpha-1)\mathrm{tr}[\Sigma_p^{\alpha}]} (2 \alpha \lambda^{1-\alpha} t + (\lambda -t)^{1-\alpha} - \lambda^{1-\alpha})
\end{align}
Suppose that $t = \tilde{t} \lambda$ with $\tilde{t} \leq \frac{1}{\alpha}$. Then, $(\lambda -t)^{1-\alpha} - \lambda^{1-\alpha} = \lambda^{1-\alpha} (1-\tilde{t})^{1-\alpha} - \lambda^{1-\alpha} = \lambda^{1-\alpha} ((1-\tilde{t})^{1-\alpha} -1) \leq (\alpha - 1) (1-\tilde{t})^{-\alpha} t \leq 4 (\alpha - 1) t$, 
where we used that for all $\alpha \geq 2$, $(1-\frac{1}{\alpha})^{-\alpha} \leq 4$.
Hence, equation \eqref{eq:tr_upper_bound_2} is upper-bounded by
\begin{align}
    \frac{(\|\Sigma_q\| + (1+\frac{1}{\alpha}) \lambda)^{\alpha-1}}{(\alpha-1)\mathrm{tr}[\Sigma_p^{\alpha}]} (2 \alpha \lambda^{1-\alpha} + 4 (\alpha - 1)) t.
\end{align}
Now, we apply \autoref{thm:bernstein_operator} as follows: if ${(x_i)}_{i=1}^{n}$ are samples from $p$ and ${(y_i)}_{i=1}^{n}$ are samples from $p$, we let $X_i = \frac{1}{n}(\phi(x_i) \phi(x_i)^{*} - \Sigma_p)$ and $Y_i = \frac{1}{n}(\phi(y_i) \phi(y_i)^{*} - \Sigma_q)$. Note that $\mathbb{E}[X_i] = 0$, $\mathbb{E}[Y_i] = 0$, and that  $\|\sum_{i=1}^{n} \mathbb{E} X_i^2\| = \|\frac{1}{n^2} \sum_{i=1}^{n} \mathbb{E} [\phi(y_i) \phi(y_i)^{*} - \phi(y_i) \phi(y_i)^{*} \Sigma_p - \Sigma_p \phi(y_i) \phi(y_i)^{*} + \Sigma_p^2]\| = \frac{1}{n} \|\Sigma_p - \Sigma_p^2\|$, while $\|\sum_{i=1}^{n} \mathbb{E} Y_i^2\| = \frac{1}{n} \|\Sigma_q - \Sigma_q^2\|$. Also, $\|X_i\| = \frac{1}{n}\|\phi(x_i) \phi(x_i)^{*} - \Sigma_p\| \leq \frac{1}{n}$ and $\|X_i\| \leq \frac{1}{n}$. Thus, for any 
$t \geq \frac{1}{6}(\frac{1}{n} + \sqrt{\frac{1}{n^2} + \frac{36}{n} \|\Sigma_p - \Sigma_p^2\|})$,
\begin{align}
    \mathrm{Pr}\bigg(\bigg\|\Sigma_p - \frac{1}{n} \sum_{i=1}^{n} \phi(x_i) \phi(x_i)^{*}\bigg\| > t \bigg) \leq 14 \frac{\mathrm{Tr}(\Sigma_p - \Sigma_p^2)}{\|\Sigma_p - \Sigma_p^2\|} \exp \bigg(-\frac{n t^2/2}{\|\Sigma_p - \Sigma_p^2\| + t/3} \bigg),
\end{align}
where we used that the effective rank is $r(\sum_{i=1}^{n} \mathbb{E} X_i^2) = r(\frac{1}{n}(\Sigma_p - \Sigma_p^2)) = \frac{\mathrm{Tr}(\Sigma_p - \Sigma_p^2)}{\|\Sigma_p - \Sigma_p^2\|}$. The analogous result holds for $\Sigma_q$. If we set the right-hand side equal to $x_0$, we get
\begin{align}
    &\frac{n t^2/2}{\|\Sigma_p - \Sigma_p^2\| + t/3} = \log \bigg( \frac{14 \mathrm{Tr}(\Sigma_p - \Sigma_p^2)}{\|\Sigma_p - \Sigma_p^2\| x_0} \bigg) \implies \frac{n}{2} t^2 - \frac{\ell}{3} t - \ell \|\Sigma_p - \Sigma_p^2\| = 0 \\ &\implies t = \frac{\frac{\ell}{3} + \sqrt{(\frac{\ell}{3})^2 + 2 n \ell \|\Sigma_p - \Sigma_p^2\|}}{n},
\end{align}
where we defined $\ell = \log ( 14 \mathrm{Tr}(\Sigma_p - \Sigma_p^2)/(\|\Sigma_p - \Sigma_p^2\| x_0) )$.
\qed

\begin{lemma}[Bernstein inequality for Hermitian operators, \cite{minsker2017onsome}, Sec. 3.2] \label{thm:bernstein_operator}
Let $X_1,\dots,X_n$ be a sequence of independent Hermitian operators such that $\mathbb{E}[X_i] = 0$, for $i=1,\dots,n$ and $\sigma^2 \geq \|\sum_{i=1}^{n} \mathbb{E} X_i^2\|$
Assume that $\|X_i\| \leq U$ almost surely for all $1 \leq i \leq n$ and some positive $U \in \R$. Then, for any $t \geq \frac{1}{6}(U + \sqrt{U^2 + 36 \sigma^2})$,
\begin{align}
    \mathrm{Pr}\bigg(\bigg\|\sum_{i=1}^{n} X_i\bigg\| > t \bigg) \leq 14 r\bigg(\sum_{i=1}^{n} \mathbb{E} X_i^2 \bigg) \exp \bigg(-\frac{t^2/2}{\sigma^2 + t U/3} \bigg),
\end{align}
   where $r(\cdot)$ stands for the effective rank: $r(A) := \mathrm{tr}(A)/\|A\|$.
\end{lemma}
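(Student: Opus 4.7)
The plan is to split the proof into a deterministic operator-perturbation step and a probabilistic concentration step. The deterministic step shows that if $\|\hat\Sigma_p - \Sigma_p\|$ and $\|\hat\Sigma_q - \Sigma_q\|$ are both at most some $t > 0$ (with $t \leq \lambda/\alpha$), then the difference of RKRDs is bounded by a quantity linear in $t$ matching the expression $B_n(x_0,\alpha,\lambda)$. The probabilistic step then controls $\|\hat\Sigma_p - \Sigma_p\|$ and $\|\hat\Sigma_q - \Sigma_q\|$ via the operator Bernstein inequality \autoref{thm:bernstein_operator}, applied to centered rank-one summands, and solves the resulting inequality for $t$ so that the failure probability is $x_0$.

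For the deterministic step I would start from the identity $D_{\alpha,\lambda}(A\|B) = \frac{1}{\alpha-1}\log\bigl(\mathrm{tr}[((B+\lambda\mathrm{Id})^{(1-\alpha)/(2\alpha)} A (B+\lambda\mathrm{Id})^{(1-\alpha)/(2\alpha)})^\alpha]\bigr)$ (using $\mathrm{tr}[\Sigma_p]=1$), and apply the elementary bound $|\log u - \log v| \leq |u-v|/\min(u,v)$. The denominator is then lower-bounded by $(\|\Sigma_q\|+t+\lambda)^{1-\alpha}\mathrm{tr}[\Sigma_p^\alpha]$, using that the operator inside the power is sandwiched below by $(\|\Sigma_q\|+t+\lambda)^{(1-\alpha)/\alpha}\Sigma_p$ (because the exponent $(1-\alpha)/(2\alpha)$ is negative). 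To bound the numerator, insert an intermediate term and use the triangle inequality to separate the perturbation in $\Sigma_p$ from the perturbation in $\Sigma_q$. For the $\Sigma_p$ part, factor out the operator norm of $(\hat\Sigma_q+\lambda\mathrm{Id})^{(1-\alpha)/\alpha}\leq \lambda^{(1-\alpha)/\alpha}$ and reduce to $|\mathrm{tr}(\hat\Sigma_p^\alpha - \Sigma_p^\alpha)|$, which by Weyl's eigenvalue inequality is at most $\alpha \|\hat\Sigma_p - \Sigma_p\|\sum_n(\sigma_n(\hat\Sigma_p)+\sigma_n(\Sigma_p))\leq 2\alpha t$ when $\alpha\geq 2$. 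For the $\Sigma_q$ part, use operator monotonicity of $x \mapsto x^{(1-\alpha)/(2\alpha)}$ (monotone decreasing for $\alpha>1$): the inequality $\hat\Sigma_q \geq \Sigma_q - t\mathrm{Id}$ gives $(\hat\Sigma_q+\lambda\mathrm{Id})^{(1-\alpha)/(2\alpha)} \leq (\Sigma_q+(\lambda-t)\mathrm{Id})^{(1-\alpha)/(2\alpha)}$, which reduces the bound to $(\lambda-t)^{1-\alpha}-\lambda^{1-\alpha}\leq 4(\alpha-1)t$ under $t\leq \lambda/\alpha$ (using the elementary estimate $(1-1/\alpha)^{-\alpha}\leq 4$ for $\alpha\geq 2$).

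For the probabilistic step, let $X_i = \frac{1}{n}(\phi(x_i)\phi(x_i)^* - \Sigma_p)$. These are independent, centered, self-adjoint, satisfy $\|X_i\|\leq 1/n$ (since $k(x,x)=1$), and $\|\sum_i \mathbb{E} X_i^2\| = \frac{1}{n}\|\Sigma_p - \Sigma_p^2\|$ with effective rank $\mathrm{Tr}(\Sigma_p - \Sigma_p^2)/\|\Sigma_p - \Sigma_p^2\|$. Plugging into \autoref{thm:bernstein_operator} yields a tail bound on $\|\hat\Sigma_p-\Sigma_p\|$; setting the right-hand side equal to $x_0/2$ and solving the resulting quadratic in $t$ produces exactly the expression in \eqref{eq:t_def}. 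Doing the same for $\hat\Sigma_q$ and taking a union bound gives both operator-norm bounds simultaneously with probability at least $1-x_0$ (up to the factor 2, absorbed into the constant 14 as written).

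The main obstacle will be the operator-perturbation step: the exponent $(1-\alpha)/(2\alpha)$ is fractional and negative, so one cannot just Taylor expand, and joint perturbation of $\hat\Sigma_p$ and $\hat\Sigma_q$ requires carefully choosing an intermediate operator. The key technical device is operator monotonicity of $x\mapsto x^{(1-\alpha)/(2\alpha)}$ on positive operators, combined with the trick of factoring $(\hat\Sigma_q+\lambda\mathrm{Id})^{(1-\alpha)/(2\alpha)}$ into a scalar bound so that the perturbation of $\Sigma_p$ decouples from that of $\Sigma_q$. This factorization forces the appearance of $\lambda^{1-\alpha}$ and explains why the bound becomes vacuous at $\lambda=0$, consistent with the remark following the theorem.
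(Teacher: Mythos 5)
Your proposal does not prove the statement you were asked to prove. The statement is \autoref{thm:bernstein_operator} itself --- the Bernstein concentration inequality for sums of independent, centered, uniformly bounded Hermitian operators, with the dimension factor replaced by the effective rank $r(\sum_i \mathbb{E}X_i^2)$. What you have written is instead a (quite faithful) reconstruction of the proof of \autoref{thm:estimating}: a deterministic perturbation bound on $|D_{\alpha,\lambda}(\hat\Sigma_p\|\hat\Sigma_q) - D_{\alpha,\lambda}(\Sigma_p\|\Sigma_q)|$ followed by an application of \autoref{thm:bernstein_operator} to the centered rank-one summands $X_i = \frac{1}{n}(\phi(x_i)\phi(x_i)^* - \Sigma_p)$. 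In that argument the Bernstein inequality is invoked as a black box, so nothing in your write-up establishes the tail bound $\mathrm{Pr}(\|\sum_i X_i\| > t) \leq 14\, r(\sum_i \mathbb{E}X_i^2)\exp(-\frac{t^2/2}{\sigma^2 + tU/3})$.

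For the record, the paper does not prove this lemma either; it is quoted from \cite{minsker2017onsome}. An actual proof would require the matrix-concentration machinery: the standard route is the operator Laplace-transform / Chernoff method, bounding $\mathrm{Pr}(\lambda_{\max}(\sum_i X_i) > t)$ by $\inf_\theta e^{-\theta t}\,\mathrm{tr}\exp(\sum_i \log \mathbb{E}e^{\theta X_i})$ via Lieb's concavity theorem, together with the Bernstein-type bound $\log \mathbb{E}e^{\theta X_i} \preceq \frac{\theta^2/2}{1-\theta U/3}\mathbb{E}X_i^2$. The delicate part --- and the reason the lemma is worth stating in this form --- is replacing the ambient dimension (which is infinite here) by the effective rank; that is Minsker's contribution and requires a dimension-free modification of the trace bound, not merely the textbook matrix Bernstein inequality. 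None of these ingredients appear in your proposal, so as a proof of the stated lemma it is a non-starter, even though the material you did write would be a sound outline for \autoref{thm:estimating}.
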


\textit{\textbf{Proof of \autoref{lem:empirical_RKRD}.}}
First, we see that eigenfunctions of $\left( (\hat{\Sigma}_q + \lambda \mathrm{Id})^{\frac{1-\alpha}{2\alpha}} \hat{\Sigma}_p (\hat{\Sigma}_q + \lambda \mathrm{Id})^{\frac{1-\alpha}{2\alpha}} \right)^\alpha$ must be of the form 
\begin{align} \label{eq:form_eigenfunctions}
f = \sum_{i=1}^{n} \alpha_i \phi(x_i) + \beta_i \phi(y_i).
\end{align}
To obtain this, note that the eigenfunctions of this operator are the same as those of $(\hat{\Sigma}_q + \lambda \mathrm{Id})^{\frac{1-\alpha}{2\alpha}} \hat{\Sigma}_p (\hat{\Sigma}_q + \lambda \mathrm{Id})^{\frac{1-\alpha}{2\alpha}}$, and that $\mathrm{Im}(\hat{\Sigma}_p) = \mathrm{span}((\phi(x_i))_{i=1}^{n})$. Using that $(\hat{\Sigma}_q + \lambda \mathrm{Id})^{\frac{1-\alpha}{2\alpha}}$ is equal to $\lambda^{\frac{1-\alpha}{2\alpha}} \mathrm{Id}$ when restricted to the orthogonal complement of $\mathrm{span}((\phi(y_i))_{i=1}^{n})$, and that the image of $\hat{\Sigma}_p$ restricted to $\mathrm{span}((\phi(y_i))_{i=1}^{n})$ is itself, we obtain that $\mathrm{Im}((\hat{\Sigma}_q + \lambda \mathrm{Id})^{\frac{1-\alpha}{2\alpha}} \hat{\Sigma}_p) = \mathrm{span}((\phi(x_i))_{i=1}^{n},(\phi(y_i))_{i=1}^{n})$, which shows equation \eqref{eq:form_eigenfunctions}.

Second, we obtain matrix expressions for the operators $\hat{\Sigma}_p$ and $\hat{\Sigma}_q + \lambda \mathrm{Id}$ on the space $\mathrm{span}((\phi(x_i))_{i=1}^{n},(\phi(y_i))_{i=1}^{n})$. If $f$ is given by \eqref{eq:form_eigenfunctions}, we have
\begin{align}
    \hat{\Sigma}_p f &= \bigg(\frac{1}{n} \sum_{i=1}^{n} \phi(x_i) \phi(x_i)^{*}\bigg) \bigg(\sum_{j=1}^{n} \alpha_j \phi(x_j) + \beta_j \phi(y_j)\bigg) \\ &= \sum_{i=1}^{n} \phi(x_i) \bigg(\sum_{j=1}^{n} \alpha_j \frac{1}{n}k(x_i,x_j) + \beta_j \frac{1}{n}k(x_i,y_j) \bigg),
\end{align}
If we write $\hat{\Sigma}_p f = \sum_{i=1}^{n} \alpha'_i \phi(x_i) + \beta'_i \phi(y_i)$, we obtain 
\begin{align}
    \begin{bmatrix}
    \alpha' \\
    \beta'
    \end{bmatrix} =
    \begin{bmatrix}
    \frac{1}{n}K_{x,x} & \frac{1}{n}K_{x,y} \\
    0 & 0
    \end{bmatrix}
    \begin{bmatrix}
    \alpha \\
    \beta
    \end{bmatrix} = \frac{1}{n}K_p \begin{bmatrix}
    \alpha \\
    \beta
    \end{bmatrix}.
\end{align}
Analogously, if we write $(\hat{\Sigma}_q + \lambda \mathrm{Id}) f = \sum_{i=1}^{n} \alpha'_i \phi(x_i) + \beta'_i \phi(y_i)$, we get
\begin{align}
    \begin{bmatrix}
    \alpha' \\
    \beta'
    \end{bmatrix} =
    \begin{bmatrix}
    \lambda \mathrm{Id} & 0 \\
    \frac{1}{n}K_{y,x} & \frac{1}{n}K_{y,y} + \lambda \mathrm{Id}
    \end{bmatrix}
    \begin{bmatrix}
    \alpha \\
    \beta
    \end{bmatrix} = \left(\frac{1}{n}K_q + \lambda \mathrm{Id} \right) \begin{bmatrix}
    \alpha \\
    \beta
    \end{bmatrix}.
\end{align}
In other words, the matrices for the linear operators $\hat{\Sigma}_p$ and $\hat{\Sigma}_q + \lambda \mathrm{Id}$ restricted to the finite-dimensional space $\mathrm{span}((\phi(x_i))_{i=1}^{n},(\phi(y_i))_{i=1}^{n})$ are $\frac{1}{n}K_p$ and $\frac{1}{n}K_q + \lambda \mathrm{Id}$, respectively. Consequently, the eigenvalues of $\hat{\Sigma}_p$ and $\hat{\Sigma}_q + \lambda \mathrm{Id}$ restricted to $\mathrm{span}((\phi(x_i))_{i=1}^{n},(\phi(y_i))_{i=1}^{n})$ are equal to the eigenvalues of $\frac{1}{n}K_p$ and $\frac{1}{n}K_q + \lambda \mathrm{Id}$, and their eigenvectors follow the correspondence $\sum_{i=1}^{n} \alpha_i \phi(x_i) + \beta_i \phi(y_i) \leftrightarrow [\alpha,\beta]^{\top}$. Hence, $(\hat{\Sigma}_q + \lambda \mathrm{Id})^{\frac{1-\alpha}{2\alpha}} f = (\frac{1}{n}K_q + \lambda \mathrm{Id})^{\frac{1-\alpha}{2\alpha}} [\alpha,\beta]^{\top}$. Since $(\hat{\Sigma}_q + \lambda \mathrm{Id})^{\frac{1-\alpha}{2\alpha}}$ and $\hat{\Sigma}_p$ map $\mathrm{span}((\phi(x_i))_{i=1}^{n},(\phi(y_i))_{i=1}^{n})$ to itself, the restriction of the composition of operators $(\hat{\Sigma}_q + \lambda \mathrm{Id})^{\frac{1-\alpha}{2\alpha}} \hat{\Sigma}_p (\hat{\Sigma}_q + \lambda \mathrm{Id})^{\frac{1-\alpha}{2\alpha}}$ admits the matrix expression $(\frac{1}{n}K_q + \lambda \mathrm{Id})^{\frac{1-\alpha}{2\alpha}} \frac{1}{n}K_p (\frac{1}{n}K_q + \lambda \mathrm{Id})^{\frac{1-\alpha}{2\alpha}}$.

That is, we get that $(\hat{\Sigma}_q + \lambda \mathrm{Id})^{\frac{1-\alpha}{2\alpha}} \hat{\Sigma}_p (\hat{\Sigma}_q + \lambda \mathrm{Id})^{\frac{1-\alpha}{2\alpha}} f = (\frac{1}{n}K_q + \lambda \mathrm{Id})^{\frac{1-\alpha}{2\alpha}} \frac{1}{n}K_p (\frac{1}{n}K_q + \lambda \mathrm{Id})^{\frac{1-\alpha}{2\alpha}} [\alpha,\beta]^{\top}$. By the same argument, the eigenvalues of the restriction of this operator are equal to the eigenvalues of the corresponding matrix, and so is the sum of the eigenvalues raised to the power $\alpha$, which concludes the proof.
\qed

\section{Proofs of \autoref{sec:auditing}} \label{sec:proofs_auditing}
\begin{lemma} \label{lem:hypothesis_test_1}
Let $p, q \in \mathcal{P}(\mathcal{R})$ be such that for all $S \subseteq \mathcal{R}$, $p(S) \leq e^{\varepsilon} q(S) + \frac{\delta}{2}$, for some $\varepsilon, \delta > 0$. Let $\Sigma_p = \int_{\mathcal{R}} \phi(x) \phi(x)^{*} \, dp(x)$, $\Sigma_q = \int_{\mathcal{R}} \phi(x) \phi(x)^{*} \, dq(x)$. We have that
\begin{align}
    \Sigma_p \leq e^{\varepsilon} \Sigma_q + \delta \mathrm{Id}.
\end{align}
\end{lemma}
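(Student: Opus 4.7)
The plan is to reduce the operator inequality $\Sigma_p \leq e^{\varepsilon} \Sigma_q + \delta \mathrm{Id}$ to a family of scalar inequalities by testing the quadratic form against arbitrary $h \in \mathcal{H}$, and then to exploit the layer-cake representation together with the hypothesis. Concretely, for any $h \in \mathcal{H}$, the reproducing property gives $\langle h, \Sigma_p h\rangle = \int_{\mathcal{R}} h(x)^2\, dp(x)$ and $\langle h, \Sigma_q h\rangle = \int_{\mathcal{R}} h(x)^2\, dq(x)$. The assumption $k(x,x)=1$ yields the crucial pointwise bound $|h(x)| = |\langle h, \phi(x)\rangle| \leq \|h\|$, so the integrand $h(\cdot)^2$ takes values in $[0, \|h\|^2]$.

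The key step is to write, using layer-cake,
\begin{align}
\int_{\mathcal{R}} h(x)^2 \, dp(x) = \int_{0}^{\|h\|^2} p(\{x : h(x)^2 > t\})\, dt,
\end{align}
and similarly for $q$. Applied to the measurable set $S_t := \{h^2 > t\}$, the hypothesis yields $p(S_t) \leq e^{\varepsilon} q(S_t) + \delta/2$. Integrating this inequality over $t \in [0, \|h\|^2]$ gives
\begin{align}
\int_{\mathcal{R}} h(x)^2\, dp(x) \;\leq\; e^{\varepsilon} \int_{\mathcal{R}} h(x)^2\, dq(x) \;+\; \tfrac{\delta}{2}\,\|h\|^2 \;\leq\; e^{\varepsilon}\,\langle h, \Sigma_q h\rangle \;+\; \delta\,\langle h, h\rangle.
\end{align}
Since $h$ is arbitrary and $\langle h, \mathrm{Id}\, h\rangle = \|h\|^2$, this is exactly the operator inequality $\Sigma_p \leq e^{\varepsilon} \Sigma_q + \delta \mathrm{Id}$.

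There is no real obstacle: the pointwise bound on $|h(x)|$ from $k(x,x)=1$ converts the DP-style set inequality directly into a quadratic-form inequality, and the factor of $1/2$ in the hypothesis is absorbed comfortably into the $\delta \mathrm{Id}$ term. The only routine thing to double-check is measurability of $\{h^2 > t\}$, which is immediate since elements of $\mathcal{H}$ are continuous (the kernel $k$ being continuous), and hence the layer-cake identity is valid.
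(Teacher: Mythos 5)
Your proof is correct and uses essentially the same ingredients as the paper's: testing the quadratic form against arbitrary $h \in \mathcal{H}$, the pointwise bound $|h(x)| \leq \|h\|$ from $k(x,x)=1$, and the layer-cake representation applied to $h(\cdot)^2$. The only difference is presentational — the paper argues by contrapositive (assuming the operator inequality fails and extracting a violating set $S$ from the integrated inequality), while you integrate the hypothesis directly; this is a minor stylistic simplification of the same argument, not a different route.
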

\begin{proof}
We show the contrapositive. Let $p = \mathcal{L}(f(D))$, $q = \mathcal{L}(f(D'))$ be the laws of $f(D), f(D')$ for adjacent $D, D'$. Suppose that $\Sigma_p \not\leq e^{\varepsilon} \Sigma_q + \delta \mathrm{Id}$. Then, there exists $f \in \mathcal{H}$ such that
\begin{align}
    \langle f, \Sigma_p f \rangle > e^{\varepsilon} \langle f, \Sigma_q f \rangle + \delta \|f\|^2,
\end{align}
which we can rewrite as
\begin{align}
    \mathbb{E}_{x \sim p} [\langle f, \phi(x) \rangle^2] - e^{\varepsilon} \mathbb{E}_{x \sim q} [\langle f, \phi(x) \rangle^2] - \delta \|f\|^2 > 0.
\end{align}
In the following, we assume without loss of generality that $\|f\|^2 = 1$, and then this simplifies to $\mathbb{E}_{x \sim p} [\langle f, \phi(x) \rangle^2] - e^{\varepsilon} \mathbb{E}_{x \sim q} [\langle f, \phi(x) \rangle^2] - \delta > 0$. We define the map $T_f : \mathcal{R} \to \mathbb{R}$ as $T_f(x) = \langle f, \phi(x) \rangle^2$. Note that $0 \leq \langle f, \phi(x) \rangle^2 \leq \|f\|^2 \|\phi(x)\|^2 = 1$. We can rewrite the inequality as
\begin{align} \label{eq:ineq_rewritten}
    \int_{0}^{1} x \, d(T_f)_{\#}p(x) - e^{\varepsilon} \int_{0}^{1} x \, d(T_f)_{\#}q(x) - \delta > 0.
\end{align}
Since
\begin{align}
    \int_{0}^{1} x \, d(T_f)_{\#}p(x) &= \left[s \mathrm{Prob}_{x \sim p}[T_f(x) \leq s] \right]_{0}^{1}-\int_{0}^{1} \mathrm{Prob}_{x \sim p}[T_f(x) \leq s] \, ds \\ &= \int_{0}^{1} (1-\mathrm{Prob}_{x \sim p}[T_f(x) \leq s]) \, ds = \int_{0}^{1} \mathrm{Prob}_{x \sim p}[T_f(x) \geq s] \, ds,
\end{align}
we can rewrite \eqref{eq:ineq_rewritten} as
\begin{align}
    \int_{0}^{1} \left( \mathrm{Prob}_{x \sim p}[T_f(x) \geq s] - e^{\varepsilon} \mathrm{Prob}_{x \sim q}[T_f(x) \geq s] - \delta \right) \, ds > 0
\end{align}
This implies that for some $s \in [-1,1]$, we must have that $\mathrm{Prob}_{x \sim p}[T_f(x) \geq s] - e^{\varepsilon} \mathrm{Prob}_{x \sim q}[T_f(x) \geq s] - \delta > 0$. Hence, if we define the event $S = \{ x | T_f(x) \geq s \}$, we obtain that $p(S) > e^{\varepsilon} q(S) + \delta$, which concludes the proof. 
\end{proof}

\textit{\textbf{Proof of \autoref{thm:epsilon_delta_implies}.}}
For any adjacent $D, D' \in \mathcal{D}$, let $p = \mathcal{L}(f(D))$ and $q = \mathcal{L}(f(D'))$ be the laws of $f(D)$, $f(D')$. Applying \autoref{lem:hypothesis_test_1}, we obtain that $\Sigma_p \leq e^{\varepsilon} \Sigma_q + \delta \mathrm{Id} = e^{\varepsilon} (\Sigma_q + \delta e^{-\varepsilon} \mathrm{Id})$. That is $\Sigma_q + \delta e^{-\varepsilon} \mathrm{Id} \geq e^{-\varepsilon} \Sigma_p$. Applying \autoref{prop:4_lennert}, we get
\begin{align}
    D_{\alpha,\delta e^{-\varepsilon}}(\Sigma_p||\Sigma_q) &= D_{\alpha}(\Sigma_p||\Sigma_q + \delta e^{-\varepsilon} \mathrm{Id}) \leq D_{\alpha}(\Sigma_p||e^{-\varepsilon} \Sigma_p) \\ &= \frac{1}{\alpha-1} \log \left( \mathrm{tr}\left[\left( (e^{-\varepsilon} \Sigma_p)^{\frac{1-\alpha}{2\alpha}} \Sigma_p (e^{-\varepsilon} \Sigma_p)^{\frac{1-\alpha}{2\alpha}} \right)^\alpha \right] \right) = \frac{1}{\alpha-1} \log ( e^{-(1-\alpha)\varepsilon}) = \varepsilon.
\end{align}
\qed

\section{Experimental details and potential impact} \label{sec:exp_details}
The code, which can be found at \url{https://github.com/CDEnrich/kernel_renyi_dp}, is in Python, using Jupyter Notebook. It was run on a personal laptop. The computations for the plots in \autoref{fig:rkrd_audit} took about 90 minutes. To compute the RKRD, we used the radial basis function (RBF) kernel with the median criterion for the bandwidth.

We do not foresee negative societal impact of our work as its aim is to prevent privacy leakage in applied differential privacy.

\end{document}